\pgfplotsset{compat=1.15}
\newcites{Supp}{Supplementary References}
\newcommand{\spara}[1]{\smallskip\noindent\textbf{#1}}
\newenvironment{code}
{\bgroup\leftskip 20pt\rightskip 0pt \small\noindent{\bfseries
Code:} \ignorespaces}%
{\par\egroup\vskip 0.25ex}
\newenvironment{squishlist}
{\begin{list}{$\bullet$}
 {\setlength{\itemsep}{0pt}
     \setlength{\parsep}{3pt}
     \setlength{\topsep}{3pt}
     \setlength{\partopsep}{0pt}
     \setlength{\leftmargin}{1.5em}
     \setlength{\labelwidth}{1em}
     \setlength{\labelsep}{0.5em} } }
{\end{list}}
\renewcommand*\backref[1]{\ifx#1\relax \else (Cited on #1) \fi}
\definecolor{mypurple}{RGB}{254, 68, 218}
\newcommand{\blue}[1]{\textcolor{blue}{#1}}
\newcommand{\cyan}[1]{\textcolor{cyan}{#1}}
\newcommand{\mypurple}[1]{\textcolor{mypurple}{#1}}
\theoremstyle{plain}
\newtheorem{theorem}{Theorem}[section]
\newtheorem{lemma}[theorem]{Lemma}
\newtheorem{corollary}[theorem]{Corollary}
\newtheorem{problem}{Problem}
\theoremstyle{definition}
\newtheorem{definition}[theorem]{Definition}
\theoremstyle{remark}
\newtheorem{remark}{Remark}
\newtheorem{example}{Example}
\crefname{definition}{Def.}{Defs.}
\crefname{section}{Sec.}{Secs.}
\crefname{figure}{Fig.}{Figs.}
\crefname{problem}{Prob.}{Probs.}
\crefname{appendix}{App.}{Apps.}
\crefname{equation}{Eq.}{Eqs.}
\DeclareMathOperator*{\argmin}{arg\,min}
\newcommand{\at}[2][]{#1|_{#2}}
\newcommand{\nat}{\ensuremath{\mathbb{N}}\xspace}
\newcommand{\reall}{\ensuremath{\mathbb{R}}\xspace}
\newcommand{\pd}{\ensuremath{\mathcal{S}_{++}}\xspace}
\newcommand{\zeros}{\ensuremath{\boldsymbol{0}}\xspace}
\newcommand{\ones}{\ensuremath{\boldsymbol{1}}\xspace}
\newcommand{\identity}{\ensuremath{\mathbf{I}}\xspace}
\newcommand{\A}{\ensuremath{\mathbf{A}}\xspace}
\newcommand{\D}{\ensuremath{\mathbf{D}}\xspace}
\newcommand{\V}{\ensuremath{\mathbf{V}}\xspace}
\newcommand{\Vhat}{\ensuremath{\widehat{\mathbf{V}}}\xspace}
\newcommand{\parents}{\ensuremath{\mathcal{P}}\xspace}
\newcommand{\ancestors}{\ensuremath{\mathcal{A}}\xspace}
\newcommand{\myendogenous}{\ensuremath{\mathcal{X}}\xspace}
\newcommand{\myexogenous}{\ensuremath{\mathcal{Z}}\xspace}
\newcommand{\myfunctional}{\ensuremath{\mathcal{F}}\xspace}
\newcommand{\mymixing}{\ensuremath{\mathcal{M}}\xspace}
\newcommand{\measurelow}{\ensuremath{\chi^{\ell}}\xspace}
\newcommand{\measurehigh}{\ensuremath{\chi^{h}}\xspace}
\newcommand{\covlow}{\ensuremath{\boldsymbol{\Sigma}^{\ell}}\xspace}
\newcommand{\covhigh}{\ensuremath{\boldsymbol{\Sigma}^{h}}\xspace}
\newcommand{\stiefel}[2]{\ensuremath{\mathrm{St}({#1},{#2})}\xspace}
\newcommand{\sphere}[2]{\ensuremath{\mathrm{Sp}^{\Delta}({#1},{#2})}\xspace}
\newcommand{\datalow}{\ensuremath{\myendogenous^\ell}\xspace}
\newcommand{\datahigh}{\ensuremath{\myendogenous^h}\xspace}
\newcommand{\Ccat}{\ensuremath{\mathsf{C}}\xspace}
\newcommand{\Prob}{\ensuremath{\mathsf{Prob}}\xspace}
\newcommand{\Vect}{\ensuremath{\mathsf{Vect}_{\reall}}\xspace}
\newcommand{\Poset}{\ensuremath{\mathsf{Poset}}\xspace}
\newcommand{\Index}{\ensuremath{\mathsf{Ind}}\xspace}
\newcommand{\Y}{\ensuremath{\mathbf{Y}}\xspace}
\newcommand{\YO}{\ensuremath{\mathbf{Y}_1}\xspace}
\newcommand{\YT}{\ensuremath{\mathbf{Y}_2}\xspace}
\newcommand{\scaledU}{\ensuremath{\mathbf{U}}\xspace}
\newcommand{\scaledUO}{\ensuremath{\mathbf{U}_1}\xspace}
\newcommand{\scaledUT}{\ensuremath{\mathbf{U}_2}\xspace}
\newcommand{\scaledW}{\ensuremath{\mathbf{W}}\xspace}
\newcommand{\prox}{\ensuremath{\mathrm{prox}}\xspace}
\newcommand{\sign}{\ensuremath{\mathrm{sign}}\xspace}
\newcommand{\tangentspace}[2]{\ensuremath{T_{#1}{#2}}\xspace}
\newcommand{\normalspace}[2]{\ensuremath{N_{#1}{#2}}\xspace}
\newcommand{\projectiontangentspace}[2]{\ensuremath{\mathrm{Proj}_{#1}{#2}}\xspace}
\newcommand{\sym}[1]{\ensuremath{\mathrm{Sym}({#1})}\xspace}
\newcommand{\basisN}[1]{\ensuremath{\mathcal{B}_{#1}}\xspace}
\newcommand{\basisNelement}[1]{\ensuremath{\mathbf{B}^{#1}}\xspace}
\newcommand{\Retr}[3]{\ensuremath{\mathrm{R}^{#1}_{#2}\left( #3 \right)}\xspace}
\newcommand{\Egrad}[2]{\ensuremath{\nabla_{#1} #2}\xspace}
\newcommand{\Rgrad}[2]{\ensuremath{\widetilde{\nabla}_{#1} #2}\xspace}
\newcommand{\Esubgrad}[2]{\ensuremath{\partial_{#1} #2}\xspace}
\newcommand{\Rsubgrad}[2]{\ensuremath{\widetilde{\partial}_{#1} #2}\xspace}
\newcommand{\frob}[1]{\ensuremath{\norm{#1}_{\mathrm{F}}}\xspace}
\newcommand{\G}{\ensuremath{\mathbf{G}}\xspace}
\newcommand{\linearop}[2]{\ensuremath{{#1}\left({#2}\right)}\xspace}
\newcommand{\Eprod}[3]{\ensuremath{\langle #1, \, #2 \rangle_{#3}}\xspace}
\newcommand{\B}{\ensuremath{\mathbf{B}}\xspace}
\newcommand{\Supp}{\ensuremath{\mathbf{S}}\xspace}
\newcommand{\X}{\ensuremath{\mathbf{X}}\xspace}
\newcommand{\rmatdim}{\ensuremath{\reall^{\ell \times h}}\xspace}
\newcommand{\rmatdimT}{\ensuremath{\reall^{h \times \ell}}\xspace}
\newcommand{\lmatdim}{\ensuremath{\{0,1\}^{\ell \times h}}\xspace}
\newcommand{\umatdim}{\ensuremath{[0,1]^{\ell \times h}}\xspace}
\newcommand{\scm}{\ensuremath{\mathsf{M}}\xspace}
\newcommand{\dom}[1]{\ensuremath{\mathbb{D}[#1]}\xspace}
\newcommand{\myvec}[1]{\ensuremath{\mathrm{vec}\left(#1\right)}\xspace}
\newcommand{\myendogenousvals}{\ensuremath{\mathcal{V}}\xspace}
\newcommand{\myexogenousvals}{\ensuremath{\mathcal{U}}\xspace}
\newcommand{\abst}{\ensuremath{\boldsymbol{\alpha}}\xspace}
\newcommand{\Rset}{\ensuremath{\mathcal{R}}\xspace}
\newcommand{\Qset}{\ensuremath{\mathcal{Q}}\xspace}
\newcommand{\amap}{\ensuremath{m}\xspace}
\newcommand{\alphamap}[1]{\ensuremath{\alpha_{#1}}\xspace}
\newcommand{\myker}{\ensuremath{\mathrm{ker}}\xspace}
\newcommand{\KL}[1]{\ensuremath{D^{\mathrm{KL}}_{#1}}\xspace}
\newcommand{\doint}{\operatorname{do}}
\newcommand{\dotarrow}{\ensuremath{\xrightarrow{\bullet}}\xspace}
\icmltitlerunning{Causal Abstraction Learning based on the Semantic Embedding Principle}
\begin{document}

\twocolumn[
\icmltitle{Causal Abstraction Learning based on the Semantic Embedding Principle}

% It is OKAY to include author information, even for blind
% submissions: the style file will automatically remove it for you
% unless you've provided the [accepted] option to the icml2025
% package.

% List of affiliations: The first argument should be a (short)
% identifier you will use later to specify author affiliations
% Academic affiliations should list Department, University, City, Region, Country
% Industry affiliations should list Company, City, Region, Country

% You can specify symbols, otherwise they are numbered in order.
% Ideally, you should not use this facility. Affiliations will be numbered
% in order of appearance and this is the preferred way.
\icmlsetsymbol{equal}{*}

\begin{icmlauthorlist}
\icmlauthor{Gabriele D'Acunto}{sapienza,cnit}
\icmlauthor{Fabio Massimo Zennaro}{bergen}
\icmlauthor{Yorgos Felekis}{warwick}
\icmlauthor{Paolo Di Lorenzo}{sapienza,cnit}
\end{icmlauthorlist}

\icmlaffiliation{sapienza}{Department of Information Engineering, Electronics and Telecommunications, Sapienza University, Rome, Italy}
\icmlaffiliation{cnit}{National Inter-University Consortium for Telecommunications (CNIT), Parma, Italy}
\icmlaffiliation{bergen}{Department of Informatics, University of Bergen, Bergen, Norway}
\icmlaffiliation{warwick}{Department of Computer Science, University of Warwick, Coventry, UK}

\icmlcorrespondingauthor{Gabriele D'Acunto}{gabriele.dacunto@uniroma1.it}
% \icmlcorrespondingauthor{Firstname2 Lastname2}{first2.last2@www.uk}

% You may provide any keywords that you
% find helpful for describing your paper; these are used to populate
% the "keywords" metadata in the PDF but will not be shown in the document
\icmlkeywords{structural causal models, causal abstraction, semantic embedding principle, Stiefel manifold, Riemannian optimization}

\vskip 0.3in
]

% this must go after the closing bracket ] following \twocolumn[ ...

% This command actually creates the footnote in the first column
% listing the affiliations and the copyright notice.
% The command takes one argument, which is text to display at the start of the footnote.
% The \icmlEqualContribution command is standard text for equal contribution.
% Remove it (just {}) if you do not need this facility.

\printAffiliationsAndNotice{}  % leave blank if no need to mention equal contribution
% \printAffiliationsAndNotice{\icmlEqualContribution} % otherwise use the standard text.

\begin{abstract}
Structural causal models (SCMs) allow us to investigate complex systems at multiple levels of resolution.
The causal abstraction (CA) framework formalizes the mapping between high- and low-level SCMs. 
We address CA learning in a challenging and realistic setting, where SCMs are inaccessible, interventional data is unavailable, and sample data is misaligned. 
A key principle of our framework is \emph{semantic embedding}, formalized as the high-level distribution lying on a subspace of the low-level one. 
This principle naturally links linear CA to the geometry of the \emph{Stiefel manifold}.
We present a category-theoretic approach to SCMs that enables the learning of a CA by finding a morphism between the low- and high-level probability measures, adhering to the semantic embedding principle. 
Consequently, we formulate a general CA learning problem.
As an application, we solve the latter problem for linear CA; considering Gaussian measures and the Kullback-Leibler divergence as an objective.
Given the nonconvexity of the learning task, we develop three algorithms building upon existing paradigms for Riemannian optimization.
We demonstrate that the proposed methods succeed on both synthetic and real-world brain data with different degrees of prior information about the structure of CA.  
\end{abstract}

\vspace{-18.5pt}
\begin{code}%
    \url{https://github.com/SPAICOM/calsep}%
\end{code}

% !TEX root =  ../main.tex
\section{Introduction}\label{sec:intro}
\begin{figure}[t]
    \centering
    \scalebox{.9}{
    \begin{tcolorbox}[
    colback=white,
    colframe=black,
    boxrule=0.8pt,
    before=\par\smallskip\centering,
    after=\par\smallskip,
]
\textbf{The Semantic Embedding Principle (SEP)}
\\[1ex]
\textit{
Causal Abstractions must preserve high-level causal knowledge when embedded in the low-level.
}
\end{tcolorbox}
}
    \begin{tikzpicture}[scale=.9]
% Manifold ll
\draw[smooth cycle, tension=0.4, fill=mypurple, opacity=0.1] 
    plot coordinates{(2,1.0) (-2.5,-0.3) (2,-1.5) (6,0.3)}; 

\draw[smooth cycle, fill=mypurple, opacity=0.3]
    plot coordinates {(4.2, -0.3) (3.7, 0.6) (2.5, 0.9) (1.7, 0.6) (1.0, 0.6) (1.0, -0.3) (1.6, -0.6) (2.2, -0.8)};

\draw[smooth cycle, fill=mypurple, opacity=.5] 
    plot coordinates {(3.5, 0.0) (3.2, 0.4) (2.0, 0.5) (1.6, 0.3) (1.6, 0.0) (2.0, -0.4) (3.0, -0.5)};

\node[black] at (5.2,0.2) {\textbf{\large $\reall^\ell$}};
% Bold text brought to the front
\node[black] at (2.5,0.1) {\textbf{\large $\varphi^{\V^\top}_{\#}(\chi^h)$}};
\node[black] at (3.9,-0.1) {\textbf{\large $\chi^\ell$}};

% Manifold M'
\draw[smooth cycle, tension=0.4, fill=cyan, opacity=0.1]
    plot coordinates{(2,-2.5) (-1.5,-4) (3.5,-5) (5,-3)}; 

\draw[smooth cycle, fill=cyan, opacity=0.5]
    plot coordinates {(-0.7,-4.0) (-0.2,-3.3) (0.5,-3.2) (1.1,-3.9) (0.8,-4.3)}; 

\draw[smooth cycle, fill=cyan, opacity=0.5]
    plot coordinates {(4,-4.5) (4.5,-3.5) (4,-3.2) (2.5,-3.2) (2,-3.7) (2.2,-4.2)};

\node[black] at (4.7,-3.1) {\textbf{\large $\reall^h$}};
\node[black] at (0.2,-3.7) {\textbf{\large $\chi^h$}};
\node[black] at (3.3,-3.8) {\textbf{\large $\varphi^{\V \circ \V^\top}_{\#}(\chi^h)$}};

% Replace Stiefel Manifold with the Paraboloid
\begin{axis}[
    at={(-2.3cm,-2.9cm)},       % Adjust position
    width=5.5cm,                % Set width
    height=3.5cm,               % Set height
    view={45}{30},              % Adjust viewing angle
    hide axis,                  % Hide axes for a clean look
]

% Plot a solid-coloured inverted paraboloid
\addplot3[
    surf,                       % Use surf for the 3D surface
    domain=-2:2,                % Domain for x-axis
    domain y=-2:2,              % Domain for y-axis
    samples=10,                 % Resolution for smoothness
    shader=flat,                % Disable gradients
    draw=none,                  % Disable gridlines and wireframe
    fill=orange,              % Solid fill colour
    opacity=0.3,                % Transparency
]
{-0.1*(x^2 + y^2)};             % Paraboloid equation

\end{axis}

%Label for Stiefel Manifold
\node[black] at (-1.6,-2.6) {$\text{St}(\ell,h)$};
\node[black] at (-.34,-1.15) {\large \textcolor{red}{\V}};
\node[black] at (-.34,-1.47) {\large \textcolor{red}{\textbullet}};

\draw[->, draw=teal, dotted, line width=1pt] (3.5,-0.2) to[bend left=25] node[midway, right]{{\large $\V^\top$} $\in \reall^{h \times \ell}$} (3.5,-3.3);

\draw[->, draw=olive, dotted, line width=1pt] (0.2,-4.0) to[bend right=55] node[midway, above]{$\mathrm{Id}_{\chi^h}$} (3.0,-4.1);

\draw[-, draw=teal, dotted, line width=1pt] (0.0,-3.6)  to[bend left=15] node[midway, left]{} (-0.3,-2.4);

\draw[->, draw=teal, dotted, line width=1pt] (-0.2,-1.4)  to[bend left=25] node[midway, left]{} (1.6,0.0);

\end{tikzpicture}
    \caption{
    Pictorial representation of SEP for linear CA.
    A linear map \V belonging to the Stiefel manifold embeds a high-level causal knowledge \measurehigh into a low-level one, viz. \measurelow, identifying an embedded causal knowledge $\varphi_{\#}^{\V^\top}(\chi^h)$.
    Then, a linear CA $\V^\top$ abstracts $\varphi_{\#}^{\V^\top}(\chi^h)$, yielding a causal knowledge identical to \measurehigh.
    Notice that the arrow $\mathrm{Id}_{\chi^h}$ underlines that commutativity holds only in one direction, that is, SEP does not imply $\varphi^{\V^\top \circ \V}(\chi^\ell) = \chi^\ell$.
    }
    \label{fig:fig1}
    \vspace{-.2cm}
\end{figure}
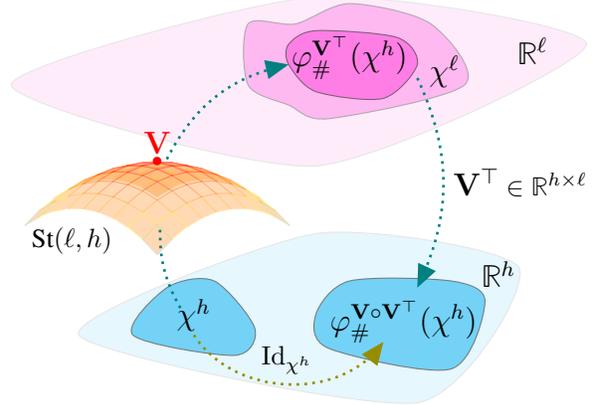

%Motivation
\emph{Causal modeling and reasoning} are key to trustworthy and responsible AI \cite{ganguly2023review,rawal2024causality,qi2024causal}. 
\emph{Structural causal models} (SCMs) provide a widely adopted framework for causal reasoning \cite{pearl2009causality}. While canonical causal theory focuses on a single SCM, scientific research often requires multiple representations of the same system at different levels of resolution. 
For example, biological processes can be studied at the molecular level (e.g., gene expression), cellular level (e.g., metabolic pathways), or organism level (e.g., physiological responses), each offering a different view 
%yet interconnected abstraction 
of the same underlying system. \emph{Causal abstraction} (CA) theory \cite{rubenstein2017causal,beckers2019abstracting} formalizes mappings between SCMs at different abstraction levels, enforcing rigorous \emph{consistency} requirements.
This makes CA a powerful tool for transitioning between resolutions, synthesizing causal evidence, and selecting the most parsimonious representation for a given task.
However, CAs are unknown in practice, underscoring the need for advancing CA learning from data \cite{zennaro2023jointly}.

\spara{Related works.}
Seminal works on CA have focused on defining and assessing given CA maps \cite{rubenstein2017causal,beckers2019abstracting}. 
Our approach builds on the \abst-abstraction category-theoretic framework introduced by \cite{rischel2020category}, which neatly separates the structural and functional components of the CA. From a learning perspective, several methods have been proposed which rely on restrictive assumptions.
In our work, we transform them into \emph{non-assumptions} (NA).
\cite{zennaro2023jointly} addresses the learning problem under \hypertarget{(NA1)}{(NA1)} complete specification of SCMs, which, in reality, is rarely available. 
\cite{felekis2024causal} assumes \hypertarget{(NA2)}{(NA2)} knowledge of causal DAGs, which are often unknown in many applications.
\cite{dyer2024a} relies on the \hypertarget{(NA3)}{(NA3)} availability of interventional data, which may be infeasible or unethical to obtain.
\cite{kekic2024targeted,massidda2024learning} make 
\hypertarget{(NA4)}{(NA4)} functional assumptions on the SCMs, such as linearity. \cite{massidda2024learning} implicitly assumes \hypertarget{(NA5)}{(NA5)} alignment between data generated by two models, which requires tight coordination in sample collection. Conversely, we work under the realistic and pragmatic assumption that \hypertarget{(A1)}{(A1)} \emph{at least partial prior knowledge of the structure of a CA is available}.
\hyperlink{(A1)}{(A1)} is met in different application domains, such as neuroscience.
For instance, consider the learning of a CA between two brain SCMs, the first referring to some brain region of interest (ROIs), the second to the brain lobes.
A map between ROIs and brain lobes is implicitly defined by the location of ROIs, and so it would be natural to try to exploit such prior knowledge when learning the CA.
Finally, we build on top of different continuous optimization frameworks, working in both the Euclidean and Riemannian spaces.
Specifically, when dealing with a nonsmooth Riemannian problem, we leverage the \emph{manifold alternating direction method of multipliers} (MADMM, \citealp{kovnatsky2016madmm}) and the \emph{manifold proximal gradient} (ManPG, \citealp{chen2020}).
They are the Riemannian counterparts of the ADMM \cite{boyd2011distributed} and PG \cite{parikh2014proximal}.
Additionally, when dealing with a smooth, constrained, Riemannian problem, our solution combines the \emph{splitting of ortogonality constraints} (SOC, \citealp{lai2014splitting}), the ADMM, and the \emph{successive convex approximation} (SCA, \citealp{nedic2018parallel}) methods.

Related to CA learning are \emph{mechanistic interpretability via causal abstraction} \cite{bereskamechanistic} and \emph{causal representation learning} \cite{scholkopf2021toward}.
Regarding the former, \citet{geiger2021causal,geiger2024finding} use CA to explain DNNs by treating the 
DNN as a low-level — to be read as \emph{black-box} — SCM and aligning it via interchange intervention training (IIT) with a high-level — to be read as \emph{human-understandable} — SCM, built from theoretical and empirical modeling work.
This setting where two SCMs share structure but differ only in semantics would not constitute valid macro abstraction under our framework, nor can IIT objectives be directly adapted due to \hyperlink{(NA1)}{(NA1)}-\hyperlink{(NA3)}{(NA3)}.
Regarding the latter, CRL treats low-level variables (e.g. pixels) as noncausal observations generated by latent high-level causal concepts, and aims to recover those concepts and their causal graph to improve interpretability and performance robustness \cite{scholkopf2021toward,yang2021causalvae,komanduri2022scm}.
This goal also aligns with recent developments that integrate causal reasoning into communications, leveraging a functorial formalism as well \cite{thomas2023causal,thomas2023neuro,thomas2024symbolic}.
By contrast, CA learning focuses on mappings between SCMs, where the low- and high-level variables are causal and known, to enable causal knowledge transfer across abstraction levels.

\spara{Contributions.} \emph{First}, we introduce the \emph{semantic embedding principle} (SEP) for CA, informally stating that in a well-behaved CA, embedding the high-level (coarser) causal knowledge into the low-level (finer) one and then abstracting it back enables perfect reconstruction of the high-level causal knowledge.
\emph{Second}, to formalize SEP categorically, we present an alternative category-theoretic framework for CA, which allows us to focus on the semantic layer of an SCM. 
\emph{Third}, we formulate a general CA learning problem based on SEP and \hyperlink{(A1)}{(A1)}.
\emph{Fourth}, we tackle the linear CA case, showing that SEP naturally links the linear CA to the geometry of the Stiefel manifold, shaping the learning process as a Riemannian optimization problem.
As an application, we consider the Gaussian setting with the Kullback-Liebler (KL) divergence as a measure of alignment between
the low- and high-level SCMs.
\emph{Fifth}, we formalize and solve nonsmooth and smooth learning problems for linear CAs in this setting. For the former, we present the LinSEPAL-ADMM and LinSEPAL-PG methods; for the latter, the CLinSEPAL one.
Our experiments on synthetic and brain data, across different levels of prior knowledge, confirm good performance of the proposed methods.

\emph{Our work is a first step to bridging the gap between CA learning methods and real-world applications.}
% !TEX root =  ../main.tex
\section{Background on Causality and Abstraction}\label{sec:preliminaries}

This section provides the notation and key concepts related to causal modeling and abstraction theory.

\spara{Notation.} The set of integers from $1$ to $n$ is $[n]$.
The vectors of zeros and ones of size $n$ are $\zeros_n$ and $\ones_n$.
The identity matrix of size $n \times n$ is $\identity_n$. The Frobenius norm is $\frob{\mathbf{A}}$.
The set of positive definite matrices over $\reall^{n\times n}$ is $\pd^n$. The Hadamard product is $\odot$.
Function composition is $\circ$.
The domain of a function is $\dom{\cdot}$ and its kernel $\ker$.
Let $\mathcal{M}(\mathcal{X}^n)$ be the set of Borel measures over $\mathcal{X}^n \subseteq \reall^n$. Given a measure $\mu^n \in \mathcal{M}(\mathcal{X}^n)$ and a measurable map $\varphi^{\V}$, $\mathcal{X}^n \ni \mathbf{x} \overset{\varphi^{\V}}{\longmapsto} \V^\top \mathbf{x} \in \mathcal{X}^m$, we denote by $\varphi^{\V}_{\#}(\mu^n) \coloneqq \mu^n(\varphi^{\V^{-1}}(\mathbf{x}))$ the pushforward measure $\mu^m \in \mathcal{M}(\mathcal{X}^m)$.

We now present the standard definition of SCM.

\begin{definition}[SCM, \citealp{pearl2009causality}]\label{def:SCM}
A (Markovian) structural causal model (SCM) $\scm^n$ is a tuple $\langle \myendogenous, \myexogenous, \myfunctional, \zeta^\myexogenous \rangle$, where \emph{(i)} $\myendogenous = \{X_1, \ldots, X_n\}$ is a set of $n$ endogenous random variables; \emph{(ii)} $\myexogenous =\{Z_1,\ldots,Z_n\}$ is a set of $n$ exogenous variables; \emph{(iii)} $\myfunctional$ is a set of $n$ functional assignments such that $X_i=f_i(\parents_i, Z_i)$, $\forall \; i \in [n]$, with $ \parents_i \subseteq \myendogenous \setminus \{ X_i\}$; \emph{(iv)} $\zeta^\myexogenous$ is a product probability measure over independent exogenous variables $\zeta^\myexogenous=\prod_{i \in [n]} \zeta^i$, where $\zeta^i=P(Z_i)$. 
\end{definition}
A Markovian SCM induces a directed acyclic graph (DAG) $\mathcal{G}_{\scm^n}$ where the nodes represent the variables $\myendogenous$ and the edges are determined by the structural functions $\myfunctional$; $ \parents_i$ constitutes then the parent set for $X_i$. Furthermore, we can recursively rewrite the set of structural function $\myfunctional$ as a set of mixing functions $\mymixing$ dependent only on the exogenous variables (cf. \cref{app:CA}). A key feature for studying causality is the possibility of defining interventions on the model:
\begin{definition}[Hard intervention, \citealp{pearl2009causality}]\label{def:intervention}
Given SCM $\scm^n = \langle \myendogenous, \myexogenous, \myfunctional, \zeta^\myexogenous \rangle$, a (hard) intervention $\iota = \operatorname{do}(\myendogenous^{\iota} = \mathbf{x}^{\iota})$, $\myendogenous^{\iota}\subseteq \myendogenous$,
is an operator that generates a new post-intervention SCM $\scm^n_\iota = \langle \myendogenous, \myexogenous, \myfunctional_\iota, \zeta^\myexogenous \rangle$ by replacing each function $f_i$ for $X_i\in\myendogenous^{\iota}$ with the constant $x_i^\iota\in \mathbf{x}^\iota$. 
Graphically, an intervention mutilates $\mathcal{G}_{\mathsf{M}^n}$ by removing all the incoming edges of the variables in $\myendogenous^{\iota}$.
\end{definition}

Given multiple SCMs describing the same system at different levels of granularity, CA provides the definition of an $\alpha$-abstraction map to relate these SCMs:
\begin{definition}[$\abst$-abstraction, \citealp{rischel2020category}]\label{def:abstraction}
Given low-level $\mathsf{M}^\ell$ and high-level $\mathsf{M}^h$ SCMs, an $\abst$-abstraction is a triple $\abst = \langle \Rset, \amap, \alphamap{} \rangle$, where \emph{(i)} $\Rset \subseteq \datalow$ is a subset of relevant variables in $\mathsf{M}^\ell$; \emph{(ii)} $\amap: \Rset \rightarrow \datahigh$ is a surjective function between the relevant variables of $\mathsf{M}^\ell$ and the endogenous variables of $\mathsf{M}^h$; \emph{(iii)} $\alphamap{}: \dom{\Rset} \rightarrow \dom{\datahigh}$ is a modular function $\alphamap{} = \bigotimes_{i\in[n]} \alphamap{X^h_i}$ made up by surjective functions $\alphamap{X^h_i}: \dom{\amap^{-1}(X^h_i)} \rightarrow \dom{X^h_i}$ from the outcome of low-level variables $\amap^{-1}(X^h_i) \in \datalow$ onto outcomes of the high-level variables $X^h_i \in \datahigh$.
\end{definition}
Notice that an $\abst$-abstraction simultaneously maps variables via the function $\amap$ and values through the function $\alphamap{}$. The definition itself does not place any constraint on these functions, although a common requirement in the literature is for the abstraction to satisfy \emph{interventional consistency} \cite{rubenstein2017causal,rischel2020category,beckers2019abstracting}. An important class of such well-behaved abstractions is \emph{constructive linear abstraction}, for which the following properties hold. By constructivity, \emph{(i)} $\abst$ is interventionally consistent; \emph{(ii)} all low-level variables are relevant $\Rset=\datalow$; \emph{(iii)} in addition to the map $\alphamap{}$ between endogenous variables, there exists a map ${\alphamap{}}_U$ between exogenous variables satisfying interventional consistency \cite{beckers2019abstracting,schooltink2024aligning}. By linearity, $\alphamap{} = \V^\top \in \reall^{h \times \ell}$ \cite{massidda2024learning}. \cref{app:CA} provides formal definitions for interventional consistency, linear and constructive abstraction.
\section{Category-theory Formalization}
Standard category-theoretic formalization of CA \cite{rischel2020category,otsuka2022equivalence} are based on a functorial semantics \cite{jacobs2019causal} approach mapping the graphical structure of causal models (\emph{syntax}) onto the discrete distributions of individual variables (\emph{semantics}). 
Because of our non-assumption \hyperlink{(NA2)}{(NA2)}, no knowledge of the structure of an SCM is available in our setting; thus, we propose a formalization mapping a dyadic structure (\emph{syntax}) onto the exogenous and the endogenous probability measures implied by an SCM (\emph{semantics}).

A crucial role in our modelling is that of the mixing functions \mymixing, which express the data generation process as a recursive process from the exogenous functions. This allows us to define an SCM $\scm^n$ in measure-theoretic terms as a tuple made up of the probability space of exogenous variables $(\myexogenousvals,\, \Sigma_{\myexogenousvals}, \zeta)$, the probability space of the endogenous variables $(\myendogenousvals,\, \Sigma_{\myendogenousvals}, \chi)$, and a set of measurable functions $\mymixing$ given by the mixing functions (cf. \cref{app:CA}).

We can now rely on this representation to  interpret an SCM as a category-theoretic functor from a simple index category \Index, made up only of a source and a sink object and an edge between them, to the category of probability spaces \Prob, where objects $(X,\Sigma_X, p)$ are probability spaces and morphisms $\varphi$ are measurable maps:
\begin{definition}[Category-theoretic SCM]\label{def:SCM_ct}
    An SCM is a functor $\scm^n: \Index \rightarrow \Prob$, mapping the source node of \Index to $(\myexogenousvals,\, \Sigma_{\myexogenousvals}, \zeta)$, the sink node of \Index to $(\myendogenousvals,\, \Sigma_{\myendogenousvals}, \chi)$, and the  edge of \Index to the collection \mymixing of measurable maps.
\end{definition} 

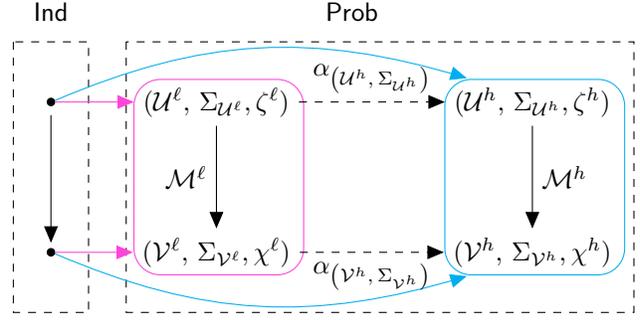
\begin{figure}
    \centering
    \begin{tikzpicture}[scale=1.]

    \draw[dashed] (-0.5, 2.8) rectangle (.5, -.8);
    \draw[dashed] (1., 2.8) rectangle (7.75, -.8);

    \node at (0, 3.2) {\Index};
    \node at (4, 3.2) {\Prob};
    
    \node[circle, draw, fill,inner sep=1pt] (A) at (0, 2) {};
    \node[circle, draw, fill,inner sep=1pt] (B) at (0, 0) {};
    
    \node (C) at (2.2, 2) {$(\myexogenousvals^\ell,\, \Sigma_{\myexogenousvals^\ell}, \zeta^\ell)$};
    \node (D) at (2.2, 0) {$(\myendogenousvals^\ell,\, \Sigma_{\myendogenousvals^\ell}, \chi^\ell)$};
    \draw[->] (C) -- node[left] {$\mymixing^\ell$} (D);

    \node (F) at (6.4, 2) {$(\myexogenousvals^h,\, \Sigma_{\myexogenousvals^h}, \zeta^h)$};
    \node (G) at (6.4, 0) {$(\myendogenousvals^h,\, \Sigma_{\myendogenousvals^h}, \chi^h)$};
    \draw[->] (F) -- node[right] {$\mymixing^h$} (G);

    \coordinate (A1shift) at ([yshift=-5pt]A);
    \draw[->,shorten >=2pt] (A1shift) -- (B);

    \draw[mypurple, rounded corners=10pt]($(C.west)+(0,0.3)$) rectangle ($(D.east)+(0.05,-0.3)$);
    \draw[->, mypurple] (A) -- (C);
    \draw[->, mypurple] (B) -- (D);

    \draw[cyan, rounded corners=10pt]($(F.west)+(0,0.3)$) rectangle ($(G.east)+(0.05,-0.3)$);
    \draw[->, cyan] (A) to[bend left=20] (F);
    \draw[->, cyan] (B) to[bend right=20] (G);

    \draw[->, dashed] (C) -- node[above] {$\alphamap{\left(\myexogenousvals^{h},\, \Sigma_{\myexogenousvals^h}\right)}$} (F);
    \draw[->, dashed] (D) -- node[below] {$\alphamap{\left(\myendogenousvals^{h},\, \Sigma_{\myendogenousvals^h}\right)}$} (G);
    
    \end{tikzpicture}
    \caption{An abstraction as natural transformation, that is, a set of commuting arrows in \Prob (dashed black) from {\color{mypurple} $\scm^\ell$} (purple) to {\color{cyan} $\scm^h$} (cyan).}
    \label{fig:functor2}
\end{figure}

\Cref{app:CT_background} presents basic category-theoretic concepts, whereas \cref{subsec:SCM_prob} deepens \cref{def:SCM_ct}.
CA can now be expressed as a natural transformation between two SCMs, as shown in \cref{fig:functor2}. This formulation has two important features.
First, it highlights the role of exogenous variables in a constructive abstraction showing the commutativity of the paths $\mymixing^h \circ \alphamap{\left(\myexogenousvals^{h},\, \Sigma_{\myexogenousvals^h}\right)}$ and $\alphamap{\left(\myendogenousvals^{h},\, \Sigma_{\myendogenousvals^h}\right)} \circ \mymixing^\ell$. Second, morphisms in \Prob relates measure spaces, viz. sets equipped with sigma algebras. Consequently, the natural transformation components are measurable maps with dimensionality determined by the cardinality of $\myendogenous^h$ and $\myendogenous^\ell$.
To ease the notation, we will denote $\alphamap{\left(\myexogenousvals^{h},\, \Sigma_{\myexogenousvals^h}\right)}$ by $\alphamap{\myexogenous}$ and $\alphamap{\left(\myendogenousvals^{h},\, \Sigma_{\myendogenousvals^h}\right)} $ by $\alphamap{\myendogenous}$.
Then, we can formally recast the $\alpha$-abstraction in \Prob.
\begin{definition}[$\abst$-abstraction in \Prob]\label{def:alpha_abstraction_prob}
    Given low-level $\scm^\ell$ and high-level $\scm^h$ SCMs, %as in \Cref{def:SCM_prob}, 
    an abstraction $\abst = \langle \Rset, \Qset, \amap, \alphamap{} \rangle$ is a tuple, where: \emph{(i)} \Rset is the same as in \Cref{def:abstraction}; \emph{(ii)} $\Qset \subseteq \myexogenous^\ell$ is a set of relevant exogenous variables given by the union of the set of exogenous corresponding to the endogenous in \Rset and those corresponding to their ancestors; \emph{(iii)} $\amap=\langle \amap_\myexogenous, \amap_\myendogenous \rangle$ is a pair of surjective functions mapping sets, $\amap_\myexogenous: \Qset \rightarrow \myexogenous^h$ and $\amap_\myendogenous: \Rset \rightarrow \myendogenous^h$, respectively; \emph{(iv)} $\alphamap{}=\langle \alphamap{\myexogenous}, \alphamap{\myendogenous} \rangle$ is a natural transformation made by measurable functions mapping probability spaces, $\alphamap{\myexogenous}$ for the exogenous and $\alphamap{\myendogenous}$ for the endogenous, respectively.  
\end{definition}
As \Cref{def:abstraction}, \Cref{def:alpha_abstraction_prob} makes no reference to interventional consistency.
\Cref{app:CT} explains how intervened SCMs and interventional consistency can be represented categorically.
% !TEX root =  ../main.tex
\section{Problem Formulation}\label{sec:problem_formulation}
Within our category-theoretic framework, CA learning amounts to finding the endogenous components $\amap_\myendogenous$ and $\alphamap{\myendogenous}$ from data.
We start by formulating a \emph{general} learning problem working under the non-assumption \hyperlink{(NA1)}{(NA1)}-\hyperlink{(NA5)}{(NA5)}, and then decline it to the case of linear CA.\\
Our problem formulation relies upon three key ingredients.
\textit{First}, we assume that the data generated by a constructive abstraction adheres to the \emph{semantic embedding principle}. This principle requires that the CA component $\alphamap{\myendogenous}$ admits a right-inverse measurable map. 
\begin{definition}[Semantic embedding principle, SEP]\label{def:semantic_embedding_principle}
    Given an $\abst$-abstraction as in \cref{def:alpha_abstraction_prob}, the semantic embedding principle states that $\alphamap{\myendogenous}$ has a right-inverse measurable map $\beta_{\myendogenous}$, such that $\alphamap{\myendogenous} \circ \beta_{\myendogenous} = \mathrm{Id}_{\left(\myendogenousvals^{h},\, \Sigma_{\myendogenousvals^h}, \measurehigh\right)}$.
    Hence, it holds 
    \begin{equation}\label{eq:semantic_embedding}
        \measurehigh=\varphi^{\alphamap{\myendogenous} \circ \beta_{\myendogenous}}_{\#}(\measurehigh)\,.
    \end{equation}
\end{definition}
The SEP implies that going from the high-level model $\scm^{h}$ to the low-level model $\scm^{\ell}$ and then abstracting back to $\scm^{h}$ allows for perfect reconstruction. 
Notice that SEP only holds in one direction, as suggested by the word embedding; thus, identity on the left inverse is not guaranteed, meaning that the abstraction from the low level to the high level can still shed information, as we would expect in CA.\\
\textit{Second}, because of the non-assumption \hyperlink{(NA3)}{(NA3)} only observational data is available. 
Thus, we can not explicitly use interventional consistency information to drive our learning. 
Only if we identify the true constructive abstraction, we are guaranteed interventional consistency. 
In trying to learn the abstraction, we leverage \hyperlink{(A1)}{(A1)}, which is met in application domains as discussed in \Cref{sec:intro}.\\
\textit{Third}, to learn a CA, we look for a distance function quantifying the misalignment between the probability measures \measurelow and \measurehigh, given $\alphamap{\myendogenous}$.
Since the probability measures belong to spaces of different dimensionality, specifically $\reall^\ell$ and $\reall^h$, we leverage the approach proposed in \cite{cai2022distances} to compute the misalignment through an embedding as $D\left(\measurehigh,\varphi_{\#}^{\alphamap{\myendogenous}}(\measurelow)\right)$, where $D$ is an information-theoretic metric (e.g., $\mathrm{p}$-Wasserstein) or $\phi$-divergence (e.g., Kullback-Leibler).
Please refer to \cref{app:infotheory} for more details.
We can now pose the following general learning problem:
\begin{center}
\scalebox{.97}{
\begin{tcolorbox}[
    colback=white,
    colframe=black,
    boxrule=0.8pt,
    before=\par\smallskip\centering,
    after=\par\smallskip,
]

\begin{problem}\label{prob:calsep}
(SEP-based CA Learning)\\
\textbf{Input}: (i) probability measures $\measurelow$ and $\measurehigh$; (ii) prior information about $\amap_\myendogenous$, and (iii) a distance function $D\left(\measurehigh,\varphi_{\#}^{\alphamap{\myendogenous}}(\measurelow)\right)$.

\textbf{Goal}: learn a measurable map $\alphamap{\myendogenous}^\star$ such that (i) it belongs to $\myker\,D\left(\measurehigh,\varphi_{\#}^{\alphamap{\myendogenous}}(\measurelow)\right)$, (ii) it complies with SEP in \Cref{def:semantic_embedding_principle}, and (iii) it agrees with the prior information about $\amap_\myendogenous$.
\end{problem}
\end{tcolorbox}
}
\end{center}

The zeroing of the distance function implies $\measurehigh=\varphi^{\alphamap{\myendogenous}^\star}_{\#}(\measurelow)$, which, together with \Cref{eq:semantic_embedding}, yields $\varphi^{\alphamap{\myendogenous}^\star \circ \beta_{\myendogenous}}_{\#}(\measurehigh)=\varphi^{\alphamap{\myendogenous}^\star}_{\#}(\measurelow)\,.$
However, despite solving \cref{prob:calsep}, there is no guarantee that $\alphamap{\myendogenous}^\star$ coincides with the ground truth CA. In other words, the optimal solution is not unique.
For a linear constructive CA, we express $\amap_\myendogenous$ and $\alphamap{\myendogenous}$ as $\B^\top\in\{0,1\}^{h \times \ell}$ and $\V^\top \in \reall^{h\times\ell}$, respectively.
In accordance with constructivity, each row of \B has a single nonzero entry, and each column has at least one nonzero entry. Importantly, for linear CA, a simple yet principled way to satisfy SEP is via the geometry of the Stiefel manifold:
\begin{equation}\label{eq:stiefel}
    \stiefel{\ell}{h} \coloneqq \{ \V \in \reall^{\ell \times h} \, \mid \, \V^\top\V = \identity_h \}\,.
\end{equation}
The Stiefel manifold (see \Cref{app:stiefel} for details), is a convenient choice for the following reasons: \emph{(i)} differently from a generic pseudo-inverse matrix, the orthogonality of \V guarantees that the geometry of the high-level space is preserved; \emph{(ii)} the transpose eases the formulation and ensures numerical stability in optimization. Consequently, we restate SEP for the linear case as follows.
\begin{definition}[Semantic embedding principle, linear case]\label{def:semantic_embedding_principle_linear}
    Given the linear constructive CA, viz. $\V^\top$, SEP implies that $\V \in \stiefel{\ell}{h}$. From \cref{eq:semantic_embedding} we get $\chi^h=\varphi^{\V \circ \V^\top}_{\#}(\chi^h)$.
\end{definition}

A pictorial representation of \Cref{def:semantic_embedding_principle_linear} is provided in \Cref{fig:fig1}.
\Cref{def:semantic_embedding_principle_linear} shapes our methodology for CA learning, posing it as a Riemannian optimization problem \cite{boumal2023introduction}.\\
As an application, in the sequel, we will tackle an implementation of \Cref{prob:calsep} for the linear constructive case $\alphamap{\myendogenous}=\V^\top$, where \emph{(i)} $\measurehigh \sim N(\zeros_h, \covhigh)$ and $\measurelow \sim N(\zeros_\ell, \covlow)$; and \emph{(ii)} $D\left(\measurehigh,\varphi_{\#}^{\V}(\measurelow)\right)=D^{\mathrm{KL}}\left(\measurehigh || \varphi_{\#}^{\V}(\measurelow)\right)$ where $D^{\mathrm{KL}}$ stands for KL divergence.
Specifically,
\begin{equation}\label{eq:KL}
        \KL{\V}\!\!=\!\!\Tr{\!\!\left( \V^\top \! \covlow \V\right)^{-1} \!\covhigh \!} + \log\det{\! \V^\top \! \covlow \V \! } + C\,,
\end{equation}
where $C=-\log\det{\covhigh} - h$ is a constant term.
Additionally, from \cref{eq:KL} it is immediate to see that both $\V$ and $-\V$ belong to \myker \KL{\V}.
Such an application is highly relevant as it is common to deal in practice with Gaussian measures (or quasi) \cite{gabriele2024extracting}; also, in causality, such a measure easily arises from the prominent family of linear models \cite{bollen1989structural,shimizu2006linear} and is investigated in the CA literature \cite{kekic2024targeted,massidda2024learning}.
KL divergence is a common choice in ML and statistics, but notice that any distance vanishes when evaluated at the ground truth.

\spara{Spectral properties entailed by SEP.}
In the setting of zero‐mean Gaussian measures, all causal information is encoded in the covariance matrices of \measurelow and \measurehigh, which may be geometrically represented as ellipsoids in $\reall^\ell$ and $\reall^h$. 
Each covariance $\boldsymbol{\Sigma}$ admits an eigendecomposition $\boldsymbol{\Sigma} = \mathbf{U} \boldsymbol{\Lambda} \mathbf{U}^\top$, where the columns of $\mathbf{U}$ specify the ellipsoid’s principal axes and the square roots of the diagonal entries of $\boldsymbol{\Lambda}$ give the corresponding axis lengths. 
Thus, for the low‐level distribution $N(\zeros_\ell,\covlow)$, one obtains an $\ell$-dimensional ellipsoid with axes $\mathbf{U}^\ell$ and lengths $\sqrt{\lambda_i}$, $i \in [\ell]$; 
similarly, $N(\zeros_h,\boldsymbol{\Sigma}^h)$ defines an $h$-dimensional ellipsoid with axes $\mathbf{U}^h$ and lengths $\sqrt{\kappa_j}$, $j \in [h]$.

When projecting the low‐level ellipsoid into an $h$-dimensional subspace via an orthonormal map $\V \in \stiefel{\ell}{h}$, the resulting covariance
$\V^\top \covlow \V = \V^\top \mathbf{U}^\ell \boldsymbol{\Lambda} \mathbf{U}^{\ell^\top} \V = \mathbf{Q}^\top \boldsymbol{\Lambda} \mathbf{Q},
  \; \text{with } \mathbf{Q} = \mathbf{U}^{\ell^\top}\V$,
  yields a projected ellipsoid whose axes are linear combinations of those in $\mathbf{U}^\ell$. 
Because \V is contractive by SEP, no projection can increase variance.
Thus, the length of each axis of the projected ellipsoid lies between the minimum and maximum length of those of the $\ell$-dimensional ellipsoid.
Specifically, by the Ostrowski's theorem for rectangular \V (cf. Th. 3.2 in \citealp{higham1998modifying}), the $i$-th largest axis length of the projected ellipsoid falls between the $i$-th and $(i+\ell-h)$-th largest lengths of the $\ell$-dimensional ellipsoid. 
Consequently, for the optimal CA case where the projected ellipsoid exactly matches that of \covhigh, the projected axis align with $\mathbf{U}^h$ and the previous interlacing inequalities provide necessary spectral conditions for the existence of a linear CA from $N(\zeros_\ell,\covlow)$ to $N(\zeros_h, \covhigh)$.

\begin{restatable}{theorem}
{existenceCA}\label{th:existenceCA}
    Let $\measurelow \sim N(\zeros_\ell, \covlow)$, $\measurehigh \sim N(\zeros_h, \covhigh)$, where $\covlow \in \pd^\ell$ and $\covhigh \in \pd^h$.
    Denote by $0<\lambda_1\leq \ldots \leq \lambda_\ell$ the eigenvalues of \covlow, and by $0<\kappa_1 \leq \ldots\leq \kappa_h$ those of \covhigh.
    If a linear CA $\V \in \stiefel{\ell}{h}$ complying with SEP from \measurelow to \measurehigh exists, then
    \begin{equation}\label{eq:spectralCA}
        \lambda_i \leq \kappa_i \leq \lambda_{i + \ell -h}, \quad \forall \,i \in [h]\,.
    \end{equation}
\end{restatable}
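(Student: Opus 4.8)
The plan is to reduce the statement to a classical eigenvalue interlacing result for compressions, namely the Poincaré separation theorem (equivalently, the rectangular Ostrowski interlacing already cited just before the theorem). First I would translate the hypothesis into algebra. Asserting that a linear CA $\V \in \stiefel{\ell}{h}$ complying with SEP from \measurelow to \measurehigh exists amounts to two conditions: by \Cref{def:semantic_embedding_principle_linear}, \V has orthonormal columns, $\V^\top \V = \identity_h$; and, since the map must zero the distance, the pushforward matches the high-level law, $\varphi^{\V}_{\#}(\measurelow) = \measurehigh$. Because the pushforward of $N(\zeros_\ell, \covlow)$ under $\mathbf{x} \mapsto \V^\top \mathbf{x}$ is the zero-mean Gaussian with covariance $\V^\top \covlow \V$, and a zero-mean Gaussian is determined by its covariance, this second condition is equivalent to the matrix identity $\V^\top \covlow \V = \covhigh$. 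Hence \covhigh is exactly the compression of \covlow by the isometry \V, and its eigenvalues are $\kappa_1 \leq \ldots \leq \kappa_h$.

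The second step is the interlacing itself, which I would establish via the Courant--Fischer min--max characterization. The crucial observation is that \V preserves Rayleigh quotients: for any $y \in \reall^h$ one has $y^\top \covhigh y = (\V y)^\top \covlow (\V y)$, while $\norm{\V y}^2 = y^\top \V^\top \V y = \norm{y}^2$, so the Rayleigh quotient of \covhigh at $y$ equals that of \covlow at $\V y$. For the lower bound $\lambda_i \leq \kappa_i$, I write $\kappa_i$ as the minimum over $i$-dimensional subspaces $S \subseteq \reall^h$ of the maximal Rayleigh quotient; since each $\V S$ is an $i$-dimensional subspace of $\reall^\ell$ and these image subspaces form a strict subfamily of all $i$-dimensional subspaces, the minimum over the smaller family can only exceed $\lambda_i$. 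For the upper bound $\kappa_i \leq \lambda_{i + \ell - h}$, I use the dual max--min characterization with subspaces of dimension $h - i + 1$; the index computation $\ell - (h - i + 1) + 1 = i + \ell - h$ shows that the same subspace-restriction argument forces $\kappa_i \leq \lambda_{i + \ell - h}$.

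Alternatively, and more economically, I could simply invoke the rectangular Ostrowski / Poincaré separation theorem stated in increasing-eigenvalue order, which gives $\lambda_i \leq \kappa_i \leq \lambda_{i + \ell - h}$ for a compression by an orthonormal \V directly; then all the analytic work is outsourced to the cited result, and only the translation carried out in the first step needs to be presented carefully.

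I do not expect a genuine obstacle: the theorem is essentially a named classical interlacing statement applied to the compression $\V^\top \covlow \V$. The only points demanding care are bookkeeping ones, namely matching the increasing-order convention of the statement against the decreasing-order convention in which Poincaré/Ostrowski is frequently quoted, and verifying that the shifted index stays within range, which it does since $1 \leq i \leq h \leq \ell$ yields $1 \leq i + \ell - h \leq \ell$.
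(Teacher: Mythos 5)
Your proposal is correct and matches the paper's proof in essence: both reduce the hypothesis to the compression identity $\V^\top \covlow \V = \covhigh$ and then obtain \cref{eq:spectralCA} from the rectangular Ostrowski interlacing (the paper cites Th.~3.2 of \citealp{higham1998modifying} and notes that orthonormality of $\V$ forces the multiplicative factors $\vartheta_i$ to equal $1$). Your Courant--Fischer derivation is just a self-contained proof of that same cited interlacing result, so there is no substantive difference in approach.
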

\begin{proof}
    See \Cref{app:proof}.
\end{proof}

We now turn to formulating the learning problem.
We investigate two approaches for injecting the prior information about $\amap_\myendogenous$, encoded in the matrix of \emph{prior knowledge} \B, into our problem.
Please notice that in case \B is not fully specified, it might not comply with the row and column constraints discussed above.
These formulations translate into non-smooth and smooth Riemannian learning problems.

\spara{Nonsmooth problem.} 
In the nonsmooth problem we introduce \B as a penalty term in the objective function. The rationale is to penalize entries in \V corresponding to zeros in \B. Let $\D=(\ones_{\ell \times h} - \B)$.
The problem reads as follows:

\begin{restatable}{problem}{nonsmoothprob}\label{prob:nonsmooth}
    Given $\covlow \in \pd^{\ell}$, $\covhigh \in \pd^{h}$, $\D \in \{0,1\}^{\ell \times h}$, and $\lambda \in \reall_+$, the CA is the transpose of
    \begin{equation}\label{eq:minKL}
        \V^\star = \argmin_{\V \in \stiefel{\ell}{h}} \; f(\V)  + \lambda \underbrace{\norm{\D \odot \V}_1}_{h(\V)}\,.
    \end{equation}
    Here, $f(\V)$ follows \cref{eq:KL}, omitting the constant $C$.
\end{restatable}

Please notice that, although appealing in its form, \Cref{eq:minKL} does not guarantee the constructiveness of the learned CA.
Moreover, the penalty term introduces a bias in the learned \V in the case of partial prior knowledge.

\spara{Smooth problem.}
In the smooth problem, we introduce \B directly in the objective function $f(\cdot)$. The CA is now defined as the Hadamard product of $\V$ and the support $(\B \odot \Supp)$ integrating prior $\B$ and learned $\Supp$ knowledge. 
This formulation is particularly convenient as it enables us to jointly optimize for $\V \in \rmatdim$ and matrix $\Supp \in \umatdim$. However, we also need to introduce three constraints: \\
\blue{\emph{(i)}} by SEP, $\B \odot \Supp \odot \V$ must belong to the Stiefel manifold; \\
\blue{\emph{(ii)}} by functionality, the columns of the support $(\B \odot \Supp)^\top$ must sum up to one, meaning that they lie on a sphere, defined as 
\begin{equation}
    \begin{aligned}
        \sphere{h}{\ell} \coloneqq \Big\{&\mathbf{A} \in \{0,1\}^{h\times\ell} \mid  \norm{\mathbf{a}_j}_2=1 \text{ and }\\ 
        &\sum_{i=1}^h a_{ij}=1, \forall j \in [\ell]  \Big\}\,;     
    \end{aligned}
\end{equation}
\blue{\emph{(iii)}} by surjectivity, the rows of the support $(\B \odot \Supp)^\top$ must contain at least a one. 
The problem reads as:

\begin{restatable}{problem}{smoothpartial}\label{prob:nonconvex_prob_approx}
Given $\covlow \in \pd^{\ell}$, $\covhigh \in \pd^{h}$, and $\B \in \lmatdim$, the linear constructive CA is given by the transpose of the product $\B \odot \Supp \odot \V$, where 
    \begin{equation}\label{eq:prob_madmmsca_VS}
        \begin{aligned}
            \V^\star, \Supp^\star = \argmin_{\substack{\V \in \rmatdim \\ \Supp \in \umatdim}} &\quad f(\V,\Supp)\,;\\
             \textrm{subject to} & \; \blue{(i)}\; \B \odot \Supp \odot \V \in \stiefel{\ell}{h}\,, \\
             & \; \blue{(ii)}\; \left(\B \odot \Supp\right)^\top \in \sphere{h}{\ell}\,, \\
             & \; \blue{(iii)}\; \ones_h - \left(\B \odot \Supp\right)^\top \ones_\ell \leq \zeros_h\,;
        \end{aligned}
    \end{equation}
    and
    \begin{equation}\label{eq:objective_partial_knowledge}
        \begin{aligned}
            f(\V,\Supp) \!\coloneqq\! & \Tr{\left(\left(\B \odot \Supp \odot \V\right)^\top\! \covlow \!\left(\B \odot \Supp \odot \V\right) \right)^{-1} \!\!\covhigh} \\+ 
            & \!\log\det {\left(\B \odot \Supp \odot\V\right)^\top \! \covlow \! \left(\B \odot \Supp \odot\V\right) }\,.
        \end{aligned}
    \end{equation}
\end{restatable}

Constraints \emph{(ii)} and \emph{(iii)} further underscore the role of \Supp:  it enables learning the support of CA while guaranteeing its constructiveness.
Notice that the matrix $\Supp$ does not need to be a logical matrix; it is the product $\B \odot \Supp$ which must be logical. Also, if \B provides full prior knowledge about the structure, we have $\Supp \equiv \B$ and we do not need to learn \Supp.
This approach guarantees the ground-truth structure for the learned CA.
The full prior problem formulation is provided in \cref{subsec:CLinSEPAL_full_prior}.

Unfortunately, both the Stiefel manifold in \Cref{eq:stiefel} and \KL{\V} in \Cref{eq:KL} are nonconvex in \V.
In the next section we devise methods suitable for this setting.

\begin{remark}
    Learning linear CAs remains challenging even with full prior structural knowledge and jointly sampled data \hyperlink{(NA5)}{(NA5)}.
    Although one could decompose the task into $h$ independent linear regressions, enforcing SEP requires each coefficient vector to satisfy a unitary $\ell_2$‐norm constraint, rendering the problem nonconvex.
\end{remark}
% !TEX root =  ../main.tex
\section{Problem Solution}\label{sec:problem_solution}
To solve the nonsmooth and smooth Riemannian problems in \cref{sec:problem_formulation}, we leverage the following:

\begin{restatable}{proposition}{smoothnessth}\label{prop:smoothness_and_differentiability}
    Consider the function
    \begin{equation}\label{eq:general_objective}
        f(\A)\!=\!\Tr{\!\left(\A^\top \covlow \A \right)^{-1} \! \covhigh} + \log\det{\A^\top \covlow \A }\,.
    \end{equation}
    \Cref{eq:general_objective} is smooth for $\A \in \stiefel{\ell}{h}$.
    Additionally, define $\widetilde{\mathbf{A}}\coloneqq\left(\mathbf{A}^\top \covlow \mathbf{A}\right)^{-1}$.
    The gradient of $f\left(\A\right)$ is
    \begin{equation}\label{eq:gradA}
        \Egrad{\A}{f} = 2\left(\covlow\mathbf{A}\widetilde{\mathbf{A}}\right)\left(\identity_h - \covhigh\widetilde{\mathbf{A}}\right)\,,
    \end{equation}
\end{restatable}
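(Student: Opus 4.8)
The plan is to view $f$ as the restriction to $\stiefel{\ell}{h}$ of a map defined on the open set $\Omega \coloneqq \{\A \in \reall^{\ell\times h} : \A^\top\covlow\A \text{ invertible}\}$, establish smoothness on $\Omega$, and then extract the gradient by matrix differential calculus. For smoothness, I would first note that every $\A \in \stiefel{\ell}{h}$ has orthonormal (hence linearly independent) columns, so $\A\mathbf{v}\neq\zeros_\ell$ whenever $\mathbf{v}\neq\zeros_h$. Since $\covlow \in \pd^\ell$, this gives $\mathbf{v}^\top(\A^\top\covlow\A)\mathbf{v} = (\A\mathbf{v})^\top\covlow(\A\mathbf{v}) > 0$, so $\G \coloneqq \A^\top\covlow\A \in \pd^h$ is invertible and $\stiefel{\ell}{h}\subset\Omega$. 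On $\Omega$ the map $\A\mapsto\G$ is polynomial, matrix inversion is smooth on the invertible cone, and $\mathbf{B}\mapsto\Tr{\mathbf{B}\covhigh}$ together with $\mathbf{B}\mapsto\log\det\mathbf{B}$ (the latter on $\pd^h$) are smooth; composing these shows $f$ is smooth on $\Omega$. Because $\stiefel{\ell}{h}$ is an embedded submanifold of $\reall^{\ell\times h}$ and $f$ extends to a smooth function on the open neighborhood $\Omega$, its restriction to the manifold is smooth, proving the first claim.

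For the gradient I would compute the first-order differential of $f$ on $\Omega$ and read off $\Egrad{\A}{f}$ through the identity $\mathrm{d}f = \Eprod{\Egrad{\A}{f}}{\mathrm{d}\A}{\mathrm{F}} = \Tr{(\Egrad{\A}{f})^\top\mathrm{d}\A}$. Using $\mathrm{d}\G = (\mathrm{d}\A)^\top\covlow\A + \A^\top\covlow\,\mathrm{d}\A$, the rule $\mathrm{d}(\G^{-1}) = -\G^{-1}(\mathrm{d}\G)\G^{-1}$, and $\mathrm{d}\log\det\G = \Tr{\G^{-1}\mathrm{d}\G}$, the two terms of $f$ combine, after one cyclic rearrangement, into
\begin{equation*}
\mathrm{d}f = \Tr{\M\,\mathrm{d}\G}, \qquad \M \coloneqq \G^{-1} - \G^{-1}\covhigh\G^{-1},
\end{equation*}
where $\M$ is symmetric because $\covlow$, $\covhigh$, and $\G$ are.

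The remaining step is to substitute the expression for $\mathrm{d}\G$ and collect the two resulting traces. The key manipulation is that, by the symmetry of $\M$ and $\covlow$ and the transpose/cyclic invariance of the trace, the $(\mathrm{d}\A)^\top$ term equals the $\mathrm{d}\A$ term, so $\mathrm{d}f = 2\,\Tr{\M\A^\top\covlow\,\mathrm{d}\A}$. Matching against $\Tr{(\Egrad{\A}{f})^\top\mathrm{d}\A}$ gives $\Egrad{\A}{f} = 2\covlow\A\M$, and factoring $\M = \widetilde{\A}(\identity_h - \covhigh\widetilde{\A})$ recovers \eqref{eq:gradA}.

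The smoothness part is essentially bookkeeping, so I expect the main obstacle to be the trace/transpose manipulation that fuses the two contributions of $\mathrm{d}\G$ into a single factor of $2$ and correctly identifies the \emph{Euclidean} gradient (rather than its transpose) from the differential. Carefully tracking which matrices are symmetric, and remembering that it is the ambient gradient and not the Riemannian one that is requested here, is where sign or transposition errors are most likely to enter.
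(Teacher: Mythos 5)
Your proposal is correct and follows essentially the same route as the paper: both establish $\A^\top\covlow\A\in\pd^h$ from the orthonormality of the columns of $\A$ together with $\covlow\in\pd^\ell$, deduce smoothness by composition, and then obtain \eqref{eq:gradA} as the sum of the gradients of the trace and log-determinant terms. The only difference is that you derive the two matrix-calculus identities from first principles via the differential $\mathrm{d}\G=(\mathrm{d}\A)^\top\covlow\A+\A^\top\covlow\,\mathrm{d}\A$, whereas the paper cites them directly; your trace/transpose manipulation producing the factor of $2$ and the factorization $\M=\widetilde{\A}\left(\identity_h-\covhigh\widetilde{\A}\right)$ are both correct.
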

\begin{proof}
See \cref{app:proof}.
\end{proof}

\subsection{Solution of the nonsmooth learning problem}\label{subsec:sol_nonsmoot}
Leveraging \cref{prop:smoothness_and_differentiability}, we have that \cref{eq:minKL} is constituted by a smooth yet nonconvex term, $f(\V)$, and a nonsmooth one, $h(\V)$. Hence we solve \cref{prob:nonsmooth} through two different optimization paradigms for nonsmooth Riemannian optimization: MADMM and ManPG.
We term the proposed methods \emph{LinSEPAL-ADMM} and \emph{LinSEPAL-PG}, where \textit{LinSEPAL} stands for \textbf{Lin}ear \textbf{S}emantic \textbf{E}mbedding \textbf{P}rinciple \textbf{A}bstraction \textbf{L}earner.
Next we provide a sketch of the solution and provide the full mathematical derivation in \cref{app:MADMM} and \Cref{app:ManPG}.

\spara{LinSEPAL-ADMM.} The MADMM framework appeals to our setting given the objective function separating into smooth and nonsmooth terms.
To derive the LinSEPAL-ADMM iterative algorithm, we proceed as follows.
First, the nonsmooth term $h(\V)$ is associated with a splitting variable \Y to be optimized over \rmatdim, obtaining an equivalent problem formulation (cf. \cref{eq:MADMM}). LinSEPAL-ADMM proceeds by iteratively minimizing the augmented Lagrangian with respect to the primal variables \V and \Y, while maximizing w.r.t. the scaled dual variable. 
Specifically, LinSEPAL-ADMM solves the subproblem for \V (cf. \Cref{eq:updateV}) through standard techniques for smooth optimization on the Stiefel manifold (e.g., conjugate gradient, \citealp{edelman1998geometry}).
This is the most complex update in the LinSEPAL-ADMM iterative procedure due to the nonconvex objective and the Stiefel manifold.
Next, LinSEPAL-ADMM updates \Y in closed form through the element-wise soft-thresholding operator (cf. \Cref{eq:updateY_madmm}). Finally, the scaled dual variable is updated by adding the primal residual evaluated at the current solution (cf. \cref{eq:MADMMrecursion_app}).
The stopping criteria for LinSEPAL-ADMM are established according to primal and dual feasibility optimality conditions (\citealp{boyd2011distributed}, cf. \cref{app:MADMM}).
To the best of our knowledge, the convergence guarantee for MADMM in the Riemannian space has not been proven.
Consequently, the same holds for LinSEPAL-ADMM.
\Cref{alg:linsepal_admm} summarizes the method.\\

\spara{LinSEPAL-PG.} Our LinSEPAL-PG is an iterative algorithm alternating two updates (cf. \cref{eq:ManPG_app}).
The first update is the proximal mapping providing a proximal gradient direction $\G^k$ onto the tangent space to the Stiefel manifold, using the first-order approximation of the objective around the $k$-th estimate.
The second is the update for $\V^{k+1}$, which exploits the canonical retraction (cf. \cref{eq:stiefel_retractions}) technique for projecting back $\V^k + \G^k$ from the tangent space to the manifold. The hardest step in the LinSEPAL-PG algorithm is the proximal update (cf. \cref{eq:ManPGU1}).
We solve it by declining the regularized semi-smooth Newton method \cite{xiao2018regularized} to our application (cf. \cref{app:ManPG}).
Following the rationale in \cite{si2024riemannian}, differently from the original ManPG method which uses the parameterization of the tangent space, we constrain $\G^k$ to the tangent space by exploiting the basis of the normal space to the manifold (cf. \Cref{eq:basis_nvk}).
This way, we ease the mathematical solution, with benefits from the computational perspective (cf. \citealp{si2024riemannian}).
Next, LinSEPAL-PG updates $\V^{k+1}$ in closed form (cf. \cref{eq:updateV_linsepalpg}) by applying the QR-retraction, employing an Armijo line-search procedure to determine the stepsize.
The optimization stops either when a maximum number of iterations is reached, or when $\KL{\V^{k+1}}$ is below a threshold $\tau^{\mathrm{KL}}\approx 0$.
LinSEPAL-PG inherits the global convergence of the ManPG framework, established in \cite{chen2020}.
\Cref{alg:linsepal_pg} summarizes the method.

\subsection{Solution of the smooth learning problem}\label{subsec:sol_smooth}
We provide a sketch of the solution below and the full mathematical derivation in \Cref{app:MADMMSCA_partial}.
In this case, we want to jointly optimize \Supp and \V, both being components of the linear CA, viz. $(\B \odot \Supp \odot \V)^\top$.
Hence, unlike the nonsmooth case, we constrain to the Stiefel manifold the product $(\B \odot \Supp \odot \V)$. To solve \Cref{prob:nonconvex_prob_approx}, we combine the SOC, ADMM, and SCA methods.
According to the rationale behind SOC, we add two splitting variables, namely \YO and \YT in \stiefel{\ell}{h} (cf. \cref{eq:splitting_constraints_partial}), to separate the nonconvexity of the objective function from that induced by the manifold.
The reason why we have two splitting variables is that we need to take into account the bilinear form of the first constraint in \Cref{eq:prob_madmmsca_VS}.
Additionally, to manage the second constraint in \Cref{eq:prob_madmmsca_VS}, we introduce another splitting variable $\X \in \sphere{h}{\ell}$.
Starting from the equivalent problem formulation (cf. \Cref{eq:prob_madmmsca_VS_with_splitting}), we write the (nonconvex) scaled augmented Lagrangian (cf. \Cref{eq:scaledAUL_partial}), thus arriving at the update recursion for our proposed method (cf. \Cref{eq:ADMM_partial}).
We term the latter \emph{CLinSEPAL} (Constructive LinSEPAL) to highlight that it returns constructive support for CA.
CLinSEPAL proceeds by iteratively minimizing the augmented Lagrangian w.r.t. the primal variables \V, \Supp, \YO, \YT, and \X; and maximizing it w.r.t. the scaled dual ones.
In the subproblems for \V (cf. \Cref{eq:updateV_nonconvex_partial}) and \Supp (cf. \Cref{eq:updateS_nonconvex_partial}), we adopt the SCA paradigm to manage the nonconvexity of $f(\V,\Supp^k)$ and $f(\V^{k+1},\Supp)$, respectively.
By exploiting the smoothness of $f(\V,\Supp)$ (cf. \cref{corollary:objective_partial_knowledge}), the strongly convex surrogates are derived around the current solution (cf. \cref{eq:strongly_convex_surrogate_Vpartial,eq:strongly_convex_surrogate_Spartial}).
CLinSEPAL solves the strongly convex surrogate subproblems (cf. \cref{eq:SCA_recursion_V,eq:SCA_recursion_S}) exactly.
Due to the presence of the inequality constraints, the subproblem for \Supp is a constrained quadratic programming problem.
CLinSEPAL solves it via standard techniques (e.g., \citealp{osqp}).
These two steps in CLinSEPAL can be seen as an instance of the linearized ADMM framework (Alg.1 in \citealp{lu2021linearized}) where each internal update is solved exactly. Next, CLinSEPAL solves in closed-form the updates for the three splitting variables.
Indeed, the subproblems for \YO and \YT amount to the \emph{closest orthogonal approximation problem} \cite{fan1955some,higham1986computing}, whose solution is obtained in closed form via polar decomposition.
Subsequently, the subproblem for \X is solved in closed-form according to \Cref{lemma:proximal_spDelta}.
Finally, the scaled dual variables are updated with the corresponding primal residuals.
Empirical convergence for CLinSEPAL is established when the norms of primal (cf. \cref{eq:primal_res_partial}) and dual (cf. \cref{eq:dual_res_partial}) residuals vanish, in accordance with absolute and relative tolerances (cf. \cref{eq:stopping_criteria_partial}).
\Cref{alg:clinsepal} summarizes the method.
Additionally, \Cref{subsec:CLinSEPAL_full_prior} details the solution in the special case of full prior knowledge.
% !TEX root =  ../main.tex
\section{Empirical Assessment on Synthetic Data}\label{sec:empirical_assessment}
This section provides the empirical assessment of LinSEPAL-ADMM, LinSEPAL-PG and CLinSEPAL with different degrees of prior knowledge,
from full (\emph{fp}) to partial (\emph{pp}). 
We monitor four metrics to evaluate the learned CA $\Vhat^\top$:  
\emph{(i)} \emph{constructiveness}, as required by \cref{def:semantic_embedding_principle_linear};  
\emph{(ii)} $\KL{\Vhat}$ evaluating the alignment between $\varphi^{\Vhat}_{\#}(\measurelow)$ and \measurehigh; 
\emph{(iii)} the Frobenius distance between the absolute value of \Vhat and that of the ground truth $\V^\star$, normalized by \frob{\V^\star} to make the settings comparable; 
\emph{(iv)} the $\mathrm{F1}$ score computed using the support of the learned CAs and that of $\V^\star$ to evaluate structural interventional consistency.
\cref{app:metrics} provides the definition for the above metrics and the hyper-parameters values used in the experiments.

\spara{Full prior knowledge.}
In the fp case, we investigate three different settings $(\ell,h)\in \{(12,2), (12,4), (12,6)\}$, corresponding to the cases of \emph{high}, \emph{medium-high}, and \emph{medium} coarse-graining.
We do not consider the case where $h>\ell/2$ since the abstraction for $h-\ell/2$ nodes of the low-level model would be fully specified due to the availability of full prior knowledge.
For each setting, we instantiate $S=30$ ground truth abstractions $\V^\star$, and for each simulation $s \in [S]$ we run all the methods $R=50$ times, with different initializations.
Then, for each $s$ and method, we retain the \Vhat minimizing the objective $\KL{}$.\\
\Cref{fig:full_synth_data} shows the performance of the tested methods. 
All the methods provide constructive CAs $\forall \, s \in [S]$, and reach a good level of alignment in terms of $\KL{\Vhat}$.
Recall that, while CLinSEPAL and LinSEPAL-ADMM stop the learning procedure according to primal and dual residuals convergence, LinSEPAL-PG exits when $\KL{\Vhat}$ is below a certain threshold $\tau^{\mathrm{KL}}$ (in the experiments $\tau^{\mathrm{KL}}=10^{-4}$). 
The Frobenius absolute distance shows comparable performances for the three methods, although CLinSEPAL and LinSEPAL-ADMM outperform in case $(\ell, h)=(12,4)$.
This metric tells us that, as $h$ increases, the learned \Vhat tends (in absolute terms) to the ground truth.
Interestingly, when $(\ell, h)=(12,2)$ we observe a high distance from $\V^\star$, although the learned \Vhat has the correct structure (cf. $\mathrm{F1}$ score).
This suggests that under a high coarse-graining, the size of \myker \KL{} grows, and it is more difficult for our methods to estimate $\V^\star$ under \hyperlink{(NA1)}{(NA1)}-\hyperlink{(NA5)}{(NA5)}.
Finally, the $\mathrm{F1}$ score confirms that the methods guarantee the true CA structure of \Vhat, for all the settings.
To sum up, CLinSEPAL and LinSEPAL-ADMM are slightly better choices than LinSEPAL-PG in case of full prior knowledge in our experimental setting.
\begin{figure}[t]
    \centering
    \includegraphics[width=.9\columnwidth]{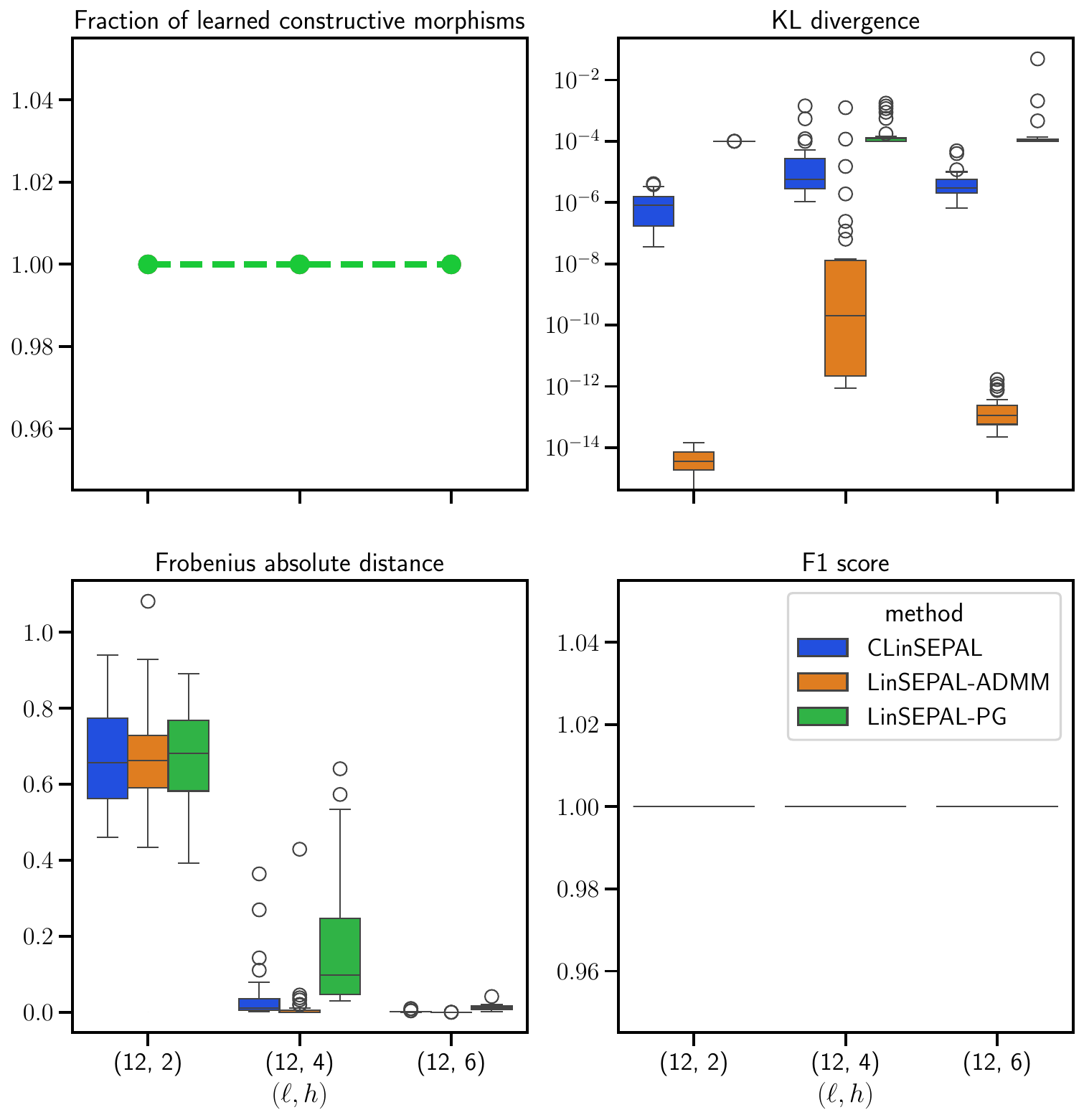}
    \caption{
    Synthetic fp results for all settings $(\ell, h)$ and methods: \emph{(i)} fraction of learned CAs that are constructive, \emph{(ii)} $\KL{\Vhat}$, \emph{(iii)} normalized  absolute Frobenius distance from $\V^\star$, and \emph{(iv)} $\mathrm{F1}$ score.
    }
    \label{fig:full_synth_data}
\end{figure}

\begin{figure}[t]
    \centering
    \includegraphics[width=.9\columnwidth]{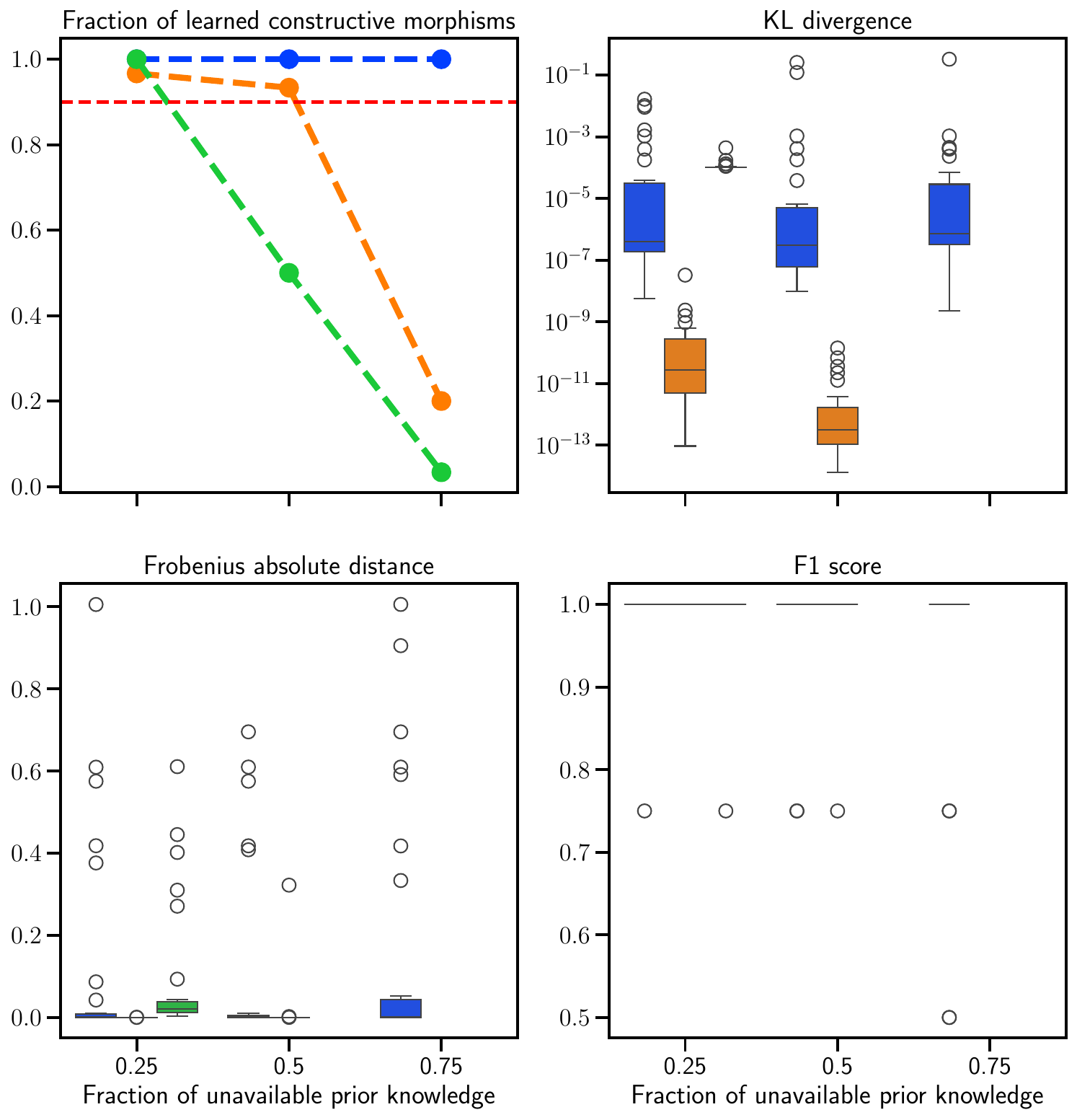}
    \caption{Synthetic pp results for setting $(\ell, h)=(4,2)$, all methods, and prior knowledge amounting to the correct structural mapping for $25\%$, $50\%$, or $75\%$ of the nodes. All plots and color legend as in \cref{fig:full_synth_data}.
    }
    \label{fig:partial_synth_data}
\end{figure}

\spara{Partial prior knowledge.}
In the pp case, we consider the setting $(\ell,h)\in \{(4,2)\}$ and simulate partial prior knowledge by setting to 1 the entries of the $25\%$, $50\%$, and $75\%$ of the rows in \B.
For each setting, we instantiate $S=30$ ground truth abstractions $\V^\star$, and for each simulation $s \in [S]$ we run all the methods $R=30$ times, with different initializations.\\
In \cref{fig:partial_synth_data}, the first plot immediately shows that only CLinSEPAL consistently returns a constructive linear CA, as guaranteed by its formulation in \cref{prob:nonconvex_prob_approx}. 
We decided to consider methods performing under a threshold of $90\%$ to be unreliable in returning constructive CAs and not to report their remaining metrics.
For completeness, \Cref{fig:partial_synth_data_additional} in \Cref{app:synth} provides the results where the threshold on constructiveness is removed.
In the case of a limited drop of prior knowledge ($25\%$) all methods perform well, similarly to the fp case, with CLinSEPAL and LinSEPAL-ADMM slightly outperforming LinSEPAL-PG.
With a higher drop ($50\%$), LinSEPAL-PG fails to achieve our constructiveness threshold, while CLinSEPAL and LinSEPAL-ADMM still perform well, although LinSEPAL-ADMM provides a lower fraction of constructive CAs. 
Finally, with the highest drop ($75\%$) CLinSEPAL succeeds in learning a constructive CA and lowering $\KL{}$, even if the Frobenius absolute distance slightly increases. 
To sum up, for the pp setting only CLinSEPAL guarantees a constructive abstraction.
% !TEX root =  ../main.tex
\section{Causal Abstraction of Brain Networks}\label{sec:empirical_assessment_rw}
To show the practical relevance of our approach, we apply CLinSEPAL to resting-state functional magnetic resonance imaging (rs-fMRI) data, using the dataset from \cite{gabriele2024extracting} (refer to the paper for details on the dataset). The data, publicly released as part of the \emph{Human Connectome Project} \cite{smith2013resting},
comprises recordings from $100$ healthy adults with a parcellation scheme that divides the brain into $89$ regions of interest (ROIs), $K=44$ for each hemisphere plus the shared vermis region.

We simulate a first investigating team of neuroscientists taking zero-mean stationary time series for the left hemisphere of the first adult in the dataset. They estimate the data covariance matrix using a Gaussian mixture probability model, viz. $\covlow \in \reall^{\ell \times \ell}$, with $\ell=K+1$, and interpret it as generated by an underlying, unknown, low-level SCM.

In a first fp scenario, we imagine a second investigating team that has collected data according to their causal network specified on a coarser parcellation of the same brain in $h=14$ macro ROIs. We generate the data for the second team using a ground truth linear CA $\B, \V^\star \in \stiefel{45}{14}$ based on the structural mapping in \cite{gabriele2024extracting}, and use the data for estimating the covariance matrix $\covhigh \in \reall^{h \times h}$. In this scenario it is realistic to assume knowledge of $\B$ defining how macro ROIa are mapped to ROIs. Then, to align their models, the two groups run CLinSEPAL to recover the abstraction given $\covlow,\covhigh$ and $\B$. \Cref{fig:ROIsLobes} (in \Cref{app:rw_figs}) shows that CLinSEPAL recovers $\V^\star$.

In a second pp scenario, we imagine that the second investigating team has collected data according to a causal network aggregating ROI time series into $h=8$ brain functional networks related to different activities (e.g., motor, visual, default mode). Data is generated again through a ground truth linear CA $\B, \V^\star \in \stiefel{45}{8}$ based on groupings in \cite{gabriele2024extracting} and the covariance matrix $\covhigh \in \reall^{h \times h}$ computed. 
In this scenario, knowledge of $\B$ is debatable as different studies in the literature suggest different relations between ROIs and functions; we then express this partial information via uncertainty over $\B$, meaning that some rows of \B have more than one entry equal to one.
\Cref{fig:ROIsFun_pp} in \Cref{app:rw_figs} shows the \B matrix provided as input to CLinSEPAL, as well as the ground truth.
The two groups now run CLinSEPAL using $\covlow,\covhigh$ and an uncertain $\B$; partial knowledge compounds on an already challenging learning problem due to the high coarse-graining. \Cref{fig:ROIsFun_ca} and \Cref{fig:ROIsFun_metrics} show results with different levels of uncertainty. For low uncertainty, CLinSEPAL correctly retrieves the structure of the CA, although we observe some variation in the colors w.r.t. $\V^\star$; additionally, \KL{\Vhat} and the Frobenius absolute distance in \cref{fig:ROIsFun_metrics} show that misalignment is minimized and \Vhat very close to $\V^\star$. For medium and high uncertainty, CLinSEPAL makes some mistakes in terms of structural mapping, but \cref{fig:ROIsFun_metrics} shows that insights from the method are still valuable.
% !TEX root =  ../main.tex
\section{Conclusion and Future Works}\label{sec:concl_and_fw}
In this work, we addressed the challenge of CA learning in realistic scenarios, abandoning restrictive assumptions \hyperlink{(NA1)}{(NA1)}-\hyperlink{(NA5)}{(NA5)} that limit the applicability of existing methods. We proposed an alternative category-theoretic framework for SCM and CA, and introduced the \emph{semantic embedding principle} to learn CAs that meaningfully preserve information. We formulated a general CA learning problem grounded in SEP, under a mild assumption of partial prior knowledge about the structure of CA. 
For the linear CA setting, we showed how SEP links CA to the geometry of the Stiefel manifold; as an application, we tackled the important case of Gaussian measures, with the KL divergence as a measure of alignment between the low- and high-level SCMs. We pursued two different formulations.
For the first, a nonsmooth Riemannian learning problem, we devised the LinSEPAL-ADMM and LinSEPAL-PG methods. 
For the second, a smooth Riemannian learning problem ensuring the constructiveness of the CA, we developed CLinSEPAL.
Our empirical assessment on synthetic data confirmed the effectiveness of our methods, and the application to brain data showcased the potential in real-world problems.

Our work paves the way for several exciting research directions.
First, as it emerges from our Gaussian application, \emph{linear CAs with different probability measures} deserve careful investigation.
Second, studying the \emph{nonlinear case} is a compelling avenue. 
We believe that deep and reinforcement learning paradigms, such as encoding-decoding and actor-critic architectures, hold promise for modeling nonlinear CA maps.
Lastly, we view our work as a foundational step toward \emph{observational causal abstraction learning}, bridging the gap between \emph{CA learning} and \emph{causal discovery} \cite{spirtes2016causal}. Our category-theoretic framework underscores the pivotal role of exogenous variables, drawing a path to translate \emph{SCM identifiability} results into \emph{CA identifiability} results.
This suggests that, in some cases, interventional consistency may be achieved without relying on interventional data.

% \section*{Accessibility}
% Authors are kindly asked to make their submissions as accessible as possible for everyone including people with disabilities and sensory or neurological differences.
% Tips of how to achieve this and what to pay attention to will be provided on the conference website \url{http://icml.cc/}.

% \section*{Software and Data}

% If a paper is accepted, we strongly encourage the publication of software and data with the
% camera-ready version of the paper whenever appropriate. This can be
% done by including a URL in the camera-ready copy. However, \textbf{do not}
% include URLs that reveal your institution or identity in your
% submission for review. Instead, provide an anonymous URL or upload
% the material as ``Supplementary Material'' into the OpenReview reviewing
% system. Note that reviewers are not required to look at this material
% when writing their review.

% Acknowledgements should only appear in the accepted version.
% \section*{Acknowledgements}

% \textbf{Do not} include acknowledgements in the initial version of
% the paper submitted for blind review.

% If a paper is accepted, the final camera-ready version can (and
% usually should) include acknowledgements.  Such acknowledgements
% should be placed at the end of the section, in an unnumbered section
% that does not count towards the paper page limit. Typically, this will 
% include thanks to reviewers who gave useful comments, to colleagues 
% who contributed to the ideas, and to funding agencies and corporate 
% sponsors that provided financial support.

\section*{Impact Statement}
Our work is foundational, aiming at advancing the field of causal abstraction.
Our proposed methods can be applied to different application domains, such as neuroscience.
As demonstrated by our empirical assessment, the information resulting from their application is high-level and useful for a better understanding.
Hence, we believe that the risks associated with improper usage of our techniques are low.

\section*{Acknowledgements}
The work of Gabriele D'Acunto and Paolo Di Lorenzo was supported by the SNS JU project 6G-GOALS \cite{strinati2024goal} under the EU's Horizon program Grant Agreement No 101139232. 
The work of Gabriele D'Acunto was also supported by the European Union under the Italian National Recovery and Resilience Plan (NRRP) of NextGenerationEU, partnership on `` Telecommunications of the Future'' (PE00000001 - program `` RESTART'').
The work of Yorgos Felekis was supported by the Onassis Foundation - Scholarship ID: F ZR 063-1/2021-2022.

\bibliography{bibliography}
\bibliographystyle{icml2025}

%%%%%%%%%%%%%%%%%%%%%%%%%%%%%%%%%%%%%%%%%%%%%%%%%%%%%%%%%%%%%%%%%%%%%%%%%%%%%%%
%%%%%%%%%%%%%%%%%%%%%%%%%%%%%%%%%%%%%%%%%%%%%%%%%%%%%%%%%%%%%%%%%%%%%%%%%%%%%%%
% APPENDIX
%%%%%%%%%%%%%%%%%%%%%%%%%%%%%%%%%%%%%%%%%%%%%%%%%%%%%%%%%%%%%%%%%%%%%%%%%%%%%%%
%%%%%%%%%%%%%%%%%%%%%%%%%%%%%%%%%%%%%%%%%%%%%%%%%%%%%%%%%%%%%%%%%%%%%%%%%%%%%%%
\newpage
\appendix
\onecolumn

% !TEX root =  ../main.tex

\section{Extended Notation for the Appendix}\label{app:eNot}
Below is the notation used throughout the appendices.
The set of integers from $1$ to $n$ is $[n]$.
The vectors of zeros and ones of size $n$ are $\zeros_n$ and $\ones_n$.
The identity matrix of size $n \times n$ is $\identity_n$. 
The entry indexed by row $i$ and column $j$ is $a_{ij}=[\mathbf{A}]_{ij}$, $\mathrm{diag}(\mathbf{a})$ is the diagonal matrix having as diagonal the vector $\mathbf{a}$, while $\mathrm{diag}(\mathbf{A})$ is the diagonal of the matrix $\mathbf{A}$. 
The Frobenious norm is $\frob{\mathbf{A}}$.
The set of positive definite matrices over $\reall^{n\times n}$ is $\pd^n$.
That of symmetric ones as \sym{p}.
The column-wise vectorization of a matrix is \myvec{}. 
The Hadamard product is $\odot$.
Function composition is $\circ$.

Let $\mathcal{M}(\mathcal{X}^n)$ be the set of Borel measures over $\mathcal{X}^n \subseteq \reall^n$.
Given a measure $\mu^n \in \mathcal{M}(\mathcal{X}^n)$ and a measurable map $\varphi^{\V}$, $\mathcal{X}^n \ni \mathbf{x} \overset{\varphi^{\V}}{\longmapsto} \V^\top \mathbf{x} \in \mathcal{X}^m$, we denote by $\varphi^{\V}_{\#}(\mu^n) \coloneqq \mu^n(\varphi^{\V^{-1}}(\mathbf{x}))$ the pushforward measure $\mu^m \in \mathcal{M}(\mathcal{X}^m)$. 
The proximal mapping of $h$ at $\mathbf{A}$ is $\mathrm{prox}_{\lambda \, h(\cdot)}(\mathbf{A})= \argmin_{\V} h(\V) + 1/(2\lambda) \, \frob{\V - \mathbf{A}}^2$, $\lambda \in \reall^+$. 
The Euclidean gradient of a smooth $f$ is \Egrad{ }{f}, while the Riemannian one \Rgrad{ }{h}.
The Euclidean subgradient of a nonsmooth $h$ is \Esubgrad{ }{h}, the Riemannian instead \Rsubgrad{ }{h}.
% !TEX root =  ../main.tex

\section{Category Theory Essentials}\label{app:CT_background}

Below are fundamental definitions and examples that are instrumental in providing the necessary background on category theory to understand our work.
For a comprehensive overview of category theory see resources such as \citet{mac2013categories,perrone2024starting}.

\begin{definition}[Category]\label{def:category}
    A category $\mathsf{C}$ consists of
    \begin{squishlist}
        \item A collection of objects, viz. $X$ in $\mathsf{C}$,
        \item A collection of morphisms, viz. $f: X \rightarrow Y$ in $\mathsf{C}$;
    \end{squishlist}
    such that:
    \begin{squishlist}
        \item Each morphism $f$ has assigned two objects of the category called source and target, respectively,
        \item Each object $X$ has an identity morphism $\mathrm{id}_X: X \rightarrow X$,
        \item Given $f: X\rightarrow Y $ and $g:Y\rightarrow Z$, than the composition exists, $g \circ f = h: X \rightarrow Z$.
    \end{squishlist}
    These structures satisfy the following axioms:
    \begin{squishlist}
        \item (Unitality) $\forall f: X \rightarrow Y, \; f \circ \mathrm{id}_X=f \text{ and } \mathrm{id}_Y \circ f = f$;
        \item (Associativity) Given $f$, $g$, and $h$ such that the compositions hold, then $h \circ (g \circ f) = (h \circ g) \circ f$.
    \end{squishlist}
\end{definition}

\begin{example}
    The following are some notable examples of categories:
    \begin{squishlist}
        \item Indicate with \Poset a partial order set. \Poset can be viewed as the category whose objects are the elements $p$ and morphisms are order relations $p \leq p^\prime$. Notice that there is at most one morphism between two objects;
        \item \Vect is the category whose objects are real vector spaces and morphisms are linear maps;
        \item \Prob is the category whose objects are probability measure spaces and morphisms measurable maps.
    \end{squishlist}
\end{example}

Arrows between categories are called \emph{functors}, defined as follows:

\begin{definition}[Functor]\label{def:functor}
    Consider $\mathsf{C}$ and $\mathsf{D}$ categories. 
    A functor $F: \mathsf{C} \rightarrow \mathsf{D}$ consists of the following data:
    \begin{squishlist}
        \item For each object $X$ in \Ccat, an object $F(X)$ in $\mathsf{D}$;
        \item For each object morphism $f: X \rightarrow Y$ in \Ccat, a morphism $F(f): F(X) \rightarrow F(Y)$ in $\mathsf{D}$;
    \end{squishlist}
    such that the following axioms hold:
    \begin{squishlist}
        \item (Unitality) $\forall X $ in $ \Ccat, \; F(\mathrm{id}_X) \!=\! \mathrm{id}_{F(X)}$. In other words, the identity in \Ccat is mapped into the identity in $\mathsf{D}$.
        \item (Compositionality) $\forall f \text{ and } g $ in \Ccat such that the composition is defined, then $F(g \circ f) = F(g) \circ F(f)$. In other words, the composition in \Ccat is mapped into the composition in $\mathsf{D}$.
    \end{squishlist}
\end{definition}

To ease the notation, in the sequel, we use $F^X$ and $F^f$ to denote $F(X)$ and $F(f)$, respectively.
Finally, we can have arrows between functors as well, called \emph{natural transformations}:

\begin{definition}[Natural transformation]\label{def:nat_transf}
Consider two categories \Ccat and $\mathsf{D}$, and two functors between them, namely $F: \Ccat \rightarrow \mathsf{D}$ and $G: \Ccat \rightarrow \mathsf{D}$.
A natural transformation $\alpha: F \dotarrow G$ consists of the following data:
\begin{squishlist}
    \item For each object $X $ in \Ccat, a morphism $\alpha_{X}: F^X \rightarrow G^X$ in $\mathcal{D}$ called the component of $\alpha$ at $X$;
    \item For each morphism $f: X \rightarrow X^\prime$ in \Ccat, the following diagram commutes:
    \begin{equation}
        \begin{tikzcd}[row sep=1.5cm, column sep=1.5cm]
            F^X \arrow[r, "F^f"] \arrow[d, "\alpha_X"'] & F^{X^\prime} \arrow[d, "\alpha_{X^\prime}"] \\
            G^X \arrow[r, "G^f"'] & G^{X^\prime}
        \end{tikzcd}
    \end{equation}
\end{squishlist}
\end{definition}

A natural transformation can be thought of as a consistent system of arrows between two functors, invariant with respect to maps between the images of two functors.
% !TEX root =  ../main.tex

\section{Causality and Causal Abstraction}\label{app:CA}
This section provides additional definitions and examples related to SCMs and the CA framework.

\subsection{Mixing functions}
A set of structural function in a Markovian SCM can be reduced to a set of mixing functions dependent only on the exogenous variables. 

Given an SCM $\scm^n$, recall that $\myfunctional$ is a set of $n$ functional assignments which define the values $X_i=f_i(\parents_i, Z_i)$, $\forall \; i \in [n]$, with $ \parents_i \subseteq \myendogenous \setminus \{ X_i\}$. 
Denote by $\myexogenous^{\mathcal{A}_i} \subseteq \myexogenous \setminus \{ Z_i\}$ the set of exogenous variables corresponding to the ancestors of $X_i$, where $\ancestors_i \subseteq [n] \setminus \{i\}$.
According to \myfunctional, we can identify a set of mixing functions $\mymixing=\{m_1, \ldots, m_n\}$ such that the values of the endogenous random variables are equivalently expressed as $x_i=m_i\left(\{z_j\}_{j\in \ancestors_i}, z_i\right)$, $\forall \; i \in [n]$. 

Further, we can also characterize the product probability measure implied by the SCM purely in terms of the exogenous variables, viz. $\chi^\myendogenous=\prod_{i \in [n]} P\left(X_i | \myexogenous^{\ancestors_i}, Z_i\right)$. 

As an example, consider a causal relation $x_1 \rightarrow x_2$.
In the linear SCM with additive noise \cite{bollen1989structural,shimizu2006linear} setting we have
\begin{equation}
    \begin{cases}
        x_1=z_1\,,\\
        x_2=c_{2,1} x_1 + z_2 = c_{2,1} z_1 + z_2\,.          
    \end{cases}
\end{equation}
Again, for the post-nonlinear model \cite{zhang2009identifiability}, we get
\begin{equation}
    \begin{cases}
        x_1=f_{1,1}(z_1)=m_{1}(z_1)\,,\\
        \begin{aligned}
            x_2&=f_{2,2}(f_{2,1}(x_1) + z_2)\\
            &=f_{2,2}(f_{2,1}\circ f_{1,1}(z_1)+z_2)\\
            &= m_{2}(z_1, z_2)\,. 
        \end{aligned}    
    \end{cases}
\end{equation}

\subsection{Interventional consistency}
A typical requirement imposed on CA maps is that they act in a consistent way with respect to interventions \cite{rischel2020category}. 

\begin{definition}[Interventional consistency]\label{def:interv_consistency}
    Given an $\abst$-abstraction between $\mathsf{M}^\ell$ and $\mathsf{M}^h$ and a set $\mathcal{I}$ of hard interventions on $\mathcal{X}^h_{\mathcal{I}}\subseteq \datahigh$, the abstraction is \emph{interventionally consistent} if, for any intervention in $\mathcal{I}$ and for every set of target variable $\mathcal{Y}^h_{\mathcal{I}}\subseteq \datahigh \setminus \mathcal{X}^h_{\mathcal{I}}$, the following diagram commutes:

    \begin{center}
    \begin{tikzpicture}[shorten >=1pt, auto, node distance=1cm, thick, scale=1.0, every node/.style={scale=1.0}]
    
    \node[] (M0_0) at (0,0) {$\dom{\mathcal{X}^\ell_\mathcal{I}}$};
    \node[] (M0_1) at (4,0) {$\dom{\mathcal{Y}^\ell_\mathcal{I}}$};
    \node[] (M1_0) at (0,-1.75) {$\dom{\mathcal{X}^h_\mathcal{I}}$};
    \node[] (M1_1) at (4,-1.75) {$\dom{\mathcal{Y}^h_\mathcal{I}}$};
    
    \draw[->, draw=mypurple]  (M0_0) to node[above,font=\small]{$P(\mathcal{Y}^\ell_\mathcal{I} \vert \doint(\mathcal{X}^\ell_\mathcal{I}))$} (M0_1);
    \draw[->, draw=cyan]  (M1_0) to node[below,font=\small]{$P(\mathcal{Y}^h_\mathcal{I}\vert \doint(\mathcal{X}^h_\mathcal{I}))$} (M1_1);
    \draw[->, draw=cyan]  (M0_0) to node[left,font=\small]{$\alphamap{\mathcal{X}^h_\mathcal{I}}$} (M1_0);
    \draw[->, draw=mypurple]  (M0_1) to node[right,font=\small]{$\alphamap{\mathcal{Y}^h_\mathcal{I}}$} (M1_1);
    
    \end{tikzpicture}
    \end{center}
    or equivalently,
    \begin{equation}%\label{eq:abserr}
        \alphamap{\mathcal{Y}^h_\mathcal{I}}(P(\mathcal{Y}^\ell_\mathcal{I} \vert \doint(\mathcal{X}^\ell_\mathcal{I}))) =  P(\mathcal{Y}^h_\mathcal{I} \vert \alphamap{\mathcal{X}^h_\mathcal{I}}(\doint(\mathcal{X}^h_\mathcal{I}))),
    \end{equation}

    where $\mathcal{X}^\ell_\mathcal{I}=\amap^{-1}(\mathcal{X}^h_\mathcal{I})$ and $\mathcal{Y}^\ell_\mathcal{I}=\amap^{-1}(\mathcal{Y}^h_\mathcal{I})$.  
\end{definition} 
Essentially, commutativity suggests that we obtain equivalent intervention outcomes in two different ways: \mypurple{\emph{(i)}} either by intervening on the low-level model and then abstracting or, \cyan{\emph{(ii)}} by abstracting to the high-level model and then intervening in an equivalent fashion.

\subsection{Linear abstraction}
The class of abstractions may be restricted by an assumption of the form of the abstraction map \cite{massidda2024learning}:

\begin{definition}[Linear abstraction]\label{def:lca}
Given an $\abst$-abstraction $\abst = \langle \Rset, \amap, \alphamap{} \rangle$ from $\mathsf{M}^\ell$ to $\mathsf{M}^h$, the abstraction is linear if $\alphamap{} = \V^\top \in \reall^{h \times \ell}$.
\end{definition}

\subsection{Constructive abstraction}
A particularly well-behaved form of abstraction is a constructive abstraction. In the context of the $\tau$-abstraction framework \cite{beckers2019abstracting}, a constructive abstraction is an abstraction such that: \emph{(i)} the variable mapping defines a clustering of the low-level variables (\emph{constructivity}); \emph{(ii)} consistency holds for all high-level interventions (\emph{strongness}); \emph{(iii)} the value map is surjective and it implies a map between exogenous values and between interventions ($\tau$-\emph{abstraction}). In the $\alpha$-framework a few of these properties hold by construction; thus, we define a constructive abstraction as \cite{schooltink2024aligning}: 

\begin{definition}[Constructive abstraction]\label{def:cca}
Given an $\abst$-abstraction $\abst = \langle \Rset, \amap, \alphamap{} \rangle$ from $\mathsf{M}^\ell$ to $\mathsf{M}^h$, the abstraction is constructive if the abstraction is interventionally consistent and implies the existence of a map ${\alphamap{}}_U: \myexogenous^\ell \rightarrow \myexogenous^h$ between exogenous variables.
\end{definition}

\subsection{Measure-theoretic definition of an SCM}\label{subsec:SCM_prob}

Any SCM can be defined in terms of the probability measure spaces underlying it:

\begin{definition}[Measure-theoretic SCM]\label{def:SCM_prob}
    A (Markovian) SCM $\scm^n$ is a triple $\langle (\myexogenousvals,\, \Sigma_{\myexogenousvals}, \zeta), \, (\myendogenousvals,\, \Sigma_{\myendogenousvals}, \chi)\, , \mymixing \rangle$  where:
    \begin{squishlist}
        \item $(\myexogenousvals,\, \Sigma_{\myexogenousvals}, \zeta)$ is a probability space associated with exogenous variables. Specifically, it consists of the product probability measure $\zeta=\zeta_1 \times \ldots \times \zeta_n$ on the product measurable space $(\myexogenousvals,\, \Sigma_{\myexogenousvals})$ where $\myexogenousvals = \myexogenousvals_1 \times \ldots \times \myexogenousvals_n$ is a product set and $ \Sigma_{\myexogenousvals} =  \Sigma_{\myexogenousvals_1} \otimes \ldots \otimes \Sigma_{\myexogenousvals_n}$ is a product $\sigma$-algebra.
        The probability measure is such that, for each $\mathcal{W}_1 \in \Sigma_{\myexogenousvals_1}, \ldots, \, \mathcal{W}_n \in \Sigma_{\myexogenousvals_n}$, we have 
        \begin{equation}
            \zeta_1 \times \ldots \times \zeta_n (\mathcal{W}_1 \times \ldots \times \mathcal{W}_n)=\zeta_1(\mathcal{W}_1) \times \ldots \times \zeta_n(\mathcal{W}_n)\,;
        \end{equation}
        \item $(\myendogenousvals,\, \Sigma_{\myendogenousvals}, \chi)$ is a probability space associated with endogenous variables consisting of a joint probability measure $\chi$ on the product measurable space $(\myendogenousvals,\, \Sigma_{\myendogenousvals})=(\myendogenousvals_1 \times \ldots \times \myendogenousvals_n,\, \Sigma_{\myendogenousvals_1} \otimes \ldots \otimes \Sigma_{\myendogenousvals_n})$;
        \item \mymixing is a set of $n$ mixing measurable maps $\varphi^{m_i}$ (cf. \Cref{def:SCM}) such that the joint probability measure $\chi$ factorizes as 
        \begin{equation}
            \chi = \bigtimes_{i=1}^n \varphi^{m_i}_{\#}\left(\mu_i \left( \myexogenousvals_i \times \myexogenousvals^{\ancestors_i} \right) \right)\,;       
        \end{equation}
        where $\myexogenousvals^{\ancestors_i}=\bigtimes_{j \in \ancestors_i} \myexogenousvals_j$, and, denoting by $\Sigma_{\myexogenousvals^{\ancestors_i}}=\bigotimes_{j \in \ancestors_i} \Sigma_{\myexogenousvals_j}$, $\mu_i$ is a probability measure on the product measurable space $\left( \myexogenousvals_i \times \myexogenousvals^{\ancestors_i},\, \Sigma_{\myexogenousvals_i} \otimes \Sigma_{\myexogenousvals^{\ancestors_i}} \right)$.
    \end{squishlist}
\end{definition}
% !TEX root =  ../main.tex

\section{Category-theoretic Formalization}\label{app:CT}
This section extends the category-theoretic formalization introduced in the main paper to intervened models and abstraction.

Recall the category-theoretic definition from the main paper:
\begin{definition}[Category-theoretic SCM]\label{def:SCM_ct_app}
    An SCM is a functor $\scm^n: \Index \rightarrow \Prob$, mapping the source node of \Index to the probability space associated with the exogenous variables $(\myexogenousvals,\, \Sigma_{\myexogenousvals}, \zeta)$, the sink node of \Index to the probability space associated with the endogenous variables $(\myendogenousvals,\, \Sigma_{\myendogenousvals}, \chi)$, and the only edge of \Index to the measurable map induced by the set \myfunctional of functional assignments.
\end{definition}

\cref{fig:functor} offers a depiction of an SCM as a functor.

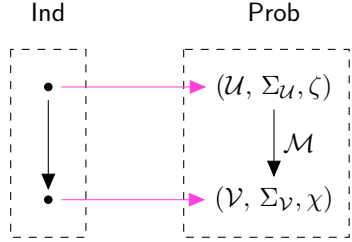
\begin{figure}
    \centering
    \begin{tikzpicture}[]

    \draw[dashed] (-0.5, 2) rectangle (.5, -0.5);
    \draw[dashed] (1.8, 2) rectangle (4, -0.5);

    \node at (0, 2.5) {\Index};
    \node at (3, 2.5) {\Prob};
    
    \node[circle, draw, fill,inner sep=1pt] (A) at (0, 1.5) {};
    \node[circle, draw, fill,inner sep=1pt] (B) at (0, 0) {};
    \node (C) at (3, 1.5) {$(\myexogenousvals,\, \Sigma_{\myexogenousvals}, \zeta)$};
    \node (D) at (3, 0) {$(\myendogenousvals,\, \Sigma_{\myendogenousvals}, \chi)$};

    \coordinate (A1shift) at ([yshift=-5pt]A);
    \draw[->,shorten >=2pt] (A1shift) -- (B);

    \coordinate (A2shift) at ([xshift=5pt]A);
    \draw[->, mypurple] (A2shift) -- (C);

    \coordinate (B1shift) at ([xshift=5pt]B);
    \draw[->, mypurple] (B1shift) -- (D);
    
    \draw[->] (C) -- node[right] {\mymixing} (D);
    
    \end{tikzpicture}
    \caption{An SCM is a functor (purple arrows) from \Index (left) to \Prob (right).}
    \label{fig:functor}
\end{figure}

In the same vein, we can have a functorial representation for intervened SCMs as well. However, instead of representing directly the post-interventional model $\scm^n_\iota$ as in Def. \ref{def:SCM_ct_app}, we will adopt a representation that is closer to the intervention operator itself. First, notice that, whenever the domains of the variables of an SCM are continuous, we can represent an intervention as a measurable map by relying on the truncation formula \cite{pearl2009causality}:
\begin{lemma}
    Given a continuous Markovian  SCM $\scm^n = \langle (\myexogenousvals,\, \Sigma_{\myexogenousvals}, \zeta), \, (\myendogenousvals,\, \Sigma_{\myendogenousvals}, \chi)\, , \mymixing \rangle$ and an intervention $\iota$ on $\scm^n$, there exists a measurable map $\varphi_\iota$ from the probability space of endogenous variables of the pre-interventional SCM $(\myendogenousvals,\, \Sigma_{\myendogenousvals}, \chi)$ to the probability space of endogenous variables of the post-interventional SCM $(\myendogenousvals_\iota,\, \Sigma_{\myendogenousvals_\iota}, \chi_\iota)$.
\end{lemma}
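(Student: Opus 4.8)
The plan is to exhibit $\varphi_\iota$ as the composition of the post-interventional generative map with a measurable inverse of the pre-interventional one, so that the required pushforward identity $(\varphi_\iota)_{\#}\chi = \chi_\iota$ becomes a one-line consequence. Following the measure-theoretic description in \Cref{def:SCM_prob}, I would first write $\chi = \Phi_{\#}\zeta$, where $\Phi \coloneqq \varphi^{\mymixing}\colon \myexogenousvals \to \myendogenousvals$ is the joint mixing map obtained by composing the structural assignments along a topological order of $\mathcal{G}_{\scm^n}$; $\Phi$ is measurable since each $f_i$ is measurable and finite compositions of measurable maps are measurable. The crucial observation is that the hard intervention $\iota$ (\Cref{def:intervention}) leaves the exogenous probability space $(\myexogenousvals,\Sigma_{\myexogenousvals},\zeta)$ untouched and only overwrites the mechanisms of the intervened nodes with constants; this produces a new family of mixing maps $\mymixing_\iota$ and a measurable map $\Phi_\iota \coloneqq \varphi^{\mymixing_\iota}\colon \myexogenousvals \to \myendogenousvals_\iota$ with $\chi_\iota = (\Phi_\iota)_{\#}\zeta$.

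Given these two maps sharing the common source $(\myexogenousvals,\Sigma_{\myexogenousvals},\zeta)$, I would set $\varphi_\iota \coloneqq \Phi_\iota \circ \Phi^{-1}$. Correctness is then immediate: $(\varphi_\iota)_{\#}\chi = (\Phi_\iota)_{\#}(\Phi^{-1})_{\#}\Phi_{\#}\zeta = (\Phi_\iota)_{\#}\zeta = \chi_\iota$, since $\Phi^{-1}\circ\Phi = \mathrm{id}_{\myexogenousvals}$ holds $\zeta$-almost everywhere; hence $\varphi_\iota$ is a morphism $(\myendogenousvals,\Sigma_{\myendogenousvals},\chi)\to(\myendogenousvals_\iota,\Sigma_{\myendogenousvals_\iota},\chi_\iota)$ in $\Prob$. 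Spelled out node by node in topological order, $\varphi_\iota$ keeps the values of the non-descendants of the intervened set, assigns the constants $x^\iota_i$ to the intervened coordinates, and recomputes each descendant by recovering its exogenous noise from its observed value and its already-updated parents and then re-evaluating its mechanism. This is precisely the recursive content of the truncation formula, and it is what makes the resulting map causally meaningful rather than an arbitrary transport.

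The main obstacle is the construction of the measurable inverse $\Phi^{-1}$ (equivalently, the measurable recovery of the exogenous noise from the endogenous observation), which is exactly where the continuity hypothesis enters. Under the standard continuous additive-noise assumption, each mechanism $X_i = f_i(\parents_i, Z_i)$ is invertible in $Z_i$, so one recovers the noise measurably and the recursion is well defined; the truncation formula then guarantees that the factorized post-intervention density coincides with that of $(\Phi_\iota)_{\#}\zeta$. I would handle invertibility carefully by restricting $\Phi$ to the support of $\chi$ and arguing $\chi$-almost everywhere, since the mechanisms need only be invertible off a null set. Finally, I note that the bare existence of \emph{some} measurable $\varphi_\iota$ with $(\varphi_\iota)_{\#}\chi=\chi_\iota$ is automatic whenever $\chi$ is atomless on a standard Borel space; the genuine content of the lemma, and the reason for invoking the truncation formula, is that the intervention admits this canonical, structure-respecting representative.
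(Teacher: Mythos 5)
Your route is genuinely different from the paper's. The paper never passes through the exogenous space: it works directly at the endogenous level, using the truncation formula to write the post-interventional measure $\chi^{\iota}$, and then builds $\varphi_\iota$ coordinate-wise --- the constant map $x_i^{\iota}$ on each intervened coordinate and, for each non-intervened $X_i$, a transport map from $\chi_i$ to $\chi_i^{\iota}$, whose existence is exactly what the continuity hypothesis is invoked for. Your construction $\varphi_\iota = \Phi_\iota \circ \Phi^{-1}$ is more structural and, when it applies, more faithful to the causal semantics (it preserves the realization of the exogenous noise), which is a real advantage over an arbitrary transport map.

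However, your headline construction rests on the measurable invertibility of the joint mixing map $\Phi$, and this does not follow from the lemma's hypotheses: a continuous Markovian SCM need not have mechanisms invertible in the noise (e.g.\ $X_1 = Z_1^2$ with Gaussian $Z_1$ is continuous but the noise is not recoverable from the observation), so the ``standard continuous additive-noise assumption'' you import is an extra hypothesis rather than something you are entitled to. You do flag this, and your closing remark --- that some measurable $\varphi_\iota$ with $(\varphi_\iota)_{\#}\chi = \chi_\iota$ exists whenever $\chi$ is atomless on a standard Borel space --- is what actually proves the lemma as stated; it is essentially the same existence fact the paper leans on, phrased there as solving a measure transport problem in the continuous case. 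So the statement is established, but by your fallback rather than by your primary argument; to keep the canonical $\Phi_\iota \circ \Phi^{-1}$ map you would need to add an invertible-mechanism hypothesis to the lemma or explicitly restrict to the almost-everywhere setting in which noise recovery is possible.
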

\begin{proof}
    Given a Markovian SCM $\scm^n$, the probability measure $\chi$ over the measure space of endogenous variables $(\myendogenousvals,\, \Sigma_{\myendogenousvals}, \chi)$ can be expressed by through the factorization over the endogenous variables $\chi=\prod_{i \in [n]} P\left(X_i |  \parents_i,Z_i\right)$, with $\chi_i=P\left(X_i |  \parents_i,Z_i\right)$.  
    Given intervention $\iota=\operatorname{do}(\myendogenous^{\iota} = \mathbf{x}^{\iota})$ on $\scm^n$, the new post-interventional measure $\chi^\iota$ can be computed through the truncation formula \cite{pearl2009causality}:
    \begin{equation}\label{eq:truncation}
    \chi^{\iota}=\begin{cases}
    \prod_{i\in[n],X_{i}\notin \myendogenous^{\iota}}P(X_i|\parents_i,Z_i) & \textrm{if }\myendogenous^{\iota} = \mathbf{x}^{\iota}\\
    0 & \textrm{if }\myendogenous^{\iota} \neq \mathbf{x}^{\iota}
    \end{cases}
    \end{equation}
    We can now define a measurable map $\varphi^\iota$ connecting $(\myendogenousvals,\, \Sigma_{\myendogenousvals}, \chi)$ and $(\myendogenousvals,\, \Sigma_{\myendogenousvals}, \chi^\iota)$ such that $\varphi^\iota_{\#}(\chi)=\chi^\iota$. Specifically, for each $X_i \in \myendogenous^{\iota}$, $\varphi(X_i) = x_i^\iota$, thus guaranteeing the distribution on the second line of \cref{eq:truncation}; for each $X_i \notin \myendogenous^{\iota}$, denoting by $\chi_i^\iota=P(X_i|\parents_i,Z_i)$ evaluated at $\myendogenous^{\iota} = \mathbf{x}^{\iota}$ as in the first line of \Cref{eq:truncation}, we solve a measure transport problem \cite{Marzouk_2016} from 
    $\chi_i$ to $\chi_i^\iota$
    which, in the continuous case, guarantees a transport map over the domains that satisfies the distribution on the first line of \cref{eq:truncation}.
\end{proof}

We can then encode an intervened model as follows: 

\begin{definition}[Category-theoretic post-interventional SCM]\label{def:SCM_ct_intervention}
    A post-interventional SCM is a functor $\scm^n_\iota: \Index \rightarrow \Prob$, where the functor maps the source node of \Index to the probability space associated with the endogenous variables of the pre-interventional SCM $(\myendogenousvals,\, \Sigma_{\myendogenousvals}, \chi)$, the sink node of \Index to the probability space associated with the endogenous variables of the post-interventional SCM $(\myendogenousvals_\iota,\, \Sigma_{\myendogenousvals_\iota}, \chi_\iota)$, and the only edge of \Index to the function $\varphi_\iota$ encoding the intervention $\iota$.
\end{definition} 

This construction gives rise to the structure in Fig. \ref{fig:functor3} and an immediate category-theory expression of abstraction equivalent to Def.\ref{def:abstraction}:
\begin{lemma}\label{lem:interventional_mixing}
    An interventionally consistent abstraction is a singular natural transformation $\abst$, that is, a morphism $\alphamap{\myendogenousvals}$ in \Prob, that, for all intervention in $\mathcal{I}$ guarantees the commutativity of the diagrams constructed from Fig. \ref{fig:functor2}. 
\end{lemma}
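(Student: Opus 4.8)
The plan is to prove this characterization by diagram matching: I will show that the naturality square demanded by the definition of a natural transformation, once the objects and arrows are correctly identified, is literally the commutative square defining interventional consistency in \Cref{def:interv_consistency}. Thus the two notions coincide, and interventional consistency for all $\iota \in \mathcal{I}$ is equivalent to the singular natural transformation $\abst$ making every interventional diagram commute.

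First I would fix an intervention $\iota \in \mathcal{I}$ and invoke \Cref{def:SCM_ct_intervention} to obtain the post-interventional functors $\scmlow_\iota, \scmhigh_\iota : \Index \to \Prob$. Each sends the source node of \Index to the pre-interventional endogenous probability space, the sink node to the post-interventional one, and the single edge to the intervention map $\varphi_\iota$. The key point, furnished by the preceding truncation-formula lemma, is that this edge map realizes the interventional conditional $P(\mathcal{Y}_\mathcal{I} \mid \doint(\mathcal{X}_\mathcal{I}))$ at each level; hence the top and bottom arrows of the square in \Cref{def:interv_consistency} are exactly the functor edges of $\scmlow_\iota$ and $\scmhigh_\iota$.

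Second I would unpack a natural transformation $\abst : \scmlow_\iota \dotarrow \scmhigh_\iota$ via \Cref{def:nat_transf}: it assigns one morphism of \Prob at the source node and one at the sink node of \Index. I would argue that both arise from the single endogenous abstraction morphism $\alphamap{\myendogenousvals}$: by modularity $\alphamap{\myendogenousvals}=\bigotimes_i \alphamap{X_i^h}$ (cf.\ \Cref{def:abstraction}), restricting to the intervened block gives the source component $\alphamap{\mathcal{X}^h_\mathcal{I}}$ and restricting to the target block gives the sink component $\alphamap{\mathcal{Y}^h_\mathcal{I}}$. This is precisely the sense in which the transformation is \emph{singular}: a single morphism $\alphamap{\myendogenousvals}$ furnishes all components, and these are exactly the left and right arrows of the square in \Cref{def:interv_consistency}.

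Finally I would write out the naturality condition for the edge of \Index and observe that, under these identifications, it is the commuting diagram of \Cref{def:interv_consistency}; demanding it for every $\iota \in \mathcal{I}$ is therefore equivalent to interventional consistency. The main obstacle is the careful bookkeeping in the second step: verifying that the restrictions of $\alphamap{\myendogenousvals}$ to the intervened and target variable blocks are well defined and coincide with $\alphamap{\mathcal{X}^h_\mathcal{I}}$ and $\alphamap{\mathcal{Y}^h_\mathcal{I}}$ (which hinges on modularity), while simultaneously confirming via the truncation-formula lemma that $\varphi_\iota$ on each level implements the correct interventional distribution, so that the two horizontal arrows genuinely match.
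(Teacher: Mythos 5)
Your proposal is correct and follows essentially the same route as the paper's proof: both arguments identify the horizontal arrows of the naturality square with the intervention maps furnished by the truncation-formula lemma, identify the vertical arrows with (components of) the single morphism $\alphamap{\myendogenousvals}$, and then observe that the naturality condition is literally the commuting square of \Cref{def:interv_consistency}, quantified over all $\iota \in \mathcal{I}$. The only cosmetic difference is that you handle the block bookkeeping via restrictions of $\alphamap{\myendogenousvals}$ to the intervened and target variables, where the paper instead assumes without loss of generality that $\mathcal{Y}_{\mathcal{I}}$ comprises all non-intervened variables so that the relevant distributions become the full post-interventional measures.
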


\begin{proof}
Recall the definition of interventional consistency in \Cref{def:interv_consistency}:
\begin{equation}%\label{eq:abserr}
        \alphamap{\mathcal{Y}^h_\mathcal{I}}(P(\mathcal{Y}^\ell_\mathcal{I} \vert \doint(\mathcal{X}^\ell_\mathcal{I}))) =  P(\mathcal{Y}^h_\mathcal{I} \vert \alphamap{\mathcal{X}^h_\mathcal{I}}(\doint(\mathcal{X}^h_\mathcal{I}))).
\end{equation}
Let us relate this definition to our categorical notation. First, $\alphamap{\mathcal{Y}^h_\mathcal{I}}$ and $\alphamap{\mathcal{X}^h_\mathcal{I}}$ are components of the abstraction map $\alphamap{}$; in the categorical notation, this map correspond to $\alphamap{\myendogenousvals}$. 
The probability distribution $P(\mathcal{Y}^\ell_\mathcal{I} \vert \doint(\mathcal{X}^\ell_\mathcal{I}))$ is a distribution in the low-level model; with no loss of generality, assuming $\mathcal{Y}^\ell_\mathcal{I}$ to encompass all the non-intervened variables, this distribution correspond to the measure $\chi_\iota^\ell$; furthermore, the interventional measure $\chi_\iota^\ell$ can be obtained through the pushforward of the observational measure $\chi^\ell$ via the interventional mixing functions $\mymixing_\iota^\ell$, as by \cref{lem:interventional_mixing}.
Finally, the probability distribution $P(\mathcal{Y}^h_\mathcal{I} \vert \alphamap{\mathcal{X}^h_\mathcal{I}}(\doint(\mathcal{X}^h_\mathcal{I}))$ is a distribution in the high-level model; again, with no loss of generality, assuming $\mathcal{Y}^h_\mathcal{I}$ to encompass all the non-intervened variables, this distribution correspond to the measure $\chi_\kappa^h$, where $\kappa$ is the abstraction of the terms in $\iota$. Also, as before, the interventional measure $\chi_\kappa^h$ can be obtained through the pushforward of the observational measure $\chi^h$ via the interventional mixing functions $\mymixing_\kappa^h$, thanks to \cref{lem:interventional_mixing}.
We then obtain a rewriting of abstraction as:
\begin{equation}
        \alphamap{\myendogenousvals} \circ \mymixing_\iota^\ell =  \mymixing_\kappa^h \circ \alphamap{\myendogenousvals}.
\end{equation}
corresponding to the commutativity of the right diagram in \cref{fig:functor3}, for all interventions. 
\end{proof}

\begin{figure*}
    \centering
    \begin{tikzpicture}[]

    \draw[dashed] (-0.5, 2.8) rectangle (.5, -0.5);
    \draw[dashed] (1.5, 2.8) rectangle (10.5, -0.5);

    \node at (0, 3.2) {\Index};
    \node at (6, 3.2) {\Prob};
    
    \node[circle, draw, fill,inner sep=1pt] (A) at (0, 2) {};
    \node[circle, draw, fill,inner sep=1pt] (B) at (0, 0) {};
    
    \node (C) at (3, 2) {$(\myexogenousvals^\ell,\, \Sigma_{\myexogenousvals^\ell}, \zeta^\ell)$};
    \node (D) at (6, 2) {$(\myendogenousvals^\ell,\, \Sigma_{\myendogenousvals^\ell}, \chi^\ell)$};
    \node (E) at (9, 2) {$(\myendogenousvals^\ell_\iota,\, \Sigma_{\myendogenousvals^\ell_\iota}, \chi^\ell_\iota)$};
    \draw[->] (C) -- node[above] {$\mymixing^\ell$} (D);
    \draw[->] (D) -- node[above] {$\mymixing^\ell_\iota$} (E);

    \node (F) at (3, 0) {$(\myexogenousvals^h,\, \Sigma_{\myexogenousvals^h}, \zeta^h)$};
    \node (G) at (6, 0) {$(\myendogenousvals^h,\, \Sigma_{\myendogenousvals^h}, \chi^h)$};
    \node (H) at (9, 0) {$(\myendogenousvals^h_\kappa,\, \Sigma_{\myendogenousvals^h_\kappa}, \chi^\ell_\kappa)$};
    \draw[->] (F) -- node[above] {$\mymixing^h$} (G);
    \draw[->] (G) -- node[above] {$\mymixing^h_\kappa$} (H);

    \coordinate (A1shift) at ([yshift=-5pt]A);
    \draw[->,shorten >=2pt] (A1shift) -- (B);

    \draw[blue, rounded corners=10pt]($(C.west)+(0,0.6)$) rectangle ($(D.east)+(0.1,-0.3)$);
    \draw[->, blue] (0.2,1) -- (4,1.7);

    \draw[cyan, rounded corners=10pt]($(D.west)+(0,0.6)$) rectangle ($(E.east)+(0.1,-0.3)$);
    \draw[->, cyan] (0.2,1) -- (8,1.7);

    \draw[red, rounded corners=10pt]($(F.west)+(0,0.6)$) rectangle ($(G.east)+(0.1,-0.3)$);
    \draw[->, red] (0.2,1) -- (4,0.6);

    \draw[orange, rounded corners=10pt]($(G.west)+(0,0.6)$) rectangle ($(H.east)+(0.1,-0.3)$);
    \draw[->, orange] (0.2,1) -- (8,0.6);

    \draw[->, dashed] (C) -- node[right] {$\alphamap{\myexogenousvals^{h}}$} (F);
    \draw[->, dashed] (D) -- node[right] {$\alphamap{\myendogenousvals^{h}}$} (G);
    \draw[->, dashed] (E) -- node[right] {$\alphamap{\myendogenousvals^h}$} (H);
    
    \end{tikzpicture}
    \caption{Representation of {\color{blue} $\scm^\ell$} (blue), {\color{cyan} $\scm^\ell_\iota$} (cyan), {\color{red} $\scm^h$} (red), {\color{orange} $\scm^h_\iota$} (orange) as functors. An abstraction is just a natural transformation, that is, a set of commuting arrows in \Prob (dashed black). Notice two commuting diagrams in \Prob: the first observational one rooted on the exogenous variables ($\mymixing^h \circ \alphamap{\myexogenousvals^{h}} = \alphamap{\myendogenousvals^{h}} \circ \mymixing^\ell$), the second interventional one connecting observational and interventional model ($\mymixing^h_\kappa \circ \alphamap{\myendogenousvals^{h}} = \alphamap{\myendogenousvals^{h}} \circ \mymixing^\ell_\iota$).}
    \label{fig:functor3}
\end{figure*}

% !TEX root =  ../main.tex

\section{Stiefel Manifold}\label{app:stiefel}
We now provide a short review of the Stiefel manifold, referring the interested reader to \cite{absil2008optimization,boumal2023introduction} for a comprehensive discussion.

Given $\ell, \, h \in \nat, \, h < \ell$, the Stiefel manifold is the set of $\ell \times h$ matrices with orthonormal columns, mathematically
\begin{equation}\label{eq:stiefel_manifold}
    \stiefel{\ell}{h} \coloneqq \{ \V \in \rmatdim \, \mid \, \V^\top\V = \identity_h \}\,.
\end{equation}
Consider the function $g: \rmatdim \rightarrow \sym{h}$, $g(\V) \coloneqq \V^\top \V - \identity^h$.
It is well-known that $g$ is a generating function for \stiefel{\ell}{h}, thus making it an embedded submanifold of $\rmatdim$, with dimension $\mathrm{dim}\, \rmatdim - \mathrm{dim}\, \sym{h} = \ell h - h(h+1)/2$.
Given a point of the manifold $\V$, the tangent space to \stiefel{\ell}{h} can be defined implicitly as the kernel of the differential of $g$ at $\V$,
\begin{equation}\label{eq:Stiefel_t_space}
    \tangentspace{\V}{\stiefel{\ell}{h}}\coloneqq \{ \G \in \rmatdim \, | \, \V^\top \G + \G^\top \V = 0 \}\,.
\end{equation}
We consider the Riemannian metric as the restriction of the Eucliden product between two matrices in $\rmatdim$ to $\stiefel{\ell}{h}$.
Accordingly, given $\mathbf{A}, \, \mathbf{B} \in \tangentspace{\V}{\stiefel{\ell}{h}}$, we have $\Eprod{\mathbf{A}}{\mathbf{B}}{\V}= \Tr \mathbf{A}^\top \mathbf{B}$.
The tangent space linearizes the manifold around $\V$, then, we can move away from $\V$ along the directions in \tangentspace{\V}{\stiefel{\ell}{h}}. 
However, to make such a movement smooth along the manifold, we employ the \emph{retraction map} $\Retr{}{\V}: \tangentspace{\V}{\stiefel{\ell}{h}} \rightarrow \stiefel{\ell}{h}$.
The retraction has to satisfy the following conditions
\begin{equation}
        \text{\emph{(i)}} \; \Retr{}{\V}{\zeros_{\ell \times h}}=\V, \quad \text{and} \quad \text{\emph{(ii)}} \; \lim_{\G \rightarrow \zeros_{\ell \times h}} \frac{\frob{\Retr{}{\V}{\G} - (\V + \G)}}{\frob{\G}} = 0\,.
\end{equation}

Among the canonical retractions, we have 
\begin{equation}\label{eq:stiefel_retractions}
    \begin{aligned}
        \Retr{\mathrm{QR}}{\V}{\G} &= \mathrm{qf}(\V + \G)\,, \quad \text{[QR retraction]}\\
        \Retr{\mathrm{Polar}}{\V}{\G} &= (\V + \G)(\identity^h - \V^\top \V)^{\frac{1}{2}}\,, \quad \text{[Polar retraction]}\\
        \Retr{\mathrm{Caley}}{\V}{\G} &= (\identity^\ell -\frac{1}{2}\mathbf{W}(\G))^{-1}(\identity^\ell +\frac{1}{2}\mathbf{W}(\G))\V \,; \quad \text{[Caley retraction]}
    \end{aligned}
\end{equation}
where $\mathrm{qf}$ indicates the $\mathbf{Q}$ factor of the $\mathrm{QR}$ decomposition, and $\mathbf{W}(\G)=(\identity^\ell - \frac{1}{2}\V\V^\top)\G \V^\top- \V\G^\top (\identity^\ell - \frac{1}{2}\V\V^\top)$. 

Finally, the normal space to the manifold at $\V$ has the following explicit form
\begin{equation}\label{eq:explicit_normal_space}
    \normalspace{\V}{\stiefel{\ell}{h}}\coloneqq \{ \V \mathbf{S} \, \mid \, \mathbf{S} \in \sym{h} \}\,.
\end{equation}

Starting from \Cref{eq:explicit_normal_space}, the orthogonal projection to \tangentspace{\V}{\stiefel{\ell}{h}}, namely \projectiontangentspace{\V}{}, has to be such that $\G - \projectiontangentspace{\V}{\G}$ lies onto \normalspace{\V}{\stiefel{\ell}{h}}, i.e.,
\begin{equation}\label{eq:stiefel_proj_diff}
    \G - \projectiontangentspace{\V}{\G} = \V \mathbf{S}\,.
\end{equation}

Plugging \Cref{eq:stiefel_proj_diff} into \Cref{eq:Stiefel_t_space}, it can be derived that
\begin{equation}\label{eq:stiefel_projection}
    \projectiontangentspace{\V}{\G} = \left(\identity^\ell - \V \V^{\top} \right) \G + \V \frac{ \left(\V^{\top} \G - \G^\top \V \right)}{2} \,.
\end{equation}

Finally, for $\stiefel{\ell}{h}$ (and in general for Riemannian submanifolds) the Riemannian gradient of $f$ at $\V$ is the orthogonal projection of \Egrad{\V}{f} to \tangentspace{\V}{\stiefel{\ell}{h}}.
Mathematically, starting from \Cref{eq:stiefel_projection}, we have 
\begin{equation}\label{eq:stiefel_grad}
    \Rgrad{\V}{f}=\projectiontangentspace{\V}{\Egrad{\V}{f}} \,.    
\end{equation}
% !TEX root =  ../main.tex

\section{Information-theoretic Distance on Spaces of Different Dimensionality} \label{app:infotheory}

Two types of distances can be defined as follows using an affine map $\varphi^{\V,b}$ \cite{cai2022distances}.

\begin{definition}[Embedding and projection distances]
    Let $\ell, h \in \mathbb{N}$ with $h \leq \ell$, and let $\varphi^{\V,b}  = \V^\top x+b :\reall^\ell \to \reall^h$ be an affine map with $\V \in  \stiefel{\ell}{h}$ and $b\in \reall^{\ell}$.
    For any measures $\chi^h \in \mathcal{M}(\reall^h)$ and $\chi^{\ell} \in \mathcal{M}(\reall^\ell)$, the \emph{set of embeddings} of $\chi^h$ into $\reall^\ell$ is the set of of $\ell$-dimensional measures, defined as follows:
    \begin{align}
        \Phi^+(\chi^h, \ell) \coloneqq \{ \alpha\in \mathcal{M}(\reall^\ell): \varphi^{\V,b}_{\#}(\alpha) = \chi^h\}
    \end{align}
    Similarly, the \emph{set of projections} of $\chi^{\ell}$ into $\reall^h$ is the set of $h$-dimensional measures defined as:
    \begin{align}
        \Phi^-(\chi^{\ell}, h) \coloneqq \{ \beta\in \mathcal{M}(\reall^h): \varphi^{\V,b}_{\#}(\chi^{\ell}) = \beta \}
    \end{align}
    Now, for any given distance measure $D(\cdot, \cdot)$ defined in $\mathcal{M}(\reall^\ell)$, we can define the \emph{embedding distance} $D^+(\chi^h, \chi^{\ell}) \coloneqq \inf_{\alpha \in \Phi^+(\chi^h, \ell)}D(\alpha, \nu)$ and the \emph{projection distance} $D^-(\chi^h, \chi^{\ell}) \coloneqq \inf_{\beta \in \Phi^-(\chi^{\ell}, h)}D(\chi^h, \beta)$.
\end{definition}

Embedding and projection distances can measure distances between probability measures of different dimensions. 
Additionally, Theorem I.2 \cite{cai2022distances} states that the former two distances are equivalent, that is, for a number of different distance metrics and $\phi$-divergences, $D^+(\chi^{h}, \chi^{\ell}) = D^-(\chi^{h}, \chi^{\ell}) = \hat{D}(\chi^{h}, \chi^{\ell})$, implying that computing the embedding distance or the projection distance yields the same result.

% !TEX root =  ../main.tex
\section{Proofs}\label{app:proof}

\existenceCA*
\begin{proof}
    If a linear CA $\V \in \stiefel{\ell}{h}$ exists, then $\measurehigh = \varphi_\#^{\V^\top}(\measurelow)$. This implies that the kernel of any information-theoretic metric or $\phi$-divergence is nonempty, and also $\V^\top \covlow \V = \covhigh$.
    The latter determines the eigenvalues of $\V^\top \covlow \V$ are those of \covhigh.
    By the Ostrowski's theorem for rectangular \V — cf. Th.3.2 in \cite{higham1998modifying} — we have
    \begin{equation}\label{eq:kappa1}
            \kappa_i = \vartheta_i \mu_i, \quad \text{with }i \in [h]\,,\\
    \end{equation}
    where 
    \begin{equation}\label{eq:mu}
    \lambda_i \leq \mu_i \leq \lambda_{i+\ell-h} \,;
    \end{equation}
    and
    \begin{equation}\label{eq:theta}
        \mathrm{eigvls}(\V^\top \V)_1 \leq \vartheta_i \leq \mathrm{eigvls}(\V^\top \V)_h\,.
    \end{equation}
    Since $\V \in \stiefel{\ell}{h}$, by \Cref{eq:theta} $\vartheta_i=1 $ for each $i \in [h]$. 
    Substituting the latter into \Cref{eq:kappa1}, we get $\kappa_i=\mu_i$, thus obtaining \Cref{eq:spectralCA} by \Cref{eq:mu}.
\end{proof}

\smoothnessth*
\begin{proof}
    Consider the first term $\Tr{\left(\A^\top \covlow \A \right)^{-1} \covhigh}$ in \Cref{eq:general_objective}.
    We have that $\Tr{\left(\mathbf{A}^\top \covlow \mathbf{A} \right)^{-1} \covhigh}$ is well-defined and smooth in case $\mathbf{A}^\top \covlow \mathbf{A}$ is positive definite \cite{boyd2004convex}.
    If $\mathbf{A}^\top \covlow \mathbf{A} \in \pd^h$, for all $\mathbf{y}\in \reall^h, \, \mathbf{y}\neq \zeros_h$, it holds $\mathbf{y}^\top \mathbf{A}^\top \covlow \mathbf{A} \mathbf{y} > 0$.
    By defining $\reall^\ell \ni \mathbf{z} \coloneqq \mathbf{A} \mathbf{y}$, this is equivalent to say $\mathbf{z}^\top \covlow \mathbf{z} > 0, \forall \mathbf{y} \neq \zeros_h$.
    Since $\covlow \in \pd^{\ell}$ by assumption, we have to prove that $\mathbf{z} \neq \zeros_\ell, \, \forall \mathbf{y}\neq \zeros_h$.
    Consider that exists $\tilde{\mathbf{y}} \neq \zeros_h$ such that $\tilde{\mathbf{z}}=\mathbf{A}\tilde{\mathbf{y}}=\zeros_\ell$. 
    This means that 
    \begin{equation}
        \zeros_h = \mathbf{A}^\top \tilde{\mathbf{z}} = \mathbf{A}^\top \mathbf{A} \tilde{\mathbf{y}} = \tilde{\mathbf{y}} \neq \zeros_h\,;
    \end{equation}
    which is a contradiction.
    Hence $\mathbf{A}^\top \covlow \mathbf{A} \in \pd^h$ and $\Tr{\left( \mathbf{A}^\top \covlow \mathbf{A}\right)^{-1} \covhigh}$ is smooth over $\stiefel{\ell}{h}$.
    Consider now $\log\det{\mathbf{A}^\top \covlow \mathbf{A}}$ in \Cref{eq:objective_partial_knowledge}. 
    Since $\mathbf{A}^\top \covlow \mathbf{A} \in \pd^h$, also this latter term is well-defined and smooth.
    
    Let $\widetilde{\mathbf{A}}\coloneqq\left(\mathbf{A}^\top \covlow \mathbf{A}\right)^{-1}$. 
    The gradient in \Cref{eq:gradA} follows from the application of the following rules of matrix calculus \cite{Brookes},
    \begin{equation}\label{eq:rules_matrix_calculus}
        \text{\emph{(i)}}\quad \Egrad{}{\Tr{\left( \mathbf{A}^\top \covlow \mathbf{A}\right)^{-1} \covhigh}} = - 2 \covlow \mathbf{A} \widetilde{\mathbf{A}} \covhigh \widetilde{\mathbf{A}} \quad \text{and} \quad \text{\emph{(ii)}} \quad \Egrad{}{\log \det {\mathbf{A}^\top \covlow \mathbf{A}} = 2 \covlow \mathbf{A} \widetilde{\mathbf{A}}}\,,
    \end{equation}
\end{proof}
% !TEX root =  ../main.tex
\section{LinSEPAL-ADMM}\label{app:MADMM}
Let us recall below the nonsmooth Riemannian problem we have to solve.
\nonsmoothprob*

The structure of the objective in \eqref{eq:minKL}, separating into smooth (cf. \cref{prop:smoothness_and_differentiability}) and nonsmooth terms, makes the \emph{alternating direction method of multipliers} (ADMM, \citealp{boyd2011distributed}) an appealing optimization framework for deriving a solution.
This is the rationale behind the general framework \emph{manifold ADMM} \cite{kovnatsky2016madmm}, that we decline to our setting in the following, thus obtaining the LinSEPAL-ADMM algorithm.

Starting from \eqref{eq:minKL}, we add a splitting variable $\Y \in \rmatdim$ to be optimized over the Euclidean space to handle the non-smooth term $h(\V)$:

\begin{equation}\label{eq:MADMM}
    \begin{aligned}
        \min_{\V \in \stiefel{\ell}{h}, \Y \in \rmatdim} & \quad \Tr{\left( \V^\top \covlow \V\right)^{-1} \covhigh} + \log\det{\V^\top \covlow \V} + \lambda \norm{\Y}_1 \, , \\
        \textrm{subject to} & \quad \Y - \D \odot \V=\zeros_{\ell \times h}.
    \end{aligned}
    \tag{P2}
\end{equation}

At this point, following \cite{boyd2011distributed}, by denoting by $\scaledU \in \rmatdim$ the scaled dual variable, and by $\rho \in \reall^+$ the ADMM stepsize, the scaled augmented Lagrangian reads as

\begin{equation}\label{eq:MADMMsAUL}
    L_{\rho}\left(\V, \Y, \scaledU \right) =  \Tr{\left( \V^\top \covlow \V\right)^{-1} \covhigh} + \log\det{\V^\top \covlow \V} + \lambda \norm{\Y}_1 + \frac{\rho}{2}\frob{\D\odot \V  - \Y + \scaledU}^2.
\end{equation}

Starting from \Cref{eq:MADMMsAUL}, the ADMM updates at the $k$-th iteration are
\begin{equation}\label{eq:MADMMrecursion_app}
    \begin{aligned}
        \V^{k+1} &= \argmin_{\V \in \stiefel{\ell}{h}} L_{\rho}\left(\V, \Y^k, \scaledU^k \right),\\
        \Y^{k+1} &= \argmin_{\Y \in \rmatdim} L_{\rho}\left(\V^{k+1}, \Y, \scaledU^k \right),\\
        \scaledU^{k+1} &= \scaledU^k + \D \odot \V^{k+1} - \Y^{k+1}. 
    \end{aligned}
    \tag{R1}
\end{equation}

\spara{Solution for $\V^{k+1}$.}

The update for $\V^{k+1}$ in \eqref{eq:MADMMrecursion_app} reads as
\begin{equation}\label{eq:updateV}
    \V^{k+1} = \argmin_{\V \in \stiefel{\ell}{h}} \Tr{\left( \V^\top \covlow \V\right)^{-1} \covhigh} + \log\det{\V^\top \covlow \V} + \frac{\rho}{2}\frob{\D\odot \V  - \Y^k + \scaledU^k}^2     
\end{equation}

\Cref{eq:updateV} is a standard smooth optimization problem over the Stiefel manifold, and it can be solved by standard techniques such as those in \cite{boumal2023introduction}.
Newton and conjugate gradient methods for the Stiefel manifold are discussed in \cite{edelman1998geometry}.
In our experiments, we use the conjugate gradient implementation in \cite{boumal2014manopt}.

\spara{Solution for $\Y^{k+1}$.}
The update for $\Y^{k+1}$ in \eqref{eq:MADMMrecursion_app} reads as
\begin{equation}\label{eq:updateY_madmm}
    \begin{aligned}
        \Y^{k+1} &= \argmin_{\Y \in \rmatdim} \lambda\norm{\Y}_1 + \frac{\rho}{2}\frob{\D \odot \V^{k+1} - \Y + \scaledU^k}^2 = \\
        &= \mathcal{S}_{\lambda/\rho} \left( \D \odot \V^{k+1} + \scaledU^k \right);
    \end{aligned}
\end{equation}
where $\mathcal{S}_{\delta}(x)=\sign(x) \cdot \max(\abs{x}-\delta, 0)$ is the element-wise soft-thresholding operator \cite{parikh2014proximal}.

\spara{Stopping criteria.}
The empirical convergence of LinSEPAL-ADMM is established according to primal and dual feasibility optimality conditions \cite{boyd2011distributed}.
The primal residual, associated with the equality constraint in \Cref{eq:MADMM}, is
\begin{equation}\label{eq:primal_res_madmm}
    \mathbf{R}_p^{k+1}\coloneqq\Y^{k+1}-\D\odot\V^{k+1}\,.
\end{equation}
The dual residual, which can be obtained from the stationarity condition, is
\begin{equation}\label{eq:dual_res_madmm}
    \mathbf{R}_d^{k+1}\coloneqq \rho \,\D \odot\left(\Y^{k+1}-\Y^k\right)\,.
\end{equation}
As $k \rightarrow \infty$, the norm of the primal and dual residuals should vanish.
Hence, the stopping criterion can be set in terms of the norms
\begin{equation}\label{eq:norms_madmm}
    \text{\emph{(i)}}\;d_p^{k+1}=\frob{\mathbf{R}_p^{k+1}} \quad \text{and} \quad \text{\emph{(ii)}}\;d_d^{k+1}=\frob{\mathbf{R}_d^{k+1}}\,. 
\end{equation}
Specifically, given absolute and relative tolerance, namely $\tau^a$ and $\tau^r$ in $\reall_+$, respectively, convergence in practice is established following \citet{boyd2011distributed} when 
\begin{equation}\label{eq:convergence_linsepal}
    \text{\emph{(i)}}\; d_p \leq \tau^a\sqrt{\ell h} + \tau^r \max{\left(\frob{\Y^{k+1}}, \frob{\D \circ \V^{k+1}}\right)}\,, \quad
    \text{and} \quad
    \text{\emph{(ii)}}\; d_d \leq \tau^a\sqrt{\ell h} + \tau^r \rho  \frob{\D \circ \scaledU^{k+1}}\,.
\end{equation}

The LinSEPAL-ADMM algorithm is summarized in \cref{alg:linsepal_admm}.

\begin{algorithm}[H]
\caption{LinSEPAL-ADMM}
\label{alg:linsepal_admm}
\begin{algorithmic}[1]
\STATE \textbf{Input:} $\covlow$, $\covhigh$, $\D$, $\lambda$, $\rho$, $\tau^a$, $\tau^r$
\STATE Initialize: $\V^0 \in \stiefel{\ell}{h}$, $\Y^0 \in \rmatdim$, $\scaledU^0 \gets \D \odot \V^0 - \Y^0$
\REPEAT
    \STATE $\V^{k+1} \gets \text{Solve \cref{eq:updateV} via an off-the-shelf method for smooth Riemannian problems}$ 
    \STATE $\Y^{k+1} \gets \mathcal{S}_{\lambda/\rho}\left( \D \odot \V^{k+1} + \scaledU^k \right)$
    \STATE $\scaledU^{k+1} \gets \scaledU^k + \D \odot \V^{k+1} - \Y^{k+1}$
\UNTIL{\Cref{eq:convergence_linsepal} is satisfied}
\STATE \textbf{Output:} $\V$, $\Y$, $\scaledU$
\end{algorithmic}
\end{algorithm} 

% !TEX root =  ../main.tex
\section{LinSEPAL-PG}\label{app:ManPG}
This method is based upon the \text{manifold proximal gradient} \cite{chen2020} framework, which generalizes the \emph{proximal gradient} framework defined in the Euclidean space to the Stiefel manifold.
Following \citet{chen2020}, denoting by $\V^k$ the iterate at the step $k$, the updates recursion for solving \eqref{eq:minKL} reads as
\begin{equation}\label{eq:ManPG_app}
    \begin{aligned}
        \G^k &= \argmin_{\G \in \tangentspace{\V^k}{\stiefel{\ell}{h}}} \quad \Eprod{\Egrad{}{f\left(\V^k\right)}}{\G}{} + \frac{1}{2\rho} \frob{\G}^2 + \lambda \norm{\D \odot \left(\V^k + \G\right)}_1 \, ,\\
        \V^{k+1} &= \Retr{}{\V^k}{\G^k} \,.
    \end{aligned}
    \tag{R2}
\end{equation}
In \eqref{eq:ManPG_app}, the first update is the proximal mapping providing a proximal gradient direction $\G^k$ onto the tangent space to the Stiefel manifold, using the first-order approximation of the objective around the $k$-th estimate.
The second is the update for $\V^{k+1}$, which exploits the canonical retraction (cf. \cref{eq:stiefel_retractions}) technique for projecting back $\V^k + \G^k$ from the tangent space to the manifold.
Global convergence of the ManPG method has been established in \citet{chen2020}.

\spara{Solution for $\G^k$.}
\citet{chen2020} shows that the first update can be efficiently solved using the regularized semi-smooth Newton method in \citet{xiao2018regularized}.
Specifically, according to \Cref{eq:Stiefel_t_space}, the feasible set $\tangentspace{\V^k}{\stiefel{\ell}{h}}$ translates into a linear constraint.
By defining $\linearop{\mathcal{A}^k}{\G} \coloneqq \G^\top \V^k + \V^{k^\top} \G$, the update is
\begin{equation}\label{eq:ManPGU1}
    \begin{aligned}
        \G^k = \argmin_{\G \in \reall^{\ell \times h}} &\quad \Eprod{\Egrad{}{f\left(\V^k\right)}}{\G}{} + \frac{1}{2\rho} \frob{\G}^2 + \lambda \norm{\D \odot \left(\V^k + \G\right)}_1 \, ,\\
        \textrm{subject to} &\quad \linearop{\mathcal{A}^k}{\G} = \zeros_{h \times h}\,.
    \end{aligned}
\end{equation}

However, following the rationale in \citet{si2024riemannian}, we can force $\G^k \in \tangentspace{\V^k}{\stiefel{\ell}{h}}$ by exploiting the basis $\basisN{\V^k}$ of the normal space to the manifold, namely $\normalspace{\V^k}{\stiefel{\ell}{h}}$.
To find such $\basisN{\V^k}$, recall the explicit form of \normalspace{\V}{\stiefel{\ell}{h}} in \Cref{eq:explicit_normal_space}.

The basis of \sym{h}, having dimension $s=h (h+1)/2$, is 
\begin{equation}\label{eq:basis_sym}
    \mathcal{E} \coloneqq \{ \mathbf{E}_{ij} \in \{0,1\}^{h \times h} \, \mid \, \mathbf{E}_{ij} \text{ has } e_{ij}=e_{ji}=1, \, 0 \text{ elsewhere}, \, 1\leq i \leq j \leq h\}\,.
\end{equation}
It follows from \Cref{eq:explicit_normal_space,eq:basis_sym} that
\begin{equation}\label{eq:basis_nvk}
    \basisN{\V^k} \coloneqq \{ \basisNelement{k}_{ij}=\V^k \mathbf{E}_{ij}, \, 1\leq i \leq j \leq h\}\,.
\end{equation}
At this point, the membership to $\tangentspace{\V^k}{\stiefel{\ell}{h}}$ can be expressed as 
\begin{equation}
    \Eprod{\basisNelement{k}_{ij}}{\mathbf{G}}{}=0, \quad \forall 1\leq i \leq j \leq h\,. 
\end{equation}

Hence, \eqref{eq:ManPGU1} reads as
\begin{equation}\label{eq:ManPGU1n}
    \begin{aligned}        
        \G^k = \argmin_{\G \in \reall^{\ell \times h}} &\quad \Eprod{\Egrad{}{f\left(\V^k\right)}}{\G}{} + \frac{1}{2\rho} \frob{\G}^2 + \lambda \norm{\D \odot \left(\V^k + \G\right)}_1 \, ,\\
        \textrm{subject to} &\quad \Eprod{\basisNelement{k}_{ij}}{\mathbf{G}}{}=0, \quad \forall\, 1\leq i \leq j \leq h\,.
    \end{aligned}
\end{equation}

Consider $h\left(\V^k + \G\right)=\norm{\D \odot \left(\V^k + \G\right)}_1$ and $\reall^{s} \ni \boldsymbol{\mu}=[\mu_{11}, \mu_{12}, \ldots, \mu_{ij},\dots, \mu_{hh}]$, with $1\leq i \leq j \leq h$.
The Lagrangian for \eqref{eq:ManPGU1n} is
\begin{equation}\label{eq:ManPGU1n_Lagr}
    L_\rho \left(\G, \boldsymbol{\mu} \right) = \Eprod{\Egrad{}{f\left(\V^k\right)}}{\G}{} + \frac{1}{2\rho} \frob{\G}^2 + \lambda \, h\left(\V^k + \G\right) - \sum_{1\leq i \leq j \leq h}\mu_{ij} \Eprod{\basisNelement{k}_{ij}}{\mathbf{G}}{}\,.
\end{equation}
Let us define now the matrix $\reall^{s \times \ell h} \ni \mathbf{B}^k \coloneqq [\myvec{\basisNelement{k}_{11}}, \myvec{\basisNelement{k}_{12}}, \ldots, \myvec{\basisNelement{k}_{hh}}]^\top$, where $\myvec{\basisNelement{k}_{ij}} \in \reall^{\ell h}$.
We can compactly express the $s$ equality constraints as
\begin{equation}\label{eq:compact_eq_constraint}
    \mathbf{B}^k\myvec{\G}=\zeros_{s}\,.
\end{equation}

Thus, the Karush-Kuhn-Tacker (KKT) conditions of \Cref{eq:ManPGU1} reads as
\begin{equation}\label{eq:ManPGU1_KKT}
    \text{\emph{(i)}}\; \zeros_{l \times h} \in \Esubgrad{\G}{L_{\rho}\left(\G, \boldsymbol{\mu} \right)}\,, \quad \text{and} \quad \text{\emph{(ii)}}\; \mathbf{B}^k\myvec{\G}=\zeros_{s}\,.
\end{equation}

From the stationarity condition we get
\begin{equation}\label{eq:ManPGU1_stationarity}
    \begin{aligned}
        \zeros_{l \times h} \in \G + \rho \left(\Egrad{}{f\left(\V^k\right)} - \sum_{1\leq i\leq j\leq h} \mu_{ij} \basisNelement{k}_{ij} \right) + \lambda \, \rho \,\Esubgrad{\G}{h\left(\V^k + \G\right)}\,.
    \end{aligned}
\end{equation}

At this point, recalling the inclusion property of proximal operators, viz. $\mathbf{P} = \mathrm{prox}_g(\mathbf{B}) \iff \mathbf{B}-\mathbf{P} \in \Esubgrad{}{g(\mathbf{P})}$, we have
\begin{equation}\label{eq:ManPGU1_stationarity_prox}
    \begin{aligned}
        \zeros_{l \times h} \in \underbrace{\V^k + \G}_{\mathbf{P}} - \underbrace{\left( \V^k - \rho \left(\Egrad{}{f\left(\V^k\right)} - \sum_{1\leq i\leq j\leq h} \mu_{ij} \basisNelement{k}_{ij} \right) \right)}_{\mathbf{B}(\boldsymbol{\mu})} + \lambda\, \rho \, \Esubgrad{\G}{h\underbrace{\left(\V^k + \G\right)}_{\mathbf{P}}}\,;
    \end{aligned}
\end{equation}

from which we get
\begin{equation}\label{eq:ManPGU1_stationarity_solved}
    \G(\boldsymbol{\mu}) = \mathrm{prox}_{\lambda \, \rho\, h(\cdot) } \left(\mathbf{B}\left(\boldsymbol{\mu}\right)\right) - \V^k\,.
\end{equation}

At this point, $\mathrm{prox}_{\lambda \, \rho\, h(\cdot) }$ can be computed element-wise as
\begin{equation}\label{eq:ManPGU1_stationarity_prox_solved}
    \mathrm{prox}_{\lambda \, \rho\, h(\cdot) } \left(b_{ij}\left(\boldsymbol{\mu}\right)\right) = \begin{cases}
        b_{ij}(\boldsymbol{\mu})\,, &\quad \text{if } d_{ij}=0\,, \\
        \mathcal{S}_{\lambda \, \rho}(b_{ij}(\boldsymbol{\mu}))\,, &\quad \text{otherwise}\,.
    \end{cases}
\end{equation}

Substituting \Cref{eq:ManPGU1_stationarity_solved} into \Cref{eq:ManPGU1_KKT}, we have 
\begin{equation}\label{eq:ManPGLambda_G}
    \mathbf{B}^k\myvec{\G(\boldsymbol{\mu})}=\zeros_{s}\,.
\end{equation}

Here the $r$-th entry of $\myvec{\G(\boldsymbol{\mu})}$ corresponds to the entry of $\G(\boldsymbol{\mu})$ at row $ u=(r-1) \,\mathrm{mod}\, \ell + 1$, and column $v=\lfloor{(r-1) / \ell\rfloor} + 1$ , $r \in [\ell h]$. 

At this point, we can use the regularized semi-smooth Newton method \cite{xiao2018regularized} to solve \Cref{eq:ManPGLambda_G}.
Our target function is
\begin{equation}\label{eq:regNF}
    F(\boldsymbol{\mu})=\mathbf{B}^k\myvec{\G(\boldsymbol{\mu})} \, :\, \reall^s \rightarrow \reall^s. 
\end{equation}
By the chain rule of calculus, using \Cref{eq:ManPGU1_stationarity_solved}, the generalized Jacobian matrix is 
\begin{equation}\label{eq:genJ}
    \begin{aligned}
        \reall^{s \times s} \ni \mathbf{J} &= \pdv{F(\boldsymbol{\mu})}{\myvec{\G(\boldsymbol{\mu})}}\cdot \pdv{\myvec{\G(\boldsymbol{\mu})}}{\boldsymbol{\mu}} \\
        &= \mathbf{B}^k \pdv{\prox_{\lambda\, \rho\, h(\cdot)}\left(\myvec{\mathbf{B}(\boldsymbol{\mu})}\right)}{\myvec{\mathbf{B}(\boldsymbol{\mu})}}\cdot\pdv{\myvec{\mathbf{B}(\boldsymbol{\mu})}}{\boldsymbol{\mu}}\,.
    \end{aligned}
\end{equation}
The proximal-related term is a diagonal matrix $\mathbf{M} \in \reall^{\ell h \times \ell h}$, where
\begin{equation}\label{eq:Jfirst}
    m_{rr} = \begin{cases}
        1\,,\quad \text{if } \myvec{\D}_{r} = 0 \text{ or } \left(\myvec{\D}_{r} = 1 \text{ and }\abs{b_{r}}-\lambda \rho >0\right)\,. \\
        0\,, \quad \text{otherwise}.
    \end{cases}
\end{equation}
Additionally, starting from
\begin{equation}
    \begin{aligned}
        b_r(\boldsymbol{\mu}) &= \myvec{\V^k - \rho\left( \Egrad{}{f(\V^k)} - \sum_{1 \leq i \leq j \leq h} \mu_{ij} \left[\B_{ij}^k \right]_{uv} \right) } \\
        &= \myvec{\V^k - \rho\left( \Egrad{}{f(\V^k)} - \mathbf{b}_{uv}^{k^\top} \boldsymbol{\mu} \right)}\,.        
    \end{aligned}
\end{equation}

Hence, we get 
\begin{equation}\label{eq:Jsecond}
    \reall^{s} \ni \pdv{b_r(\boldsymbol{\mu})}{\boldsymbol{\mu}}=\mathbf{b}_{uv}^{k}\,.
\end{equation}

Consequently, starting from \Cref{eq:genJ}, using \Cref{eq:Jfirst,eq:Jsecond}, we finally have
\begin{equation}\label{eq:genJ_explicit}
    \mathbf{J} = \mathbf{B}^k \mathbf{M} \mathbf{C}, 
    \quad \text{ with } \reall^{\ell h \times s} \ni \mathbf{C}=\begin{pmatrix}
        \mathbf{b}_{11}^{k^\top}\\
        \mathbf{b}_{21}^{k^\top}\\
        \vdots \\
        \mathbf{b}_{\ell h}^{k^\top}  
    \end{pmatrix}\,. 
\end{equation}

Following \citet{xiao2018regularized}, denoting with $\nu^k=\alpha^k\norm{F^k}_2, \, \alpha^k \in \reall^+$, we define
\begin{equation}\label{eq:regN1}
    r^k \coloneqq \left( \mathbf{J}^{k-1} + \nu^{k-1} \identity_s \right) \mathbf{d}^{k} + F^{k-1}\,.
\end{equation}

At each iteration we want to find the step $\mathbf{d}^k$ by solving \Cref{eq:regN1} inexactly, such that
\begin{equation}
    \norm{r^k}_2 \leq \tau \min{\left(1, \alpha^{k-1} \norm{F^{k-1}}_2 \norm{\mathbf{d}^k}_2\right)}\,, \quad \tau \in (0,1)\,;
\end{equation}
obtaining a trial point
\begin{equation}\label{eq:regN_trial}
    \mathbf{u}^k = \boldsymbol{\mu}^{k-1} + \mathbf{d}^k\,.
\end{equation}

Let $\beta^0=\norm{F(\boldsymbol{\mu}^0)}_2$ and $\gamma \in (0,1)$.
If $\norm{F(\mathbf{u}^k)}_2\leq \gamma \beta^{k-1}$ then we set
\begin{equation}\label{eq:Newton_step}
    \boldsymbol{\mu}^{k} = \mathbf{u}^k\,, \; \beta^{k}=\norm{F(\mathbf{u}^k)}_2\,, \; \text{ and } \alpha^{k}=\alpha^{k-1}\,. \quad \text{[Newton step]}
\end{equation}

Otherwise, let 
\begin{equation}\label{eq:safeguard_ratio}
    \xi^k = \frac{- F(\mathbf{u}^k)^\top \mathbf{d}^k}{\norm{\mathbf{d^k}}_2^2}\, . 
\end{equation}

Select $0<\phi_1\leq\phi_2<1$ and $1<\psi_1<\psi_2$.
Hence, we make a safeguard step as follows
\begin{equation}\label{eq:safeguard_step}
    \boldsymbol{\mu}^{k} = \begin{cases}
        \mathbf{v}^k\,, &\quad \text{ if } \xi^k \geq \phi_1 \text{ and } \norm{F(\mathbf{v}^k)}_2 \leq \norm{F(\boldsymbol{\mu}^{k-1})}_2, \;  \text{[projection step]} \\
        \mathbf{w}^k\,, &\quad \text{ if } \xi^k \geq \phi_1 \text{ and } \norm{F(\mathbf{v}^k)}_2 > \norm{F(\boldsymbol{\mu}^{k-1})}_2, \;  \text{[fixed-point step]} \\
        \boldsymbol{\mu}^{k-1}\,, &\quad \text{ if } \xi^k < \phi_1, \; \text{unsuccessful step}
    \end{cases}
\end{equation}
where
\begin{equation}
    \mathbf{v}^k = \boldsymbol{\mu}^{k-1} - \frac{F(\mathbf{u}^k)^\top (\boldsymbol{\mu}^{k-1} - \mathbf{u}^k)}{\norm{F(\mathbf{u}^k)}_2} F(\mathbf{u}^k), \; \mathbf{w}^k=\boldsymbol{\mu}^{k-1} - \delta F(\boldsymbol{\mu}^k), \; \delta \in \left(0, \frac{1}{\omega}\right)\,;
\end{equation}

where $\omega \in (0,1]$. 
Finally, denoting $\reall^+ \ni \Bar{\alpha} \approx 0$, the parameters $\beta^{k+1}$ and $\alpha^{k+1}$ are updated as
\begin{equation}
    \beta^{k}=\beta^{k-1}, \quad \alpha^{k} \in \begin{cases}
        (\Bar{\alpha}, \alpha^{k-1})\,, &\quad \text{if } \xi^k \geq \phi_2,\\
        [\alpha^{k-1}, \psi_1\alpha^{k-1}]\,, &\quad \text{if } \phi_1 \leq \xi^k < \phi_2,\\
        (\psi_1\alpha^{k-1}, \psi_2\alpha^{k-1}]\,, &\quad \text{otherwise.}\\
    \end{cases}
\end{equation}

At this point, we set $\G^k=\G^k(\boldsymbol{\mu}^k)$ according to \Cref{eq:ManPGU1_stationarity_solved}.

\spara{Solution for $\mathbf{V}^{k+1}$.}
Given $\V^k + \G^k \in \tangentspace{\V^k}{\stiefel{\ell}{h}}$, we have to project the point onto the manifold.
This can be accomplished via the canonical retractions in \Cref{eq:stiefel_retractions}.
However, as suggested by \citet{chen2020}, our LinSEPAL-PG implementation performs an Armijo line-search procedure to determine the stepsize $a$.
Hence, the update is
\begin{equation}\label{eq:updateV_linsepalpg}
    \V^{k+1}=\Retr{\mathrm{QR}}{\V^k}{a\G^k}\,.
\end{equation}

\spara{Stopping criteria.}
Empirical convergence of the LinSEPAL-PG algorithm is established either when a maximum number of iterations $K$ is reached, or when the $\KL{\V^{k+1}}$ is below a certain threshold $\tau^{\mathrm{KL}}\approx 0$.
The LinSEPAL-PG algorithm is summarized in \cref{alg:linsepal_pg}.

\begin{algorithm}[H]
\caption{LinSEPAL-PG}
\label{alg:linsepal_pg}
\begin{algorithmic}[1]
\STATE \textbf{Input:} $\covlow$, $\covhigh$, $\D$, $\lambda$, $\rho$, $\gamma \in (0,\,1)$, $\tau^{\mathrm{KL}}$, $K$
\STATE Initialize: $\V^0 \in \stiefel{\ell}{h}$, $\Y^0 \in \rmatdim$, $\scaledU^0 \in \rmatdim$
\REPEAT
    \STATE $\G^{k} \gets \text{Solve \cref{eq:ManPGU1n} via the regularized semi-smooth Newton method}$ 
    \STATE $a \gets 1$
    \REPEAT
        \STATE $a = \gamma a$
        \STATE $\bar{\V} = \;\Retr{\mathrm{QR}}{\V^k}{a\,\G^k}$
    \UNTIL{$\KL{\bar{\V}} > \KL{\V^k} - \frac{a \frob{\G^k}^2}{2\rho}$}
    \STATE $\V^{k+1} \gets \bar{\V}$
\UNTIL{$k>K$ or $\KL{\V^{k+1}}<\tau^{\mathrm{KL}}$}
\STATE \textbf{Output:} $\V$
\end{algorithmic}
\end{algorithm} 
% !TEX root =  ../main.tex
\section{CLinSEPAL}\label{app:MADMMSCA_partial}
The problem we want to solve is:

\smoothpartial*

\Cref{prob:nonconvex_prob_approx} makes it explicit that the abstraction morphism is given by three key ingredients: \emph{(i)} the given partial, structural prior information represented by \B; \emph{(ii)} the structural component \Supp to be learned, such that the resulting causal abstraction is constructive; and \emph{(iii)} the abstraction coefficients in \V determining the linear functional forms of the causal abstraction, which have to be learned as well.
Specifically, \enquote{partial} means that some rows of \B have more than one entry equal to one.

Unfortunately, \Cref{prob:nonconvex_prob_approx} is nonconvex because of the objective function and the Stiefel manifold.
Additionally, in this case, the CA results in a bilinear form $\B\odot \Supp \odot \V$, which is not jointly convex in \Supp and \V.
Consequently, the constraint $\B\odot \Supp \odot \V \in \stiefel{\ell}{h}$ has to be carefully handled.

Regarding the nonconvexity of the objective in \Cref{eq:objective_partial_knowledge}, we proceed by leveraging its smoothness.
Specifically, we have the following result.

\begin{corollary}\label{corollary:objective_partial_knowledge}
    The function $f(\V, \Supp)$ in \Cref{eq:objective_partial_knowledge} is smooth.
    Additionally, define $\mathbf{A}\coloneqq\left(\B \odot \Supp \odot \V\right)$ and $\widetilde{\mathbf{A}}\coloneqq\left(\mathbf{A}^\top \covlow \mathbf{A}\right)^{-1}$.
    The partial derivatives w.r.t. \V and \Supp are
    \begin{equation}\label{eq:gradV_partial}
        \Egrad{\V}{f} = 2\left(\B \odot \Supp\right) \odot \left(\left(\covlow\mathbf{A}\widetilde{\mathbf{A}}\right)\left(\identity_h - \covhigh\widetilde{\mathbf{A}}\right)\right)\,,
    \end{equation}
    and
    \begin{equation}\label{eq:gradS_partial}
        \Egrad{\Supp}{f} = 2\left(\B \odot \V\right) \odot \left(\left(\covlow\mathbf{A}\widetilde{\mathbf{A}}\right)\left(\identity_h - \covhigh\widetilde{\mathbf{A}}\right)\right)\,.
    \end{equation}
\end{corollary}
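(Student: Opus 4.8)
The plan is to reduce the entire statement to Proposition \ref{prop:smoothness_and_differentiability} by the chain rule, so that no fresh analysis of the trace--inverse or log--det terms is needed. Setting $\mathbf{A}\coloneqq\B\odot\Supp\odot\V$ as in the statement, I observe that the objective $f(\V,\Supp)$ in \eqref{eq:objective_partial_knowledge} is literally the composite $f(\mathbf{A})$, where the outer $f$ is the function \eqref{eq:general_objective} whose smoothness and Euclidean gradient \eqref{eq:gradA} were already established. Hence the task splits cleanly into (a) a smoothness argument for a composition and (b) a differentiation of the inner, entrywise map $(\V,\Supp)\mapsto\mathbf{A}$.

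First I would dispatch smoothness. The inner map $(\V,\Supp)\mapsto\mathbf{A}=\B\odot\Supp\odot\V$ is bilinear in its two arguments, hence $C^\infty$. The outer map $f(\mathbf{A})$ is smooth wherever $\mathbf{A}^\top\covlow\mathbf{A}\in\pd^h$, exactly as shown inside the proof of Proposition \ref{prop:smoothness_and_differentiability}. This positive-definiteness holds on the feasible set of \Cref{prob:nonconvex_prob_approx}, where constraint \blue{(i)} forces $\mathbf{A}\in\stiefel{\ell}{h}$ and therefore $\mathbf{A}$ has full column rank; more generally it holds on the open region where $\mathbf{A}$ has full column rank. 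Since a composition of smooth maps is smooth, the first claim follows.

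Second, for the gradients I would work componentwise, exploiting that the Hadamard product acts entrywise, so $A_{ij}=B_{ij}S_{ij}V_{ij}$ and the Jacobian of the inner map is diagonal:
\begin{equation}
    \pdv{A_{kl}}{V_{ij}} = B_{ij}S_{ij}\,\delta_{ik}\delta_{jl}, \qquad \pdv{A_{kl}}{S_{ij}} = B_{ij}V_{ij}\,\delta_{ik}\delta_{jl}\,.
\end{equation}
The chain rule then collapses each sum over $(k,l)$ to its single surviving diagonal term, giving $[\Egrad{\V}{f}]_{ij}=B_{ij}S_{ij}\,[\Egrad{\mathbf{A}}{f}]_{ij}$ and $[\Egrad{\Supp}{f}]_{ij}=B_{ij}V_{ij}\,[\Egrad{\mathbf{A}}{f}]_{ij}$. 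In matrix form these are the Hadamard products $\Egrad{\V}{f}=(\B\odot\Supp)\odot\Egrad{\mathbf{A}}{f}$ and $\Egrad{\Supp}{f}=(\B\odot\V)\odot\Egrad{\mathbf{A}}{f}$; substituting the known expression \eqref{eq:gradA} for $\Egrad{\mathbf{A}}{f}$, with $\widetilde{\mathbf{A}}=(\mathbf{A}^\top\covlow\mathbf{A})^{-1}$ (and noting that the scalar factor $2$ commutes with the Hadamard product), yields \eqref{eq:gradV_partial} and \eqref{eq:gradS_partial}.

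I do not anticipate a substantive obstacle: the argument is chain-rule bookkeeping layered on Proposition \ref{prop:smoothness_and_differentiability}. The only point demanding care is ensuring the outer function is differentiable at the relevant points, i.e. that $\mathbf{A}^\top\covlow\mathbf{A}$ is invertible, which is inherited from the positive-definiteness argument already carried out there and holds on (a neighborhood of) the Stiefel feasible set. A secondary subtlety is making the entrywise chain rule fully rigorous under the Hadamard parametrization, but the Kronecker-delta computation above makes the diagonal structure transparent and removes any ambiguity.
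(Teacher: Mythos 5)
Your proposal is correct and follows essentially the same route as the paper: reduce to Proposition~\ref{prop:smoothness_and_differentiability} by setting $\mathbf{A}=\B\odot\Supp\odot\V$, invoke the positive-definiteness of $\mathbf{A}^\top\covlow\mathbf{A}$ on the Stiefel-constrained feasible set for smoothness, and obtain the partial gradients via the chain rule applied to the entrywise (Hadamard) parametrization. Your explicit Kronecker-delta bookkeeping simply spells out what the paper compresses into a one-line citation of \Cref{eq:rules_matrix_calculus} and the chain rule.
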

\begin{proof}
    Smoothness directly follows from \Cref{prop:smoothness_and_differentiability} by defining $\mathbf{A}=\left(\B \odot \Supp \odot \V\right)$, which is constrained to \stiefel{\ell}{h} as given in \Cref{eq:prob_madmmsca_VS}.
    The partial derivatives in \Cref{eq:gradV_partial,eq:gradS_partial} follow from the application of \Cref{eq:rules_matrix_calculus}, together with the chain rule for derivatives. 
\end{proof}

At this point, we leverage \Cref{corollary:objective_partial_knowledge} to provide a solution which combines ADMM \cite{boyd2011distributed} and SCA \cite{nedic2018parallel}.
Specifically, ADMM is suitable to isolate and consequently tackle the nonconvexity in different subproblems.
To manage the bilinear form within the first constraint in \Cref{eq:prob_madmmsca_VS}, we introduce two splitting variables, namely \YO and \YT in \stiefel{\ell}{h}, and the corresponding equality constraints
\begin{equation}\label{eq:splitting_constraints_partial}
    \YO - \B\odot\Supp^k\odot \V = 0_{\ell \times h} \quad \text{and} \quad \YT - \B\odot\V^{k+1}\odot\Supp = 0_{\ell \times h}\,, \; \text{respectively.}
\end{equation}
In this way, given the solution at iteration $k$ within the ADMM framework, we optimize separately over \V and \Supp while always tracking \stiefel{\ell}{h}.
Please notice that we use $\V^{k+1}$ since when optimizing over \Supp, \V has already been updated.
The rationale behind the usage of the splitting variable for handling the Stiefel manifold is the same as the \emph{splitting of orthogonality constraints} method (SOC, \citealp{lai2014splitting}).
Additionally, to handle $\left(\B \odot \Supp\right)^\top \in \sphere{h}{\ell}$, we introduce another splitting variable $\X \in \sphere{h}{\ell}$, and the corresponding equality constraint $\X - \left(\B \odot \Supp\right)^\top=\zeros_{h\times \ell}$.
Thus, starting from \Cref{eq:prob_madmmsca_VS}, we get the following equivalent minimization problem
\begin{equation}\label{eq:prob_madmmsca_VS_with_splitting}
    \begin{aligned}
        \V^\star, \Supp^\star, \YO^\star, \YT^\star, \X^\star = \argmin_{\substack{\V \in \rmatdim\\ \Supp \in \umatdim \\ \YO \in \stiefel{\ell}{h} \\ \YT \in \stiefel{\ell}{h} \\ \X \in \sphere{h}{\ell}}} &\quad f(\V,\Supp)\,;\\
         \textrm{subject to} & \quad \YO - \B \odot \Supp^k \odot \V = \zeros_{\ell \times h}\,, \\
         & \quad \YT - \B \odot \V^{k+1} \odot \Supp = \zeros_{\ell \times h}\,, \\
         & \quad \X - \left(\B \odot \Supp\right)^\top = \zeros_{h \times \ell}\,, \\
         & \quad \ones_h - \left(\B \odot \Supp\right)^\top \ones_\ell \leq \zeros_h\,.
    \end{aligned}
\end{equation}

Starting from \Cref{eq:prob_madmmsca_VS_with_splitting}, considering the penalty $\rho \in \reall_+$, we introduce the scaled dual variables \scaledUO and \scaledUT in \rmatdim; and $\scaledW \in \rmatdimT$, and write the scaled augmented Lagrangian
\begin{equation}\label{eq:scaledAUL_partial}
    \begin{aligned}
    L_\rho\left(\V,\Supp,\YO,\YT,\X,\scaledUO,\scaledUT,\scaledW\right) =& f(\V,\Supp) + \frac{\rho}{2}\frob{\B\odot\Supp^k\odot\V - \YO +\scaledUO}^2 + \\
    +& \frac{\rho}{2}\frob{\B\odot\V^{k+1}\odot\Supp - \YT +\scaledUT}^2 + \frac{\rho}{2}\frob{\left(\B \odot \Supp\right)^\top - \X + \scaledW}^2\,.        
    \end{aligned}
\end{equation}

Now, we can apply ADMM iterative procedure, getting the recursion for updating the primal and scaled dual variables. 
In detail, denote by $k \in \nat$ the current iteration.
We have
\begin{equation}\label{eq:ADMM_partial}
    \begin{aligned}        
        \V^{k+1}=&\argmin_{\V \in \rmatdim} L_\rho\left(\V,\Supp^k,\YO^k,\YT^k,\X^k,\scaledUO^k,\scaledUT^k,\scaledW^k\right)\,;\\
        \Supp^{k+1}=&\argmin_{\Supp \in \umatdim} L_\rho\left(\V^{k+1},\Supp,\YO^k,\YT^k,\X^k,\scaledUO^k,\scaledUT^k,\scaledW^k\right)\,,\\
        & \textrm{subject to} \quad \ones_h - \left(\B \odot \Supp\right)^\top \ones_\ell \leq \zeros_h\,; \\
        \YO^{k+1}=&\argmin_{\YO \in \stiefel{\ell}{h}} L_\rho\left(\V^{k+1},\Supp^{k+1},\YO,\YT^k,\X^k,\scaledUO^k,\scaledUT^k,\scaledW^k\right)\,;\\
        \YT^{k+1}=&\argmin_{\YT \in \stiefel{\ell}{h}} L_\rho\left(\V^{k+1},\Supp^{k+1},\YO^{k+1},\YT,\X^k,\scaledUO^k,\scaledUT^k,\scaledW^k\right)\,;\\
        \X^{k+1}=&\argmin_{\X \in \sphere{h}{\ell}} L_\rho\left(\V^{k+1},\Supp^{k+1},\YO^{k+1},\YT^{k+1},\X,\scaledUO^k,\scaledUT^k,\scaledW^k\right)\,;\\
        \scaledUO^{k+1}=&\scaledUO^k + \left(\B\odot\Supp^{k}\odot\V^{k+1} - \YO^{k+1}\right)\,;\\
        \scaledUT^{k+1}=&\scaledUT^k + \left(\B\odot\V^{k+1}\odot\Supp^{k+1} - \YT^{k+1}\right)\,;\\
        \scaledW^{k+1}=&\scaledW^k + \left(\B \odot \Supp^{k+1}\right)^\top - \X^{k+1}\,.
    \end{aligned}
\end{equation}

Similarly to SOC, we isolate the objective nonconvexity into the first and second (nonconvex) subproblems; and the nonconvexity of the manifold into the third and fourth (nonconvex) ones.
Notably, the first and second subproblems can be managed through SCA. 
Additionally, the third and fourth nonconvex subproblems admit closed-form solutions since they boil down to the \emph{closest orthogonal
approximation problems} \cite{fan1955some,higham1986computing}.
Thus, the latter nonconvexity is somehow resolved.
Finally, we solve the subproblem for $\X^{k+1}$ in closed form as well.

\subsection{\texorpdfstring{Update for $\V^{k+1}$}{Update for V}}\label{subsec:updateV_partial}
Starting from \Cref{eq:scaledAUL_partial,eq:ADMM_partial}, the subproblem we have to solve is
\begin{equation}\label{eq:updateV_nonconvex_partial}
    \V^{k+1}=\argmin_{\V \in \rmatdim}\quad f(\V,\Supp^k) + \frac{\rho}{2}\frob{\B\odot\Supp^k\odot\V - \YO^k +\scaledUO^k}^2\,.
\end{equation}
\Cref{eq:updateV_nonconvex_partial} is nonconvex due to the inherent nonconvexity of $f(\V,\Supp^k)$.
However, the latter function is smooth and differentiable w.r.t. \V, as given in \Cref{corollary:objective_partial_knowledge}.
Hence, we apply the SCA framework.
In detail, denote by $q$ the SCA iteration and set $\V^0=\V^k$ for $q=0$.
We derive a strongly convex surrogate $\widetilde{f}(\V;\V^q, \Supp^k)$ around the point $\V^q$ -- i.e., the solution at the iterate $q$ -- exploiting \Cref{eq:gradV_partial}:
\begin{equation}\label{eq:strongly_convex_surrogate_Vpartial}
    \widetilde{f}(\V;\V^q, \Supp^k) \coloneqq \Tr{\Egrad{\V}{f}\at{(\V^q, \Supp^k)}^\top \left(\V - \V^q\right)} + \frac{\tau}{2}\frob{\V -\V^q}^2\,.
\end{equation}
It is immediate to check that \Cref{eq:strongly_convex_surrogate_Vpartial} is a proper surrogate satisfying the stationarity condition $\Egrad{\V}{f}\at{\V^q}=\Egrad{\V}{\widetilde{f}}\at{\V^q}$.

Therefore, at each SCA iteration $q$, we solve a strongly convex problem in closed-form and then apply the usual smoothing operation by using a diminishing stepsize $\gamma^q \in \reall_+$.
Specifically,
\begin{equation}\label{eq:SCA_recursion_V}
    \begin{aligned}        
        \V^{q+1} &= \argmin_{\V \in \rmatdim}\quad \widetilde{f}(\V;\V^q, \Supp^k) +\frac{\rho}{2}\frob{\B\odot\Supp^k\odot\V - \YO^k +\scaledUO^k}^2\,, \quad\textrm{(Strongly convex problem)}\\
        \V^{q+1} &= \V^q + \gamma^k\left(\V^{q+1}-\V^q\right)\,. \quad\textrm{(Smoothing)}
    \end{aligned}
\end{equation}

The solution of the strongly-convex problem is given element-wise in \Cref{lemma:updateV_elementwise_partial}. 
\begin{lemma}\label{lemma:updateV_elementwise_partial}
    The update for $\V^{q+1}$ can be computed element-wise as
    \begin{equation}\label{updateV_elementwise_partial}
        v_{ij}^{q+1}= \frac{1}{\tau + b_{ij} s_{ij}^{k^2}}\Bigg(\rho\, b_{ij} s_{ij}^k y_{1_{ij}}^k - \rho\, b_{ij} s_{ij}^k u_{1_{ij}}^k + \tau\, v_{ij}^q - \Big[\Egrad{\V}{f\at{(\V^q, \Supp^k}}\Big]_{ij} \Bigg)\,.
    \end{equation}
\end{lemma}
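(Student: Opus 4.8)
The plan is to recognize the strongly convex surrogate subproblem in the first line of \cref{eq:SCA_recursion_V} as an \emph{unconstrained} minimization over $\rmatdim$ of a strongly convex quadratic, and then to exploit its entrywise separability so that the unique minimizer is obtained in closed form from the first-order optimality condition. Concretely, writing $\mathbf{G}$ for the Euclidean gradient $\Egrad{\V}{f}$ evaluated at $(\V^q,\Supp^k)$ — a fixed matrix by \cref{corollary:objective_partial_knowledge} — and substituting the surrogate \cref{eq:strongly_convex_surrogate_Vpartial}, the objective to minimize is
\begin{equation}
    g(\V) = \Tr{\mathbf{G}^\top(\V - \V^q)} + \frac{\tau}{2}\frob{\V - \V^q}^2 + \frac{\rho}{2}\frob{\B \odot \Supp^k \odot \V - \YO^k + \scaledUO^k}^2 .
\end{equation}
Since $\tau>0$, $g$ is strongly convex, so it has a unique global minimizer, which is the unique zero of $\Egrad{\V}{g}$.

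The key observation I would then use is that $g$ is separable across entries: the trace term equals $\sum_{i,j}[\mathbf{G}]_{ij}(v_{ij}-v_{ij}^q)$, both Frobenius norms are sums of squared entries, and the Hadamard products act entrywise, so the $(i,j)$ entry of $\B\odot\Supp^k\odot\V$ is simply $b_{ij}s_{ij}^k v_{ij}$. Hence $g(\V)=\sum_{i,j}\phi_{ij}(v_{ij})$ where
\begin{equation}
    \phi_{ij}(v_{ij}) = [\mathbf{G}]_{ij}(v_{ij}-v_{ij}^q) + \frac{\tau}{2}(v_{ij}-v_{ij}^q)^2 + \frac{\rho}{2}\big(b_{ij}s_{ij}^k v_{ij} - y_{1_{ij}}^k + u_{1_{ij}}^k\big)^2 ,
\end{equation}
and each scalar $v_{ij}$ can be optimized independently.

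Finally I would set $\phi_{ij}'(v_{ij})=0$, i.e.
\begin{equation}
    [\mathbf{G}]_{ij} + \tau(v_{ij}-v_{ij}^q) + \rho\, b_{ij}s_{ij}^k\big(b_{ij}s_{ij}^k v_{ij} - y_{1_{ij}}^k + u_{1_{ij}}^k\big) = 0 ,
\end{equation}
collect the coefficient of $v_{ij}$ — which equals $\tau + \rho\,b_{ij}^2(s_{ij}^k)^2 = \tau + \rho\,b_{ij}(s_{ij}^k)^2$ after using that $\B$ is binary so $b_{ij}^2=b_{ij}$ — and divide through to obtain the closed form \cref{updateV_elementwise_partial}; strong convexity guarantees this stationary point is the global minimizer. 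I do not expect any real obstacle here: the step reduces to scalar quadratic minimization, precisely because the manifold constraint has been offloaded to the splitting variables $\YO,\YT$. The only points needing care are checking that the Hadamard/Frobenius structure genuinely decouples the objective entrywise, and invoking $b_{ij}^2=b_{ij}$ to simplify the coefficient of $v_{ij}$.
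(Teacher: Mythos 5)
Your proposal is correct and is essentially the paper's own argument: the paper likewise imposes the stationarity condition $\zeros_{\ell\times h} = \Egrad{\V}{f}\at{(\V^q,\Supp^k)} + \tau(\V-\V^q) + \rho\,\B\odot\Supp^k\odot(\B\odot\Supp^k\odot\V - \YO^k + \scaledUO^k)$ and solves entrywise, with your added remarks on strong convexity, separability, and $b_{ij}^2=b_{ij}$ merely making explicit what the paper leaves implicit. Note that your (correct) derivation produces the coefficient $\tau + \rho\,b_{ij}(s_{ij}^k)^2$, whereas the lemma's displayed denominator omits the $\rho$ — an apparent typo in the statement rather than a flaw in your argument.
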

\begin{proof}
    The proof follows by imposing the stationarity condition
    \begin{equation}
        \mathbf{0}_{\ell\times h} = \Egrad{\V}{f\at{(\V^q, \Supp^k}} + \tau \left(\V-\V^q\right) + \rho \, \B \odot \Supp^k \odot \left( \B \odot \Supp^k \odot \V - \YO^k +\scaledUO^k\right)\,,
    \end{equation}
    and solving for \V.
\end{proof}

Additionally, the diminishing stepsize $\gamma^k$ has to satisfy the classical stochastic approximation conditions \cite{nedic2018parallel},
\begin{equation}\label{eq:SCA_stepsize_conditions}
    \text{\emph{(i)}} \sum_{q=1}^\infty \gamma^q = \infty \quad  \text{and} \quad\text{\emph{(ii)}} \sum_{q=1}^\infty \left( \gamma^q \right)^2 < \infty\,.
\end{equation}
In our experiments, we use the decaying rule 
\begin{equation}\label{eq:SCA_stepsize_rule}
    \gamma^{q+1}=\gamma^q\left(1-\varepsilon \,\gamma^q\right)\,, \quad \varepsilon \in (0,1)\,.
\end{equation}
The SCA framework is guaranteed to converge to stationary points of the original nonconvex problem in \Cref{eq:updateV_nonconvex_partial} \cite{nedic2018parallel}.
Accordingly, we establish convergence for the update when
\begin{equation}\label{eq:convergenceV_approx}
    \frob{\V^{q+1}-\V^q}\leq \tau^c\,, \quad \tau^c \approx 0\,;
\end{equation}
and set $\V^{k+1}=\V^{q+1}$.

\subsection{\texorpdfstring{Update for $\Supp^{k+1}$}{Update for S}}\label{subsec:updateS_partial}
Starting from \Cref{eq:scaledAUL_partial,eq:ADMM_partial}, the subproblem we have to solve is
\begin{equation}\label{eq:updateS_nonconvex_partial}
    \begin{aligned}
        \Supp^{k+1}=&\argmin_{\Supp \in \umatdim} f(\V^{k},\Supp) + \frac{\rho}{2}\frob{\B \odot \V^{k+1} \odot \Supp - \YT^k +\scaledUT^k}^2 + \frac{\rho}{2}\frob{\left(\B \odot \Supp\right)^\top - \X^k + \scaledW^k}^2\,,\\
        & \textrm{subject to} \quad \ones_h - \left(\B \odot \Supp\right)^\top \ones_\ell \leq \zeros_h\,. \\
    \end{aligned}
\end{equation}
The subproblem above is nonconvex and constrained.
Similarly to \Cref{subsec:updateV_partial}, we apply the SCA framework.
Denote by $q$ the SCA iteration and set $\Supp^0=\Supp^k$ for $q=0$.
Here, the strongly convex surrogate of $f(\V^{k+1},\Supp)$ reads as
\begin{equation}\label{eq:strongly_convex_surrogate_Spartial}
    \widetilde{f}(\Supp;\V^{k+1}, \Supp^q) \coloneqq \Tr{\Egrad{\Supp}{f}\at{\left(\V^{k+1},\Supp^q\right)}^\top \left(\Supp - \Supp^q\right)} + \frac{\tau}{2}\frob{\Supp -\Supp^q}^2\,,
\end{equation}
which satisfies $\Egrad{\Supp}{f}\at{\left(\V^{k+1},\Supp^q\right)}=\Egrad{\Supp}{\widetilde{f}}\at{\left(\V^{k+1},\Supp^q\right)}$.
At each SCA iteration $q$, we solve a constrained quadratic programming (QP) problem and apply the smoothing step by using the stepsize $\gamma^q \in \reall_+$ complying with the conditions in \Cref{eq:SCA_stepsize_conditions}.
In detail, let $\myvec{\mathbf{A}}$ be the column-wise vectorization of a given matrix $\mathbf{A}$ and define
\begin{equation}\label{eq:Q_and_c_Supdate}
    \begin{aligned}
        \mathbf{Q} &= \tau\identity_{\ell h} + \rho\,\mathrm{diag}\left( \left(\myvec{\B}\odot\myvec{\V^{k+1}}\right)\odot\left(\myvec{\B}\odot\myvec{\V^{k+1}}\right)\right) + \rho\,\mathrm{diag}\left(\myvec{\B}\odot \myvec{\B}\right)\,,\\
        \mathbf{c} &= \myvec{\Egrad{\Supp}{f}\at{\left(\V^{k+1},\Supp^q\right)}} - \tau\, \myvec{\Supp^q} - \rho\, \myvec{\YT^k - \scaledUT^k} \odot \myvec{\B} \odot \myvec{\V^{k+1}} - \rho \, \myvec{\B} \odot \myvec{\left(\X^k - \scaledW^k\right)^\top}\,.
    \end{aligned}
\end{equation}
Additionally, recall that $\myvec{\mathbf{A} \mathbf{C}} = \left(\mathbf{C}^\top \otimes \identity_h \right)\myvec{\mathbf{A}}$, with $\mathbf{A} \in \reall^{h\times \ell}$ and $\mathbf{C} \in \reall^{\ell \times m}$.
Hence, denoting with $\mathbf{K}^{\ell,h}$ the commutation matrix, 
the inequality constraint can be rewritten as
\begin{equation}\label{eq:inequality_constr_S}
    \begin{aligned}
        \ones_h - \myvec{\left(\B \odot \Supp\right)^\top \ones_\ell} &= \ones_h - \left(\ones_\ell^\top \otimes \identity_h \right) \myvec{\left(\B \odot \Supp\right)^\top} \\
        &= \ones_h - \left(\ones_\ell^\top \otimes \identity_h \right) \mathbf{K}^{\ell,h} \myvec{\B \odot \Supp} \\
        &= \ones_h - \left(\ones_\ell^\top \otimes \identity_h \right) \mathbf{K}^{\ell,h} \myvec{\mathrm{diag}\left(\myvec{\B}\right) \myvec{\Supp}}\\
        &= \ones_h - \underbrace{\left(\ones_\ell^\top \otimes \identity_h \right) \mathbf{K}^{\ell,h} \mathrm{diag}\left(\myvec{\B}\right)}_{\G} \myvec{\Supp} \leq \zeros_h\,.
    \end{aligned}    
\end{equation}
At this point, starting from \Cref{eq:updateS_nonconvex_partial} and exploiting \Cref{eq:Q_and_c_Supdate,eq:inequality_constr_S}, we can pose the SCA recursion: 
\begin{equation}\label{eq:SCA_recursion_S}
    \begin{aligned}        
        \myvec{\Supp}^{q+1} =& \argmin_{\Supp \in \umatdim}\quad \frac{1}{2}\myvec{\Supp}^\top \mathbf{Q} \myvec{\Supp} + \mathbf{c}^\top \myvec{\Supp}\,, \quad\textrm{(QP problem)}\\
        &\textrm{subject to} \quad \ones_h -  \G \myvec{\Supp} \leq \zeros_h\,. \\
        \myvec{\Supp}^{q+1} &= \myvec{\Supp}^q + \gamma^k\left(\myvec{\Supp}^{q+1}-\myvec{\Supp}^q\right)\,. \quad\textrm{(Smoothing)}
    \end{aligned}
\end{equation}
The QP problem in \Cref{eq:SCA_recursion_S} can be solved through off-the-shelf quadratic programming solvers.
In our experiments, we use the OSQP \cite{osqp} implementation available in \texttt{cvxpy} \cite{diamond2016cvxpy}.
Since the quadratic form involves a diagonal, positive definite matrix $\mathbf{Q}$, in case a solution exists in the feasible set determined by the inequality constraint, it is also unique.
Regarding the smoothing step, $\gamma^q$ follows \Cref{eq:SCA_stepsize_rule}.
Similarly to \Cref{subsec:updateV_partial}, we determine convergence when
\begin{equation}\label{eq:convergenceS_approx}
    \frob{\myvec{\Supp}^{q+1}-\myvec{\Supp}^q}\leq \tau^c\,, \quad \tau^c \approx 0\,;
\end{equation}
and set $\Supp^{k+1}=\Supp^{q+1}$, where $\Supp^{q+1}$ is the reshaping of $\myvec{\Supp}^{q+1}$ in matrix form.

\subsection{\texorpdfstring{Update for $\YO^{k+1}$ and $\YT^{k+1}$}{Update for Y1 and Y2}}
Starting from \Cref{eq:scaledAUL_partial,eq:ADMM_partial}, the subproblem to solve is
\begin{equation}\label{eq:updateY1_partial}
    \begin{aligned}
        \YO^{k+1}&=\argmin_{\YO \in \stiefel{\ell}{h}}\quad \frac{\rho}{2}\frob{\B \odot \Supp^{k+1} \odot \V^{k+1} - \YO +\scaledUO^k}^2\\
                &=\prox_{\stiefel{\ell}{h}}\left(\widetilde{\mathbf{Y}}_1\right)\,, \quad \text{with } \widetilde{\mathbf{Y}}_1\coloneqq\B \odot \Supp^{k+1} \odot \V^{k+1} +\scaledUO^k\,.
    \end{aligned}            
\end{equation}
The evaluation of $\prox_{\stiefel{\ell}{h}}(\widetilde{\mathbf{Y}_1})$ in \Cref{eq:updateY1_partial} is equivalent to the (unique) solution of the closest orthogonal approximation problem \cite{fan1955some,higham1986computing}.
Specifically, it is equal to the $\mathbf{U}_{p_1}$ factor of the polar decomposition of the matrix $\widetilde{\mathbf{Y}}_1=\mathbf{U}_{p_1} \mathbf{P}_{p_1}$, namely
\begin{equation}\label{eq:updateY1_solution}
    \YO^{k+1}=\mathbf{U}_{p_1}\,.
\end{equation}

Similarly, defining $\widetilde{\mathbf{Y}}_2\coloneqq\B \odot \Supp^{k+1} \odot \V^{k+1} +\scaledUT^k$ and considering the polar decomposition $\widetilde{\mathbf{Y}}_2=\mathbf{U}_{p_2} \mathbf{P}_{p_2}$, we have
\begin{equation}\label{eq:updateY2_solution}
    \YT^{k+1}=\mathbf{U}_{p_2} \,.
\end{equation}

\subsection{\texorpdfstring{Update for $\X^{k+1}$}{Update for X}}
Starting from \Cref{eq:scaledAUL_partial,eq:ADMM_partial}, the subproblem reads as
\begin{equation}\label{eq:updateX_partial}
    \begin{aligned}
        \X^{k+1}&= \argmin_{\X \in \sphere{h}{\ell}} \frac{\rho}{2}\frob{\left(\B \odot \Supp^{k+1}\right)^\top - \X + \scaledW^k}^2\\
                &= \prox_{\sphere{h}{\ell}}\left(\left(\B \odot \Supp^{k+1}\right)^\top + \scaledW^k\right)\,.
    \end{aligned}
\end{equation}
The following result gives the solution.

\begin{lemma}\label{lemma:proximal_spDelta}
    Consider 
    \begin{equation}\label{eq:spDelta}
        \sphere{h}{\ell} \coloneqq \Big\{\mathbf{A} \in \{0,1\}^{h\times\ell} \mid  \norm{\mathbf{a}_j}_2=1 \text{ and }\\
        \sum_{i=1}^h a_{ij}=1,\,\,\forall j \in [\ell]  \Big\}\,;
\end{equation}
    and $\A \in \reall^{h \times \ell}$.
    The proximal operator 
    \begin{equation}\label{eq:proximal_spDelta}
        \prox_{\sphere{h}{\ell}}\left(\A\right)\coloneqq \argmin_{\X \in \rmatdimT} \frob{\A - \X}\,,    
    \end{equation}
     is the matrix $\X^\star$ such that
    \begin{equation}
        \forall \, j \in [\ell], \; x_{ij}^\star = \begin{cases}
            1\,, \quad &\text{if } a_{ij} = \argmin_{i} \abs{a_{ij}-1}\,,\\
            0\,, & \text{otherwise.}
        \end{cases}
    \end{equation}
\end{lemma}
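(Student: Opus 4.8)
The plan is to exploit the fact that both the feasible set $\sphere{h}{\ell}$ and the squared Frobenius objective factorize column-wise, reducing the whole projection to $\ell$ independent one-dimensional selection problems. First I would observe that membership in $\sphere{h}{\ell}$ constrains each column independently: a vector in $\{0,1\}^h$ whose entries sum to one (equivalently, whose $\ell_2$-norm equals one) must have exactly one entry equal to $1$ and all others $0$, so each admissible column is a standard basis vector $\mathbf{e}_i$, $i \in [h]$. Hence $\sphere{h}{\ell}$ is precisely the set of matrices whose columns are one-hot, and it is a Cartesian product of $\ell$ finite per-column sets.

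Next, since $\frob{\A - \X}^2 = \sum_{j=1}^\ell \norm{\mathbf{a}_j - \mathbf{x}_j}_2^2$ and the feasible set splits over columns, minimizing $\frob{\A - \X}$ — equivalently $\frob{\A - \X}^2$, the square being monotone on the nonnegatives — decouples into the $\ell$ separate problems $\min_{i \in [h]} \norm{\mathbf{a}_j - \mathbf{e}_i}_2^2$. For the $j$-th column I would expand
\[
    \norm{\mathbf{a}_j - \mathbf{e}_i}_2^2 = \norm{\mathbf{a}_j}_2^2 - 2\,a_{ij} + 1,
\]
and note that the first and third terms are independent of the chosen index $i$. Thus the minimizer is any $i$ maximizing $a_{ij}$, which is exactly the index minimizing $\abs{a_{ij}-1}$; setting $x_{ij}^\star = 1$ at that index and $0$ elsewhere, for every column $j$, yields the claimed closed form.

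The algebraic expansion is routine; the only genuine subtlety I anticipate is the reduction step, namely verifying that the two column constraints (unit $\ell_2$-norm and entrywise sum one over $\{0,1\}$) each force exactly one unit entry, so that the feasible set really is the product of one-hot selections and the objective genuinely separates. A secondary point worth flagging is uniqueness: when two entries of a column of $\A$ tie for the maximum, the minimizer is not unique, so the stated $\X^\star$ should be read as one optimal selection, any consistent tie-breaking being equally optimal. No Lagrangian or KKT machinery is needed here, since each per-column problem is a minimization over a finite set.
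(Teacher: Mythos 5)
Your approach is the same as the paper's: each column of a feasible $\X$ must be a one-hot vector, so the squared Frobenius objective decouples into $\ell$ independent per-column selections; the paper's proof states exactly this reduction and stops there. Your explicit expansion $\norm{\mathbf{a}_j - \mathbf{e}_i}_2^2 = \norm{\mathbf{a}_j}_2^2 - 2a_{ij} + 1$ is a welcome addition, and your remarks on why the two column constraints force one-hot columns and on non-uniqueness under ties are both correct.

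However, the final identification is not an identity: the index maximizing $a_{ij}$ is \emph{not} in general the index minimizing $\abs{a_{ij}-1}$. The two criteria agree whenever every entry of the column is at most $1$ (there $\abs{a_{ij}-1}=1-a_{ij}$), but they can disagree otherwise: for a column with entries $0.9$ and $3$, the maximizer is the second entry while the entry closest to $1$ is the first, and your own expansion shows the true Euclidean projection selects the second (squared distance $\norm{\mathbf{a}_j}_2^2 - 5$ versus $\norm{\mathbf{a}_j}_2^2 - 0.8$). This case is not vacuous in context, since the lemma is applied to $\A = \left(\B \odot \Supp^{k+1}\right)^\top + \scaledW^k$, whose entries need not lie in $[0,1]$ because of the unconstrained dual variable $\scaledW^k$. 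So you should either drop the claimed equivalence and state the minimizer as $\argmax_{i} a_{ij}$ --- which your computation correctly establishes and which is the actual projection --- or add the hypothesis that the entries do not exceed $1$, under which the two criteria coincide. As written, your derivation proves a correct formula that differs from the one in the lemma statement, and the asserted equivalence papering over that difference is the gap; note that the paper's own one-line proof invokes the ``closest to one'' criterion without the expansion and is therefore subject to the same caveat.
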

\begin{proof}
    To belong to $\sphere{h}{\ell}$, $\X^\star$ must have only a single nonzero entry equal to one for each column $j \in [\ell]$.
    Consequently, the objective in \Cref{eq:proximal_spDelta} is minimized by setting, for each column $j\in [\ell]$, $x_{ij}^\star=1$ in correspondence of the element $a_{ij}$ whose absolute distance from one is minimum.
\end{proof}

\subsection{Stopping criteria}\label{subsec:stopping_criteria_partial}
The empirical convergence of the proposed method is established according to primal and dual feasibility optimality conditions.
In this case, the primal residuals associated with the equality constraints in \Cref{eq:prob_madmmsca_VS_with_splitting} are
\begin{equation}\label{eq:primal_res_partial}
    \begin{aligned}
        \mathbf{R}_{p,1}^{k+1}&\coloneqq\YO^{k+1}-\B\odot\Supp^{k}\odot\V^{k+1}\,;\\
        \mathbf{R}_{p,2}^{k+1}&\coloneqq\YT^{k+1}-\B\odot\V^{k+1}\odot\Supp^{k+1}\,;\\
        \mathbf{R}_{p,3}^{k+1}&\coloneqq \X^{k+1} - \left(\B \odot \Supp^{k+1}\right)^\top\,.
    \end{aligned}
\end{equation}

Additionally, the dual residuals obtained from the stationarity condition are
\begin{equation}\label{eq:dual_res_partial}
    \begin{aligned}        
        \mathbf{R}_{d,1}^{k+1}&\coloneqq \rho \,\B\odot \Supp^{k} \odot \left(\YO^{k+1}-\YO^k\right)\,;\\
        \mathbf{R}_{d,2}^{k+1}&\coloneqq \rho \,\B\odot \V^{k+1} \odot \left(\YT^{k+1} - \YT^k\right)\,;\\ \mathbf{R}_{d,3}^{k+1}&\coloneqq \rho \, \B \odot \left( \X^{k+1}-\X^k\right)^\top\,.
    \end{aligned}
\end{equation}

Following \cite{boyd2011distributed}, denoting with $\tau^a$ and $\tau^r$ in $\reall_+$ the absolute and relative tolerances, respectively, the stopping criteria to be satisfied for empirical convergence are
\begin{equation}\label{eq:stopping_criteria_partial}
    \begin{aligned}
        \frob{\mathbf{R}_{p,1}^{k+1}}&=d_{p,1}^{k+1}\leq \tau^a \sqrt{\ell h} + \tau^r \max{\left(\frob{\YO^{k+1}}, \frob{\B\odot \Supp^{k} \odot \V^{k+1}}\right)}\,,\\
        \frob{\mathbf{R}_{p,2}^{k+1}}&=d_{p,2}^{k+1}\leq \tau^a \sqrt{\ell h} + \tau^r \max{\left(\frob{\YT^{k+1}}, \frob{\B\odot \V^{k+1} \odot \Supp^{k+1}}\right)}\,,\\
        \frob{\mathbf{R}_{p,3}^{k+1}}&=d_{p,3}^{k+1}\leq \tau^a \sqrt{\ell h} + \tau^r \max{\left(\frob{\X^{k+1}}, \frob{\B \odot \Supp^{k+1}}\right)}\,,\\
        \frob{\mathbf{R}_{d,1}^{k+1}}&=d_{d,1}^{k+1}\leq \tau^a \sqrt{\ell h} + \tau^r \rho\, \frob{\B \odot \Supp^{k} \odot \scaledUO^{k+1}}\,,\\
        \frob{\mathbf{R}_{d,2}^{k+1}}&=d_{d,2}^{k+1}\leq \tau^a \sqrt{\ell h} + \tau^r \rho\, \frob{\B \odot \V^{k+1} \odot \scaledUT^{k+1}}\,,\\
        \frob{\mathbf{R}_{d,3}^{k+1}}&=d_{d,3}^{k+1}\leq \tau^a \sqrt{\ell h} + \tau^r \rho \, \frob{\B^\top \odot \scaledW^{k+1}}\,.\\
    \end{aligned}
\end{equation}

The CLinSEPAL method is summarized in \Cref{alg:clinsepal}

\begin{algorithm}[H]
\caption{CLinSEPAL}
\label{alg:clinsepal}
\begin{algorithmic}[1]
\STATE \textbf{Input:} $\covlow$, $\covhigh$, $\B$, $\rho$, $\tau$, $\varepsilon$, $\tau^c$, $\tau^a$, $\tau^r$
\STATE Initialize: $\V^0 \in \rmatdim$, $\Supp^0 = \B$, $\YO^0 \in \stiefel{\ell}{h}$, $\YT^0 \in \stiefel{\ell}{h}$, $\X^0=\B^\top$, $\scaledUO^0 \gets \B\odot\Supp^0\odot\V^0 - \YO^0$, $\scaledUT^0 \gets \B\odot\Supp^0\odot\V^0 - \YT^0$, $\scaledW^0 \gets (\B \odot \Supp^0)^\top - \X^0$
\REPEAT
    \STATE $\V^{k+1} \gets \text{Apply \cref{eq:SCA_recursion_V}}$ 
    \STATE $\Supp^{k+1} \gets \text{Apply \cref{eq:SCA_recursion_S}}$ 
    \STATE $\YO^{k+1} \gets \text{\cref{eq:updateY1_solution}}$
    \STATE $\YT^{k+1} \gets \text{\cref{eq:updateY2_solution}}$
    \STATE $\scaledUO^{k+1} \gets \scaledUO^k + \B\odot\Supp^{k}\odot\V^{k+1} - \YO^{k+1}$
    \STATE $\scaledUT^{k+1} \gets \scaledUT^k + \B\odot\V^{k+1}\odot\Supp^{k+1} - \YT^{k+1}$
    \STATE $\scaledW^{k+1} \gets \scaledW^k + \left(\B \odot \Supp^{k+1}\right)^\top - \X^{k+1}$
\UNTIL{\Cref{eq:stopping_criteria_partial} is satisfied}
\STATE \textbf{Output:} $\V$, $\Supp$, $\YO$, $\YT$, $\X$, $\scaledUO$, $\scaledUT$, $\scaledW$
\end{algorithmic}
\end{algorithm} 

\subsection{Full prior case}\label{subsec:CLinSEPAL_full_prior}
\Cref{prob:nonconvex_prob_approx} simplifies in case of full prior knowledge of \B. 
Indeed, it is not needed to learn \Supp since $\Supp \equiv \B$.
Accordingly, we get the following.

\begin{problem}\label{prob:nonconvex_prob_full}
Given $\covlow \in \pd^{\ell}$, $\covhigh \in \pd^{h}$, and $\B \in \lmatdim$, the linear constructive CA is given by the transpose of the product $\B \odot \V$, where 
    \begin{equation}\label{eq:prob_madmmsca_V}
        \begin{aligned}
            \V^\star = \argmin_{\V \in \rmatdim} &\quad f(\V)\,;\\
             \textrm{subject to} & \quad \B \odot \V \in \stiefel{\ell}{h}\,; \\
        \end{aligned}
    \end{equation}
    and
    \begin{equation}\label{eq:objective_full_knowledge}
        f(\V) \coloneqq \Tr{\left(\left(\B \odot \V\right)^\top \covlow \left(\B \odot \V\right) \right)^{-1} \covhigh} + \log\det {\left(\B \odot\V\right)^\top \covlow \left(\B \odot\V\right) }\,.
    \end{equation}
\end{problem}

The solution can be obtained in a similar manner as for the partial prior knowledge case.
Below, we report the mathematical derivation for completeness without further comments.

\begin{corollary}\label{corollary:objective_full_knowledge}
    The function $f(\V)$ in \Cref{eq:objective_full_knowledge} is smooth.
    Additionally, define $\mathbf{A}\coloneqq\left(\B \odot \V\right)$ and $\widetilde{\mathbf{A}}\coloneqq\left(\mathbf{A}^\top \covlow \mathbf{A}\right)^{-1}$.
    The gradient is
    \begin{equation}\label{eq:gradV_full}
        \Egrad{\V}{f} = 2\B \odot  \left(\left(\covlow\mathbf{A}\widetilde{\mathbf{A}}\right)\left(\identity_h - \covhigh\widetilde{\mathbf{A}}\right)\right)\,,
    \end{equation}
\end{corollary}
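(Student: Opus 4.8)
The plan is to observe that the full-prior objective in \Cref{eq:objective_full_knowledge} is nothing but the composition $f(\V) = g(\mathbf{A})$, where $\mathbf{A} = \B \odot \V$ and $g(\mathbf{A}) = \Tr{\left(\mathbf{A}^\top \covlow \mathbf{A}\right)^{-1} \covhigh} + \log\det{\mathbf{A}^\top \covlow \mathbf{A}}$ is exactly the function already analyzed in \Cref{prop:smoothness_and_differentiability}. Both claims then reduce to that proposition plus a chain rule, in complete analogy with the partial-knowledge case of \Cref{corollary:objective_partial_knowledge} (there one simply carries an extra factor $\Supp$; here the structure is encoded directly in $\B$).

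For smoothness, I would first note that in \Cref{prob:nonconvex_prob_full} the matrix $\mathbf{A} = \B \odot \V$ is constrained to $\stiefel{\ell}{h}$, so $\mathbf{A}^\top \mathbf{A} = \identity_h$. The positive-definiteness argument in the proof of \Cref{prop:smoothness_and_differentiability} then transfers verbatim: for any $\mathbf{y} \neq \zeros_h$ we have $\mathbf{A}^\top(\mathbf{A}\mathbf{y}) = \mathbf{y} \neq \zeros_h$, so $\mathbf{A}\mathbf{y} \neq \zeros_\ell$, and since $\covlow \in \pd^\ell$ this gives $\mathbf{A}^\top \covlow \mathbf{A} \in \pd^h$. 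Hence the trace-of-inverse and log-determinant terms are well-defined and smooth in $\mathbf{A}$; as $\V \mapsto \B \odot \V$ is linear, $f$ is smooth by composition.

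For the gradient I would take $\Egrad{\mathbf{A}}{g} = 2\left(\covlow \mathbf{A} \widetilde{\mathbf{A}}\right)\left(\identity_h - \covhigh \widetilde{\mathbf{A}}\right)$ from \Cref{eq:gradA}, with $\widetilde{\mathbf{A}} = \left(\mathbf{A}^\top \covlow \mathbf{A}\right)^{-1}$, and transport it through the Hadamard reparameterization $\mathbf{A} = \B \odot \V$. Writing the differential $\mathrm{d}f = \langle \Egrad{\mathbf{A}}{g}, \B \odot \mathrm{d}\V\rangle = \langle \B \odot \Egrad{\mathbf{A}}{g}, \mathrm{d}\V\rangle$, where the second equality uses that Hadamard multiplication by the fixed matrix $\B$ is self-adjoint for the Frobenius inner product, I obtain $\Egrad{\V}{f} = \B \odot \Egrad{\mathbf{A}}{g}$. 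Substituting \Cref{eq:gradA} yields exactly \Cref{eq:gradV_full}. Equivalently, one can combine the two matrix-calculus identities of \Cref{eq:rules_matrix_calculus} with the entrywise relation $\partial a_{kl}/\partial v_{ij} = b_{ij}\,\delta_{ki}\delta_{lj}$, as done for the partial case.

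I do not expect a genuine obstacle, since this is a direct corollary of \Cref{prop:smoothness_and_differentiability}. The only point requiring care is the Hadamard chain rule: one must confirm that differentiating through $\mathbf{A} = \B \odot \V$ produces a left Hadamard multiplication by $\B$ (and not by $\mathbf{A}$), which is precisely what $\partial a_{kl}/\partial v_{ij} = b_{ij}\,\delta_{ki}\delta_{lj}$ encodes. A secondary subtlety is that smoothness holds only where $\mathbf{A}^\top \covlow \mathbf{A}$ is invertible, which is exactly guaranteed by the Stiefel constraint in \Cref{prob:nonconvex_prob_full}; outside that constraint the claim may fail.
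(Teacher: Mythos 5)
Your proposal is correct and follows essentially the same route as the paper's proof, which likewise reduces both claims to \Cref{prop:smoothness_and_differentiability} via the substitution $\mathbf{A}=\B\odot\V$ (using the Stiefel constraint in \Cref{eq:prob_madmmsca_V} for positive-definiteness) and then applies the matrix-calculus rules of \Cref{eq:rules_matrix_calculus} with the chain rule. The paper's proof is terser; your explicit verification that Hadamard multiplication by $\B$ is self-adjoint under the Frobenius inner product is exactly the step it leaves implicit.
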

\begin{proof}
    Smoothness directly follows from \Cref{prop:smoothness_and_differentiability} by defining $\mathbf{A}\coloneqq\left(\B \odot \V\right)$, which is constrained to \stiefel{\ell}{h} as given in \Cref{eq:prob_madmmsca_V}.
    The gradient in \Cref{eq:gradV_full} follows from the application of \Cref{eq:rules_matrix_calculus}, together with the chain rule for derivatives. 
\end{proof}

Starting from \Cref{eq:prob_madmmsca_V}, we get the following equivalent minimization problem
\begin{equation}\label{eq:prob_madmmsca_V_with_splitting}
    \begin{aligned}
        \V^\star, \Y^\star = \argmin_{\substack{\V \in \rmatdim\\ \Y \in \stiefel{\ell}{h} \\}} &\quad f(\V)\,;\\
         \textrm{subject to} & \quad \Y - \B \odot \V = \zeros_{\ell \times h}\,.
    \end{aligned}
\end{equation}

Considering the scaled dual variable $\scaledU \in\rmatdim$, the scaled augmented Lagrangian is
\begin{equation}\label{eq:scaledAUL_full}
    L_\rho\left(\V,\Y,\scaledU\right)=f(\V) + \frac{\rho}{2}\frob{\B\odot\V - \Y +\scaledU}^2 \,.
\end{equation}

The ADMM recursion is
\begin{equation}\label{eq:ADMM_full}
    \begin{aligned}        
        \V^{k+1}=&\argmin_{\V \in \rmatdim} L_\rho\left(\V,\Y^k,\scaledU^k\right)\,;\\
        \Y^{k+1}=&\argmin_{\Y \in \stiefel{\ell}{h}} L_\rho\left(\V^{k+1},\Y,\scaledU^k\right)\,;\\
        \scaledU^{k+1}=&\scaledU^k + \left(\B\odot\V^{k+1} - \Y^{k+1}\right)\,.\\
    \end{aligned}
\end{equation}

\subsubsection{\texorpdfstring{Update for $\V^{k+1}$}{Update for V}}\label{subsec:updateV_full}
Starting from \Cref{eq:scaledAUL_full,eq:ADMM_full}, the subproblem we have to solve is
\begin{equation}\label{eq:updateV_nonconvex_full}
    \V^{k+1}=\argmin_{\V \in \rmatdim}\quad f(\V) + \frac{\rho}{2}\frob{\B\odot\V - \Y^k +\scaledU^k}^2\,.
\end{equation}
\Cref{eq:updateV_nonconvex_full} is nonconvex due to the inherent nonconvexity of $f(\V)$.
However, the latter function is smooth and differentiable w.r.t. \V, as given in \Cref{corollary:objective_full_knowledge}.
Hence, we apply the SCA framework.
In detail, denote by $q$ the SCA iteration and set $\V^0=\V^k$ for $q=0$.
We derive a strongly convex surrogate $\widetilde{f}(\V;\V^q)$ around the point $\V^q$ -- i.e., the solution at the iterate $q$ -- exploiting \Cref{eq:gradV_full}, 
\begin{equation}\label{eq:strongly_convex_surrogate_Vfull}
    \widetilde{f}(\V;\V^q) \coloneqq \Tr{\Egrad{\V}{f}\at{\V^q}^\top \left(\V - \V^q\right)} + \frac{\tau}{2}\frob{\V -\V^q}^2\,.
\end{equation}

Therefore, at each SCA iteration $q$, we solve a strongly convex problem in closed-form and then apply the usual smoothing operation by using a diminishing stepsize $\gamma^q \in \reall_+$ following \cref{eq:SCA_stepsize_rule} and satisfying \cref{eq:SCA_stepsize_conditions}.
Specifically,
\begin{equation}\label{eq:SCA_recursion_V_full}
    \begin{aligned}        
        \V^{q+1} &= \argmin_{\V \in \rmatdim}\quad \widetilde{f}(\V;\V^q) +\frac{\rho}{2}\frob{\B\odot\V - \Y^k +\scaledU^k}^2\,, \quad\textrm{(Strongly convex problem)}\\
        \V^{q+1} &= \V^q + \gamma^k\left(\V^{q+1}-\V^q\right)\,. \quad\textrm{(Smoothing)}
    \end{aligned}
\end{equation}

The solution of the strongly-convex problem is given element-wise in \Cref{lemma:updateV_elementwise_full}. 
\begin{lemma}\label{lemma:updateV_elementwise_full}
    The update for $\V^{q+1}$ can be computed element-wise as
    \begin{equation}\label{updateV_elementwise_full}
        v_{ij}^{q+1}= \frac{1}{\tau + b_{ij}}\Bigg(\rho\, b_{ij} y_{ij}^k - \rho\, b_{ij} u_{ij}^k + \tau\, v_{ij}^q - \Big[\Egrad{\V}{f\at{\V^q}}\Big]_{ij} \Bigg)\,.
    \end{equation}
\end{lemma}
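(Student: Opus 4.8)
The plan is to solve the inner subproblem in \Cref{eq:SCA_recursion_V_full} through its first-order optimality condition, exactly mirroring the argument used for \Cref{lemma:updateV_elementwise_partial}. First I would observe that the objective
\[
g(\V) \coloneqq \widetilde{f}(\V;\V^q) + \frac{\rho}{2}\frob{\B\odot\V - \Y^k + \scaledU^k}^2
\]
is smooth and $\tau$-strongly convex: the surrogate \eqref{eq:strongly_convex_surrogate_Vfull} contributes the quadratic $\tfrac{\tau}{2}\frob{\V-\V^q}^2$ together with a term that is linear in $\V$, and the ADMM penalty is convex in $\V$. Consequently $g$ possesses a unique minimizer, characterized by the vanishing of its Euclidean gradient, $\Egrad{\V}{g}=\zeros_{\ell\times h}$.

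Next I would compute this gradient termwise. The linear surrogate term $\Tr{\Egrad{\V}{f}\at{\V^q}^\top(\V-\V^q)}$ contributes the constant matrix $\Egrad{\V}{f}\at{\V^q}$; the proximal term contributes $\tau(\V-\V^q)$; and, using the Hadamard-weighted least-squares rule $\partial_\V \tfrac{1}{2}\frob{\B\odot\V-\mathbf{C}}^2 = \B\odot(\B\odot\V-\mathbf{C})$, the penalty contributes $\rho\,\B\odot(\B\odot\V-\Y^k+\scaledU^k)$. Setting the sum to zero yields the stationarity condition
\[
\zeros_{\ell\times h} = \Egrad{\V}{f}\at{\V^q} + \tau(\V-\V^q) + \rho\,\B\odot(\B\odot\V-\Y^k+\scaledU^k).
\]

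Finally I would read this condition off entrywise. Because every operation combining the matrices is a Hadamard product, the system decouples over the indices $(i,j)$ into the scalar equation $0 = [\Egrad{\V}{f}\at{\V^q}]_{ij} + \tau(v_{ij}-v_{ij}^q) + \rho\,b_{ij}(b_{ij}v_{ij} - y_{ij}^k + u_{ij}^k)$. The crucial simplification is that \B is a logical matrix, so $b_{ij}\in\{0,1\}$ and hence $b_{ij}^2 = b_{ij}$; this is precisely why the coefficient of $v_{ij}$ involves $b_{ij}$ rather than $b_{ij}^2$. Collecting the $v_{ij}$ terms and dividing then gives the claimed closed form \eqref{updateV_elementwise_full}. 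I expect no genuine obstacle here: the only points that require care are the Hadamard-product gradient and the idempotency $b_{ij}^2=b_{ij}$, while $\tau$-strong convexity guarantees that the stationary point is the global minimizer, so the SCA inner update is well defined.
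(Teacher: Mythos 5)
Your proposal is correct and follows essentially the same route as the paper: the paper's proof likewise imposes the stationarity condition $\zeros_{\ell\times h} = \Egrad{\V}{f}\at{\V^q} + \tau(\V-\V^q) + \rho\,\B\odot(\B\odot\V-\Y^k+\scaledU^k)$ and solves entrywise, with your added remarks on strong convexity and the idempotency $b_{ij}^2=b_{ij}$ being exactly the implicit steps. One small point: carrying out the division carefully yields the denominator $\tau+\rho\,b_{ij}$ rather than $\tau+b_{ij}$, so the formula as stated (in the paper and in your conclusion) implicitly assumes $\rho=1$ — a discrepancy inherited from the lemma statement rather than a flaw in your argument.
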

\begin{proof}
    The proof follows by imposing the stationarity condition
    \begin{equation}
        \mathbf{0}_{\ell\times h} = \Egrad{\V}{f\at{\V^q}} + \tau \left(\V-\V^q\right) + \rho \, \B \odot \left( \B \odot \V - \Y^k +\scaledU^k\right)\,,
    \end{equation}
    and solving for \V.
\end{proof}

We establish convergence for the update when
\begin{equation}\label{eq:convergenceV_full}
    \frob{\V^{q+1}-\V^q}\leq \tau^c\,, \quad \tau^c \approx 0\,;
\end{equation}
and set $\V^{k+1}=\V^{q+1}$.

\subsubsection{\texorpdfstring{Update for $\Y^{k+1}$}{Update for Y}}
Starting from \Cref{eq:scaledAUL_full,eq:ADMM_full}, the subproblem to solve is
\begin{equation}\label{eq:updateY_full}
    \begin{aligned}
        \Y^{k+1}&=\argmin_{\Y \in \stiefel{\ell}{h}}\quad \frac{\rho}{2}\frob{\B \odot \V^{k+1} - \Y +\scaledU^k}^2\\
                &=\prox_{\stiefel{\ell}{h}}(\widetilde{\mathbf{Y}})\,, \quad \text{with } \widetilde{\mathbf{Y}}\coloneqq\B \odot \V^{k+1} +\scaledU^k\,.
    \end{aligned}            
\end{equation}
Denoting by $\mathbf{U}_p \mathbf{P}_p$ the polar decomposition of the matrix $\widetilde{\mathbf{Y}}$, the update is
\begin{equation}\label{eq:updateY_solution}
    \Y^{k+1}=\mathbf{U}_p\,.
\end{equation}

\subsubsection{Stopping criteria}\label{subsec:stopping_criteria_full}
The empirical convergence of the proposed method is established according to primal and dual feasibility optimality conditions \cite{boyd2011distributed}.
The primal residual, associated with the equality constraint in \Cref{eq:prob_madmmsca_V_with_splitting}, is
\begin{equation}\label{eq:primal_res_full}
    \mathbf{R}_p^{k+1}\coloneqq\Y^{k+1}-\B\odot\V^{k+1}\,.
\end{equation}
The dual residual, which can be obtained from the stationarity condition, is
\begin{equation}\label{eq:dual_res_full}
    \mathbf{R}_d^{k+1}\coloneqq \rho \,\B \odot\left(\Y^{k+1}-\Y^k\right)\,.
\end{equation}
As $k \rightarrow \infty$, the norm of the primal and dual residuals should vanish.
Hence, the stopping criterion can be set in terms of the norms
\begin{equation}\label{eq:norms_full}
    \text{\emph{(i)}}\;d_p^{k+1}=\frob{\mathbf{R}_p^{k+1}} \quad \text{and} \quad \text{\emph{(ii)}}\;d_d^{k+1}=\frob{\mathbf{R}_d^{k+1}}\,. 
\end{equation}
Specifically, given absolute and relative tolerance, namely $\tau^a$ and $\tau^r$ in $\reall_+$, respectively, convergence in practice is established following \citet{boyd2011distributed}, when 
\begin{equation}\label{eq:stopping_criteria_full}
    \text{\emph{(i)}}\; d_p^{k+1} \leq \tau^a\sqrt{\ell h} + \tau^r \max{\left(\frob{\Y^{k+1}}, \frob{\B \odot \V^{k+1}}\right)}\,, \quad
    \text{and} \quad
    \text{\emph{(ii)}}\; d_d^{k+1} \leq \tau^a\sqrt{\ell h} + \tau^r \rho  \frob{\B \odot \scaledU^{k+1}}\,.
\end{equation}

The full prior version of CLinSEPAL is summarized in \Cref{alg:clinsepal_fullprior}.

\begin{algorithm}[H]
\caption{CLinSEPAL (full prior case)}
\label{alg:clinsepal_fullprior}
\begin{algorithmic}[1]
\STATE \textbf{Input:} $\covlow$, $\covhigh$, $\B$, $\rho$, $\tau$, $\varepsilon$, $\tau^c$, $\tau^a$, $\tau^r$
\STATE Initialize: $\V^0 \in \rmatdim$, $\Y^0 \in \stiefel{\ell}{h}$, $\scaledU^0 \gets \B\odot\V^0 - \Y^0$
\REPEAT
    \STATE $\V^{k+1} \gets \text{Apply \cref{eq:SCA_recursion_V_full}}$
    \STATE $\Y^{k+1} \gets \text{\cref{eq:updateY_solution}}$
    \STATE $\scaledU^{k+1} \gets \scaledU^k + \B\odot\V^{k+1} - \Y^{k+1}$
\UNTIL{\Cref{eq:stopping_criteria_full} is satisfied}
\STATE \textbf{Output:} $\V$, $\Y$, $\scaledU$
\end{algorithmic}
\end{algorithm} 

% !TEX root =  ../main.tex
\section{Metrics and Hyper-parameters}\label{app:metrics}

This section provides the definition of the metrics monitored in our empirical assessment in \Cref{sec:empirical_assessment,sec:empirical_assessment_rw}.
Additionally, we report the hyper-parameters configuration for \cref{alg:clinsepal,alg:linsepal_admm,alg:linsepal_pg} used in the experiments.

\spara{Metrics.}
Denote by $\V^\star$ and \Vhat the ground-truth and the learned (transpose of the) linear CA, both being matrices in \rmatdim.
The metrics are defined as follows.
\begin{squishlist}
    \item Fraction of learned constructive morphisms: We define constructiveness as 
    \begin{equation}
        \mathrm{constr}=\text{(number of rows with one nonzero entry)}/\ell +\text{(number of columns with at least one nonzero entry)}/h\,.
    \end{equation}
    Then, indicating by $S$ the number of experiments, the metric is given by the number of \Vhat leading to $\mathrm{constr}=1$ divided by $S$.
    \item KL divergence: \Cref{eq:KL} evaluated at \Vhat;
    \item Frobenious absolute distance: 
    \begin{equation}
        \frac{\frob{\abs{\V^\star} - \abs{\Vhat}}}{\frob{\abs{\V^\star}}}\,;
    \end{equation}
    \item $\mathrm{F1}$ score: Given
    \begin{squishlist}
        \item True positive rate $\mathrm{tpr}$: (true positive, $\mathrm{tp}$: number of predicted nonzero entries in \Vhat existing in $\V^\star$)/(number of nonzero entries in $\V^\star$),
        \item False discovery rate $\mathrm{fdr}$: (false positive, $\mathrm{fp}$: number of predicted nonzero entries in \Vhat that do not exist in $\V^\star$)/($\mathrm{tp}+\mathrm{fp}$);
    \end{squishlist}
    the $\mathrm{F1}$ results in the harmonic mean of $\mathrm{tpr}$ and $(1-\mathrm{fdr})$.
\end{squishlist}

\spara{Hyper-parameters.}
\begin{squishlist}
    \item CLinSEPAL: $\rho=1$, $\tau=10^{-3}$, $\varepsilon=0.1$ for the full prior case and $\varepsilon=0.01$ for the partial prior case, $\tau^c=10^{-3}$, $\tau^a=10^{-4}$, $\tau^r=10^{-4}$ . The same hyper-parameters were used in the experiments in \Cref{sec:empirical_assessment_rw};
    \item LinSEPAL-ADMM: $\rho=1$, $\lambda=1$, $\tau^a=10^{-4}$, $\tau^r=10^{-4}$;
    \item LinSEPAL-PG: $\lambda=1$, $\rho=1/\left(2 \frob{\covlow}^2\right)$, $\gamma=0.5$, $\tau^{\mathrm{KL}}=10^{-4}$, $K=1000$ .
\end{squishlist}

\clearpage
% !TEX root =  ../main.tex
\section{Additional Results on Synthetic Data}\label{app:synth}

\begin{figure}[h]
    \centering
    \includegraphics[width=.6\textwidth]{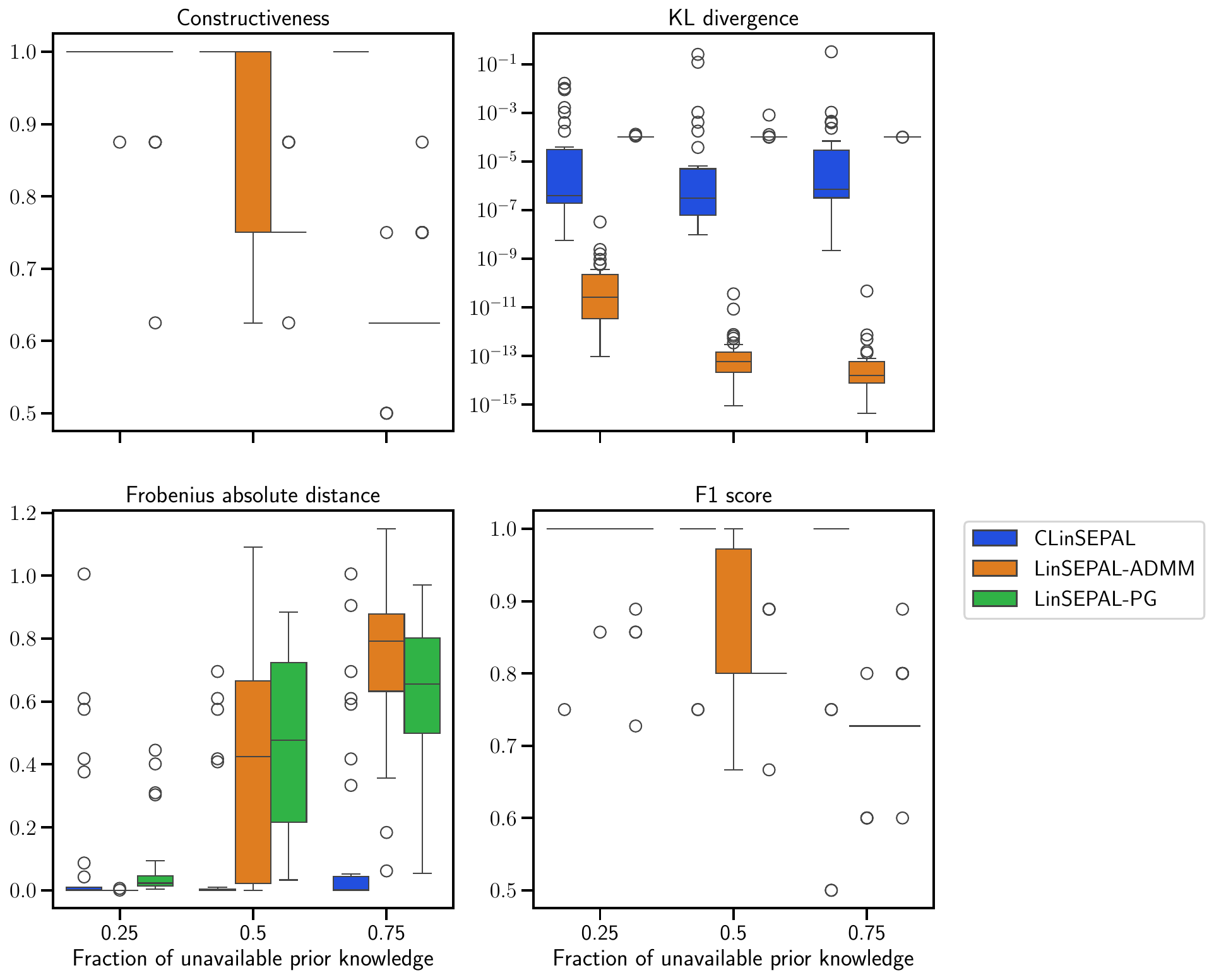}
    \caption{The figure shows the results in the pp setting where the $90\%$ threshold on constructiveness was removed. 
    Differently from \Cref{fig:partial_synth_data}, top left boxplots are for constructiveness values. 
    The remaining plots are as in \Cref{fig:partial_synth_data}.
    From the figure we see that although LinSEPAL-ADMM and LinSEPAL-PG minimize the misalignment between the (abstracted) low- and high-level probability measures (top right), the quality of learned CAs is poor compared to that of learned CAs from CLinSEPAL (bottom plots).}
    \label{fig:partial_synth_data_additional}
\end{figure}
% !TEX root =  ../main.tex
\section{Additional Material for the Causal Abstraction of Brain Networks}\label{app:rw_figs}

This section provides additional material about the full and partial prior applications of CLinSEPAL to brain data, given in \Cref{sec:empirical_assessment_rw}.
Specifically, \Cref{fig:ROIsLobes} depicts the ground truth linear CA and the learned linear CA by CLinSEPAL for the full prior setting; whereas \Cref{fig:ROIsFun_ca} the results for the partial prior setting.
Regarding the partial prior setting, we also report in \Cref{fig:ROIsFun_pp} the partial prior received as an input by CLinSEPAL for all the settings, and in \Cref{fig:ROIsFun_metrics} the monitored metrics to better understand the performance of CLinSEPAL with varying degree of uncertainty (low, medium, high), as discussed in \Cref{sec:empirical_assessment_rw}.
The color coding for the partial prior setting refers to the following classification, reported unaltered from \cite{gabriele2024extracting}:
\begin{squishlist}
    \item Red for ROIs corresponding to cognitive functions, attention, emotion, and decision-making;
    \item Orange for those related to auditory processing, speech and language processing, and memory;
    \item Blue for those concerning memory formation and memory retrieval;
    \item Pink for those associated with sensory integration and somatosensory;
    \item Purple for the ROIs within the visual network and related to the visual memory;
    \item Green for those within the motor network;
    \item Yellow for those regarding the motor control and the posture.
\end{squishlist}

\begin{figure}
    \centering
    \includegraphics[width=1.\textwidth]{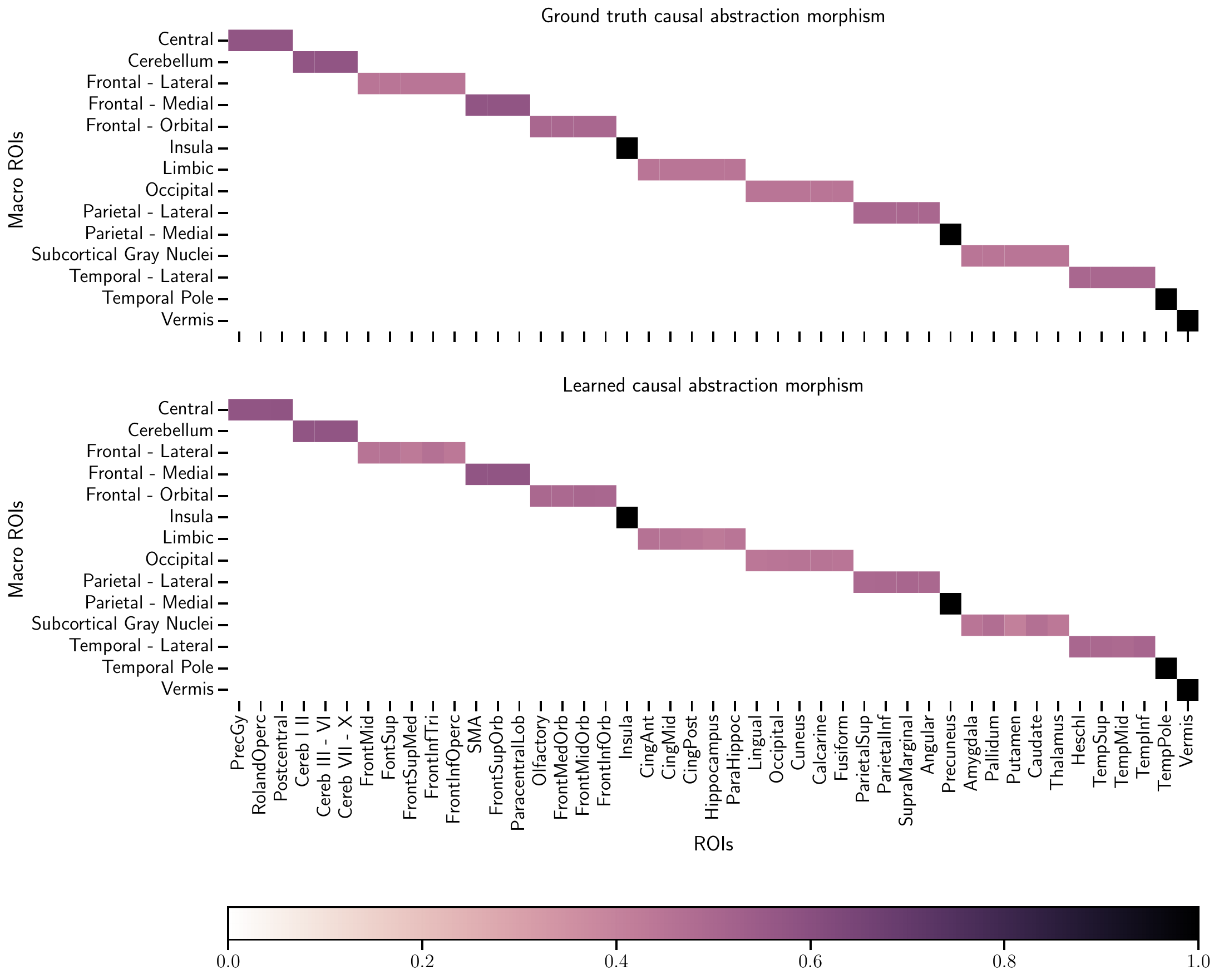}
    \caption{The figure shows (top) the ground truth linear CA and (bottom) the learned linear CA for the simulated full prior setting in \Cref{sec:empirical_assessment_rw}.}
    \label{fig:ROIsLobes}
\end{figure}

\begin{figure}
    \centering
    \includegraphics[width=1.\textwidth]{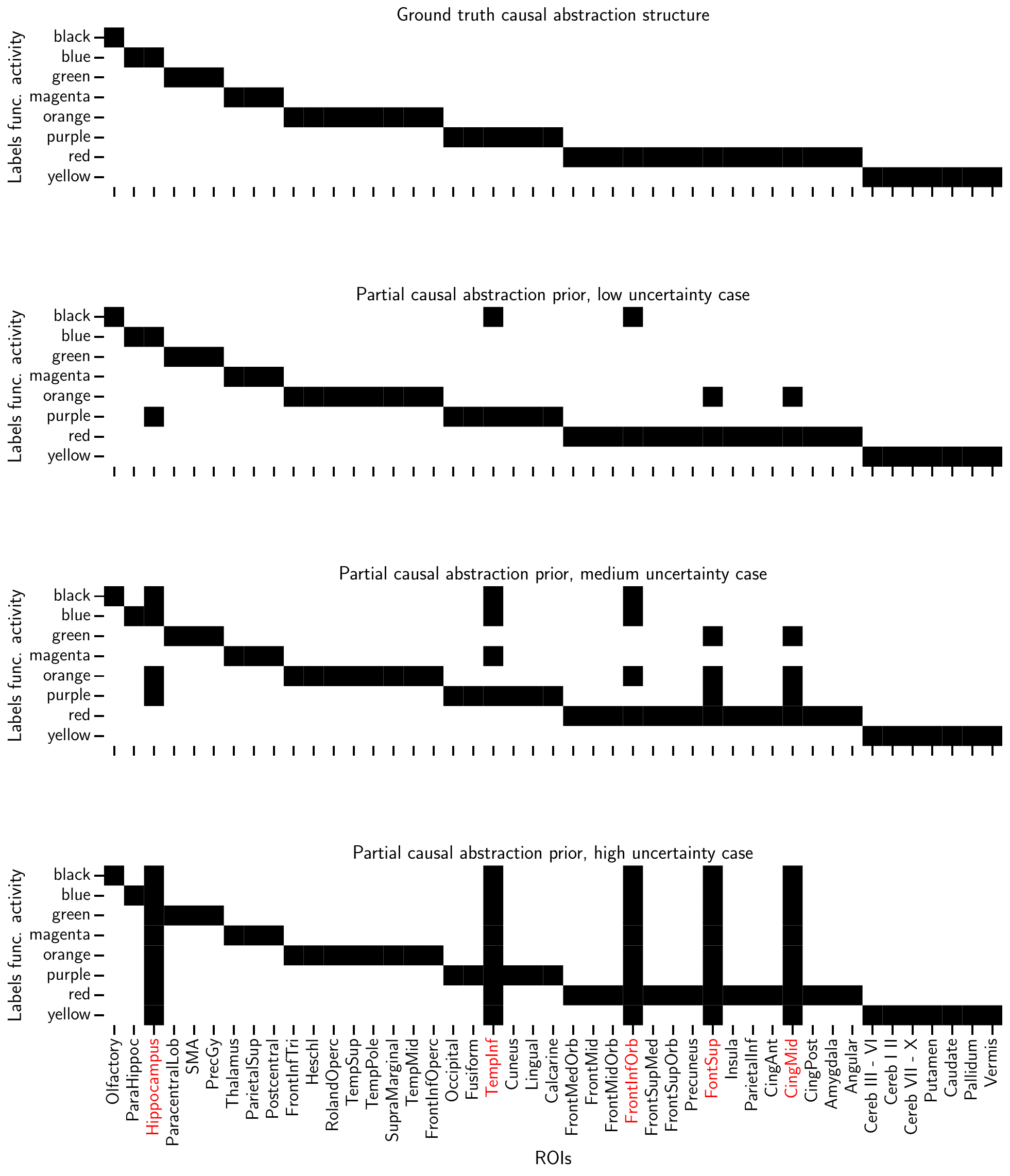}
    \caption{Starting from the top, the figure shows \emph{(i)} the ground truth structure for linear CA, and the partial prior provided as input to CLinSEPALlearned linear CA for the simulated partial prior in the \emph{(ii)} low, \emph{(iii)} medium, and \emph{(iv)} high uncertainty settings discussed in \Cref{sec:empirical_assessment_rw}.}
    \label{fig:ROIsFun_pp}
\end{figure}

\begin{figure}
    \centering
    \includegraphics[width=1.\textwidth]{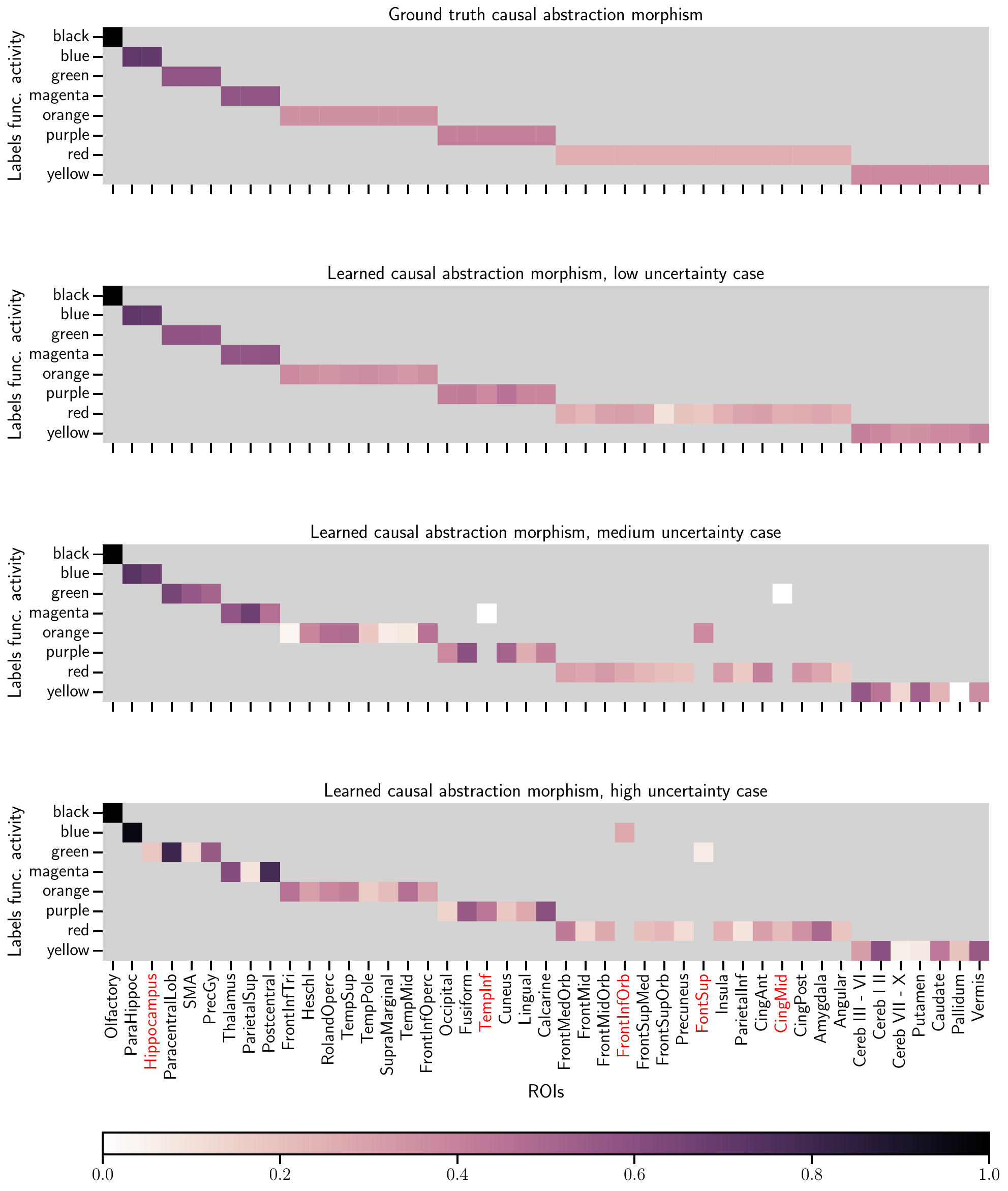}
    \caption{Starting from the top, the figure shows \emph{(i)} the ground truth linear CA, and the learned linear CA for the simulated partial prior setting with \emph{(ii)} low, \emph{(iii)} medium, and \emph{(iv)} high uncertainty in \Cref{sec:empirical_assessment_rw}.}
    \label{fig:ROIsFun_ca}
\end{figure}

\begin{figure}
    \centering
    \includegraphics[width=1.\textwidth]{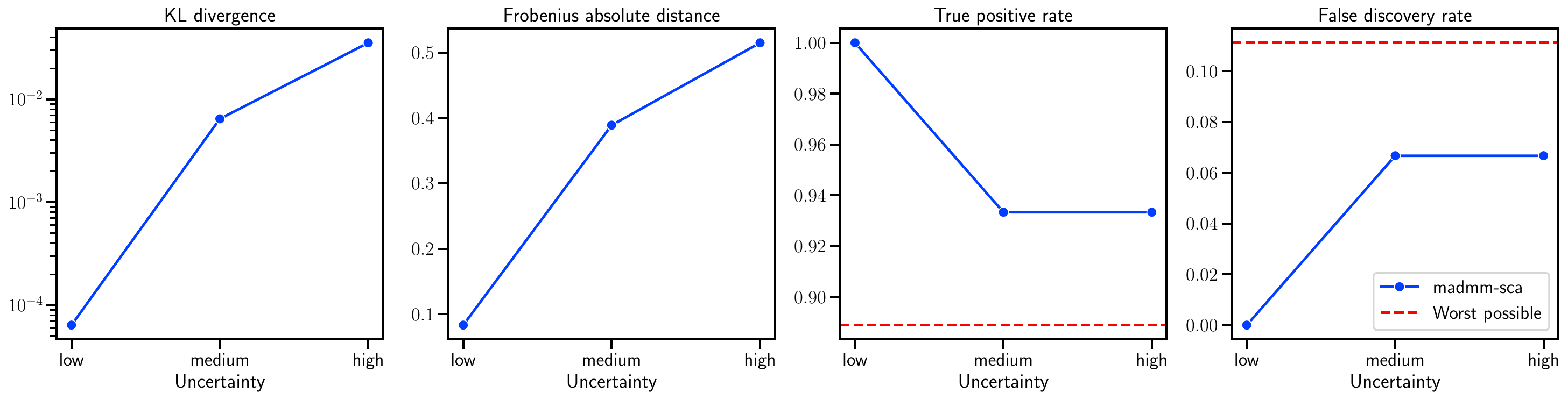}
    \caption{Starting from the left, the figure provides the \emph{(i)} the KL divergence evaluated at the learned \Vhat, \emph{(ii)} the Frobenious absolute distance, \emph{(iii)} the true positive rate, and \emph{(iv)} the false discovery rate for the simulated partial prior setting with low, medium, and high uncertainty in \Cref{sec:empirical_assessment_rw}.}
    \label{fig:ROIsFun_metrics}
\end{figure}

%%%%%%%%%%%%%%%%%%%%%%%%%%%%%%%%%%%%%%%%%%%%%%%%%%%%%%%%%%%%%%%%%%%%%%%%%%%%%%%
%%%%%%%%%%%%%%%%%%%%%%%%%%%%%%%%%%%%%%%%%%%%%%%%%%%%%%%%%%%%%%%%%%%%%%%%%%%%%%%

\end{document}